\documentclass[10pt]{article}

\usepackage[utf8]{inputenc}
\usepackage[T1]{fontenc}

\usepackage{epsf}
\usepackage{amsmath}

\allowdisplaybreaks

\usepackage[showframe=false]{geometry}
\usepackage{changepage}

\usepackage{epsfig}
\usepackage{amssymb}

\usepackage{amsthm}
\usepackage{setspace}
\usepackage{cite}
\usepackage{mcite}

\usepackage{algorithmic}  
\usepackage{algorithm}

\usepackage{shadow}
\usepackage{fancybox}
\usepackage{fancyhdr}

\usepackage{color}
\usepackage[usenames,dvipsnames,svgnames,table]{xcolor}
\newcommand{\bl}[1]{\textcolor{blue}{#1}}
\newcommand{\red}[1]{\textcolor{red}{#1}}

\definecolor{mypurple}{rgb}{.4,.0,.5}

\usepackage[hyphens]{url}

\usepackage[colorlinks=true,
            linkcolor=black,
            urlcolor=blue,
            citecolor=purple]{hyperref}

\usepackage{breakurl}

\def\y{{\bf y}}
\def\v{{\bf v}}
\def\x{{\bf x}}

\def\x{{\mathbf x}}

\def\u{{\bf u}}
\def\v{{\bf v}}
\def\x{{\bf x}}
\def\y{{\bf y}}

\def\q{{\bf q}}

\def\c{{\bf c}}
\def\d{{\bf d}}

\def\h{{\bf h}}

\def\cC{{\mathcal C}}
\def\cB{{\mathcal B}}
\def\cA{{\mathcal A}}
\def\cH{{\mathcal H}}

\def\be{\begin{equation}}
\def\ee{\end{equation}}
\def\ba{\left[\begin{array}}
\def\ea{\end{array}\right]}

\def\u{{\bf u}}
\def\v{{\bf v}}
\def\x{{\bf x}}
\def\y{{\bf y}}

\def\q{{\bf q}}

\def\c{{\bf c}}
\def\d{{\bf d}}

\def\p{{\bf p}}

\def\1{{\bf 1}}

\def\0{{\bf 0}}

\def\erf{\mbox{erf}}
\def\erfc{\mbox{erfc}}

\def\calX{{\cal X}}
\def\calY{{\cal Y}}







\def\mR{{\mathbb R}}
\def\mN{{\mathbb N}}
\def\mE{{\mathbb E}}
\def\mS{{\mathbb S}}
\def\mP{{\mathbb P}}

\def\lp{\left (}
\def\rp{\right )}

\sloppy



\def\y{{\bf y}}
\def\v{{\bf v}}
\def\x{{\bf x}}

\def\x{{\mathbf x}}

\def\u{{\bf u}}
\def\v{{\bf v}}
\def\x{{\bf x}}
\def\y{{\bf y}}

\def\q{{\bf q}}

\def\c{{\bf c}}
\def\d{{\bf d}}

\def\h{{\bf h}}

\def\cB{{\cal B}}
\def\cH{{\cal H}}

\def\be{\begin{equation}}
\def\ee{\end{equation}}
\def\ba{\left[\begin{array}}
\def\ea{\end{array}\right]}

\def\u{{\bf u}}
\def\v{{\bf v}}
\def\x{{\bf x}}
\def\y{{\bf y}}

\def\q{{\bf q}}

\def\c{{\bf c}}
\def\d{{\bf d}}

\def\p{{\bf p}}

\def\({\left (}
\def\){\right )}

\def\1{{\bf 1}}

\def\q{{\bf q}}

\def\0{{\bf 0}}

\def\cX{{\mathcal X}}
\def\cY{{\mathcal Y}}

\usepackage{xcolor}
\usepackage{color}

\definecolor{darkgreen}{rgb}{0, 0.4,0}

\definecolor{purplebrown}{rgb}{0.5,0.1,0.6}

\definecolor{ultclupcol}{rgb}{0.1,0.5,0.5}

\definecolor{mytrycolor}{rgb}{0.5,0.7,0.2}


\definecolor{ultclupcola}{rgb}{.5,0,.5}

\definecolor{shadebrown}{rgb}{0.1,0.1,0.9}
\definecolor{lightblue}{rgb}{0.2,0,1}


\usepackage{fancybox}
\usepackage{graphicx}
\usepackage{epstopdf}
\usepackage{epsfig}
\usepackage{wrapfig}
\usepackage{subfigure}

\usepackage{xcolor}
\usepackage{tcolorbox}
\tcbuselibrary{skins}

%
%


\newtcbox{\xmybox}{on line,
arc=7pt,
before upper={\rule[-3pt]{0pt}{10pt}},boxrule=0pt,
boxsep=0pt,left=6pt,right=6pt,top=0pt,bottom=0pt,enhanced, coltext=blue, colback=white!10!yellow}

\newtcbox{\xmyboxa}{on line,
arc=7pt,
before upper={\rule[-3pt]{0pt}{10pt}},boxrule=0pt,
boxsep=0pt,left=6pt,right=6pt,top=0pt,bottom=0pt,enhanced, colback=white!10!yellow}

\newtcbox{\xmyboxb}{on line,
arc=7pt,
before upper={\rule[-3pt]{0pt}{10pt}},boxrule=1pt,colframe=darkgreen!100!blue,
boxsep=0pt,left=6pt,right=6pt,top=0pt,bottom=0pt,enhanced, colback=white!10!yellow}

\newtcbox{\xmyboxc}{on line,
arc=7pt,
before upper={\rule[-3pt]{0pt}{10pt}},boxrule=.7pt,colframe=blue!100!blue,
boxsep=0pt,left=6pt,right=6pt,top=0pt,bottom=0pt,enhanced, coltext=blue, colback=white!10!yellow}

\newtcbox{\xmytboxa}{on line,
arc=7pt,
before upper={\rule[-3pt]{0pt}{10pt}},boxrule=.0pt,colframe=pink!50!yellow,
boxsep=0pt,left=6pt,right=6pt,top=0pt,bottom=0pt,enhanced, coltext=white, colback=blue!40!red}

\newtcbox{\xmytboxb}{on line,
arc=7pt,
before upper={\rule[-3pt]{0pt}{10pt}},boxrule=.0pt,colframe=pink!50!yellow,
boxsep=0pt,left=6pt,right=6pt,top=0pt,bottom=0pt,enhanced, coltext=white, colback=white!40!green}


%
%
%
%

\setcounter{secnumdepth}{5}
\setcounter{tocdepth}{5}

\makeatletter
\newcommand\subsubsubsection{\@startsection{paragraph}{4}{\z@}{-2.5ex\@plus -1ex \@minus -.25ex}{1.25ex \@plus .25ex}{\normalfont\normalsize\bfseries}}
\newcommand\subsubsubsubsection{\@startsection{subparagraph}{5}{\z@}{-2.5ex\@plus -1ex \@minus -.25ex}{1.25ex \@plus .25ex}{\normalfont\normalsize\bfseries}}
\makeatother


\newtheorem{theorem}{Theorem}

\newtheorem{corollary}{Corollary}

\setlength{\oddsidemargin}{0in} \setlength{\evensidemargin}{0in}
\setlength{\textwidth}{6.5in} 
\setlength{\textheight}{9in} 
\setlength{\topmargin}{-0.25in}

\begin{document}

\begin{singlespace}

\title {Theoretical limits of descending $\ell_0$ sparse-regression ML algorithms  
}
\author{
\textsc{Mihailo Stojnic
\footnote{e-mail: {\tt flatoyer@gmail.com}} }}
\date{}
\maketitle

\centerline{{\bf Abstract}} \vspace*{0.1in}

We study the theoretical limits of the $\ell_0$ (quasi) norm based optimization algorithms when employed for solving classical compressed sensing or sparse regression problems. Considering standard contexts with deterministic signals and statistical systems, we utilize \emph{Fully lifted random duality theory} (Fl RDT) and develop a generic analytical program for studying performance of the \emph{maximum-likelihood} (ML) decoding.  The key ML performance parameter, the residual \emph{root mean square error} ($\textbf{RMSE}$), is uncovered to exhibit the so-called \emph{phase-transition} (PT) phenomenon. The associated aPT curve, which separates the regions of systems dimensions where \emph{an} $\ell_0$ based algorithm succeeds or fails in achieving small (comparable to the noise) ML optimal $\textbf{RMSE}$ is  precisely determined as well. In parallel, we uncover the existence of another dPT curve which does the same separation but for practically feasible \emph{descending} $\ell_0$ ($d\ell_0$) algorithms. Concrete implementation and practical relevance of the Fl RDT typically rely on the ability to conduct a sizeable set of the underlying numerical evaluations which reveal that for the ML decoding the Fl RDT converges astonishingly fast with corrections in the estimated quantities not exceeding $\sim 0.1\%$ already on the third level of lifting. Analytical results are supplemented by a sizeable set of numerical experiments where we implement a simple variant of $d\ell_0$ and demonstrate that its practical performance very accurately matches the theoretical predictions. Completely surprisingly, a remarkably precise agreement between the simulations and the theory is observed for fairly small dimensions of the order of 100.

\vspace*{0.25in} \noindent {\bf Index Terms: Compressed sensing; Sparse regression; $\ell_0$ norm; ML decoding; Random duality}.

\end{singlespace}

\section{Introduction}
\label{sec:back}

Studying structured objects has been a topic of strong applied and theoretical research interest for several decades. It is, however, the appearance of seminal works  \cite{Donoho06CS,CRT,CanTao07,CT} by Donoho, Candes, Romberg, and Tao that played a key role in transforming these studies and bringing them under the umbrella of immensely popular field called compressed sensing (CS). A direct relation between CS and statistical sparse regression (SR) notion widened the horizon of applicability contributing to a further rapid evolution with a strong influence on and interconnection with other scientific fields. Today compressed sensing is undoubtedly among the most recognizable scientific brands with applications spanning signal and image processing, vision and machine learning, health and social sciences, and beyond. Anticipated surge in the demand for efficient data processing is expected to only strengthen and expand the compressed sensing role as we transition into the big data era in the coming years.

Given its elite scientific positioning, it is a no surprise that CS has been associated with avalanches of top class research over the last two decades. Of particular importance is a multi-faceted CS relevance ranging from the real-world applications and engineering implementations, to practical algorithmic designs, accompanying theoretical analyses, and more. This paper focuses on theoretical aspects and studies both state of the art CS limits and their computationally achievable counterparts. In particular, we consider noisy systems and the associated \emph{maximum-likelihood} (ML) decoding. Due to the discrete nature of sparsity -- a key structural property accompanying  both CS and SR -- the ML decoding has been known for a long time as principally realizable but practically computationally likely infeasible concept. In a more mathematical terminology, as an NP problem (see, e.g., \cite{Natarajan95}), it has been known to be solvable but not necessarily with polynomial algorithms. Such a generic notion was challenged through the appearance of convex heuristics. The so-called $\ell_1$ minimization could solve the noiseless ML decoding counterpart for a range of system parameters in a diverse set of randomized instances. While its corresponding noisy LASSO analogue could not fully solve the ML, it still worked fairly well, thereby strengthening $\ell_1$'s practical relevance. These developments strongly suggested that the NP notions based limitations, predicated on the worst case algorithmic behavior, might be practically circumventable. In parallel with the appearance of convex techniques, their phase-transitioning (PT) behavior emerged as well. Such a behavior was reflected through a sharp transition from the system parameters regions where the heuristics succeed to those where they fail. Demarcating these regions and the associated separating line -- the so-called PT curve -- effectively portrayed a computational complexity picture with the following two key takeaways:

\begin{itemize}
  \item The NP-ness is \emph{partially beatable} (there are system parameters regions where problems are generally
NP but fast heuristics typically succeed in solving them).
  \item A residual \emph{computational gap} (C-gap) that prevents completely practically overcoming NP-ness still remains (there are system parameters  regions where the fast heuristics fail).
\end{itemize}

Such a pictorial understanding, although widely accepted, remained anecdotal for years, until rigorous mathematical confirmations began to emerge. Three lines of work played a key role: 1) Donoho's integral geometry approach \cite{DonohoPol,DonohoUnsigned}; 2) Donoho, Maleki, Montanari, and Bayati message passing approach \cite{DonMalMon09,BayMon10}; and 3) Stojnic's probabilistic optimization approach \cite{StojnicCSetam09,StojnicICASSP10var,StojnicRegRndDlt10}. They addressed a CS central mathematical challenge and pinpointed the precise location of $\ell_1$'s PT. On the other hand, their extensions \cite{DonohoMM11,BayMon10lasso,StojnicGenLasso10,StojnicGenSocp10} have done the same for the noisy LASSO counterparts. Many further considerations followed addressing similar challenges in different relevant CS scenarios with various other forms of structuring (see, e.g. \cite{DonohoSigned,DonohoMM11,BayMon10lasso,StojnicICASSP10block,StojnicISIT2010binary,OH10}). The main conclusion was precisely as stated earlier -- the convex relaxations can circumvent the NP-ness but only to a degree. Moreover, while this answered questions regarding remaining convex heuristics C-gaps, it in now way could rule out potential ability of  different heuristics to do better.

Ensuing works over the better part of the last two decades focused on further bridging the C-gaps. A lot of progress has been made mostly within the so-called \emph{Bayesian} context where the prior knowledge of the signal is available and can be utilized in the recovery process (see, e.g., \cite{ReevesP19,ReevesG12,ReevesG13,ReevPfis2016,BarbierKMMZ18,KMSSZ12,KMSSZ12a,WuV10a,WuV11,WuV12a,WuV12b}). A similar notion of the C-gap (this time relating to presumably polynomial algorithms) is formulated and precisely characterized. Associated practical implementations usually rely on the approximate message passing (AMP) (as in \cite{DonMalMon09,BayMon10,DonohoMM11,BayMon10lasso}) or similar belief/survey propagation related techniques. Despite a great progress, the key challenges still remain: \emph{\textbf{(1)}} determining  the C-gap in the ML context; and \emph{\textbf{(2)}} bridging the C-gap. The first one implicitly assumes establishing the ML's ultimate theoretical limits, whereas the second one focuses on the algorithmic developments to approach such limits. Our study takes significant steps forward in both these directions. We introduce a novel approach to study underlying phenomena and precisely characterize the ultimate and computationally achievable ML decoding phase transitions. Moreover, it allows us to implement a simple algorithmic procedure which matches the theoretical predictions with a remarkable accuracy even in systems with dimensions of the order of 100.

\section{Compressed sensing -- sparse regression setup}
 \label{sec:mathsetup}

We assume the following standard noisy compressed sensing linear setup
\begin{eqnarray}
\y=A\bar{\x}+\sigma\v, \label{eq:ex1}
\end{eqnarray}
where $\bar{\x}\in\mR^n$ is a unit norm $k$-sparse vector, $A\in\mR^{m\times n}$ is a system matrix, $\v\in\mR^m$ is a noise vector, and $\sigma\in\mR$ is a scaling factor (due to overall scaling invariance and presence of $\sigma$, unit norm can be imposed on $\bar{\x}$ without loss of generality). Throughout the presentation, we consider high-dimensional scenarios and in particular their analytically hardest, so-called \emph{linear} (sometimes also referred to as \emph{proportional}), regimes with
 \begin{eqnarray}
\alpha  \triangleq    \lim_{n\rightarrow \infty} \frac{m}{n},  \qquad \beta  \triangleq    \lim_{n\rightarrow \infty} \frac{k}{n}, \label{eq:ex15}
\end{eqnarray}
and $\alpha\in[0,1]$ and $\beta\in[0,1]$ being the \emph{under-sampling} and \emph{sparsity} ratios, respectively.

Linear system (\ref{eq:ex1}) has a centuries long history and appears as an unavoidable fundamental modeling building block in a host of disciplines, including among others transmission, compression, estimation/prediction, classification, and inference systems. Accordingly, the terminology associated with its key pieces varies with systems matrix, for example, often being referred to as the channel, coding/decoding, compression, dictionary or data  processing matrix. As the results that we present below are generic and applicable to any field where the models can be utilized, we will interchangeably use all of the known terminologies for any  of the model's components. Along the same lines, we will often interchangeably refer to (\ref{eq:ex1}) as the basic compressed sensing (CS) or sparse regression (SR) model.

We distinguish two parts of the model: \textbf{\emph{(i)}} the first one that relates to the infrastructure and consists of $A$ and $\v$  and the way they interact/operate together (in case of (\ref{eq:ex1}) it is a linear affine interaction); and \textbf{\emph{(ii)}} the second one that relates to input/outut data and is represented by $\bar{\x}$ and $\y$. As mentioned above, depending on the context, the first one may sometimes be called the transmission, compression, coding or acquisition system. Analogously, the second one may be referred to as unprocessed/processed or sent/received  signal, codeword, or data set.
In almost all contexts where (\ref{eq:ex1}) is applicable, the key aspect of its utilization usually boils down to the following principle: a (presumably) structured vector $\bar{\x}$ is processed through (\ref{eq:ex1}) and as a result its an image $\y$ is obtained. Relying on the knowledge of $A$ and a lack of knowledge of $\v$ the goal is to recover $\bar\x$. In other words, the goal is to determine
\begin{eqnarray}
\hat{\x}=f_{rec}(\y,A), \label{eq:ex2}
\end{eqnarray}
where $f_{rec}(\cdot)$ is a function of choice that typically depends on the practical scenario where the model originates from.

To facilitate the exposition, throughout the paper we assume a statistical scenario where the components of $\v$ are iid standard normal. In such a scenario one has for the pdf of $\v$
\begin{eqnarray}
p(\v)\sim e^{-\frac{\|\v\|_2^2}{2}}, \label{eq:ex3}
\end{eqnarray}
and consequently for a given fixed $A$
\begin{eqnarray}
p(\y|\x,A)\sim e^{-\frac{\|\y-A\x\|_2^2}{2\sigma^2}}. \label{eq:ex4}
\end{eqnarray}
The so-called \emph{maximum-likelihood} (ML) criterion then suggests the following way for estimating $\hat{\x}$ in (\ref{eq:ex2})
\begin{eqnarray}
\hspace{-1.4in}\bl{\text{\textbf{\emph{ML decoding:}}}} \qquad \qquad\hat{\x}=\mbox{arg} \max_{\x} p(\y|\x) = \mbox{arg}\min_{\x} \|\y-A\x\|_2, \label{eq:ex5}
\end{eqnarray}
which ultimately implies
\begin{eqnarray}
f_{rec}(\y,A)= \mbox{arg}\min_{\x} \|\y-A\x\|_2, \label{eq:ex6}
\end{eqnarray}
as a formal definition of the above mentioned ML recovery function. We here consider a structured recovery with $\bar{\x}$ being $k$-sparse (under $k$-sparse we assume a vector that has no more than $k$ nonzero components). Moreover, to amplify the logical fairness (particularly of the theoretically limiting part) of the recovery process, we assume a prior awareness of $\bar{\x}$'s sparse structuring. Availability of such a knowledge can then be used  to (potentially) strengthen (\ref{eq:ex5}) through  the following
\begin{eqnarray}
\hspace{-1.4in}\bl{\ell_0  \quad  \text{\textbf{\emph{ML decoding:}}}} \qquad \qquad \hat{\x}= \mbox{arg}\min_{\x,\|\x\|_0=k} \|\y-A\x\|_2, \label{eq:ex7}
\end{eqnarray}
where, as usual, $\|\cdot\|_0$ is referred to as the (quasi) $\ell_0$ norm and it counts the number of its argument's nonzero elements (to make writing shorter, throughout the presentation we may call (\ref{eq:ex7}) simply just ML decoding without emphasizing its associated $\ell_0$ property). Of our prevalent interest will be -- what is typically viewed as both analytically and algorithmically the hardest -- scenario where $\bar{\x}$ is a fixed \emph{deterministic} a priori unknown $k$-sparse vector. We emphasize that this is very different from the scenarios where $\bar{\x}$ is of statistical nature or from the scenarios where either statistical or deterministic nature of $\bar{\x}$ is known and available in the recovery process. Two key differences should be noted: \textbf{\emph{(i)}} the considered scenario is generally viewed as more realistic since it allows for many practical situations where gathering prior information about $\bar{\x}$ is infeasible; and \textbf{\emph{(ii)}} it is expected to be analytically much harder as one has no reason to, a priori, believe that simplifying analytical features akin to replica symmetric or one step of replica symmetry breaking behavior will appear.

\subsection{Differences/similarities with existing recovery methods}
 \label{sec:simdiff}

The $\ell_0$ ML decoding strategy from (\ref{eq:ex7}) is the best type of recovery process that one can design within the ML context. In the noiseless case ($\sigma\rightarrow 0$), it basically corresponds to the so-called $\ell_0$ recovery which always exactly recovers $\bar{\x}$ (throughput the paper we assume $\bar{\x}$'s uniqueness, i.e., we assume that there is no $k$-sparse $\x$ such that $A(\bar{\x}-\x)=0$ and $\bar{\x}\neq\x$). The famous convex relaxation methods are conceptually similar to (\ref{eq:ex7}), with the constraints typically imposed on $\ell_1$ instead of $\ell_0$ norm. As $\ell_1$ is the tightest (convex) norm relaxation of $\ell_0$ it is predicated to be the best that convex methods can do. In the noiseless case one of the most prominent features of the $\ell_1$ relaxation is the appearance of the  phase transition (PT) phenomenon. The associated PT curve delimits the system's parameters (sparsity and under-sampling) regions depending on whether the so-called $\ell_0-\ell_1$ equivalence happens or not. In the noisy case, $\ell_1$ relaxation based methods (as well as any other ones) can not recover $\bar{\x}$ which implies the appearance of the nonzero residual errors. These errors themselves exhibit phase transitions where they typically change from a value proportional to the noise (i.e. $\sigma$) to a much larger (typically an order of magnitude larger) value. 
Both the phase transitions and the residual errors of the convex methods typically lag behind the theoretically best possible ones by rather healthy margins (see, e.g., \cite{DonohoPol,StojnicCSetam09,StojnicICASSP10var,StojnicUpper10} for the noiseless PTs and e.g. \cite{DonohoMM11,BayMon10,StojnicGenLasso10,StojnicGenSocp10,WuV11,WuV12a,WuV12b} for the noisy errors and their associated PTs). Existence of these gaps coupled with computationally not so favorable large scale convex relaxations implementations underlined some of the most critical CS/SR theoretical and algorithmic shortcomings.

Consequently, as mentioned earlier, a strong wave of research  aimed at remedying these shortcomings ensued over the last a couple of  decades. Several lines of work relying on utilization of non-convex approximate message passing (AMP) or general belief/survey propagation algorithmic concepts turned out to be very fruitful (see, e.g, \cite{DonMalMon09,DonohoMM11,DonohoJM13a,DonohoJM13,KMSSZ12,KMSSZ12a,BarbierKMMZ18,RanganFG12,RanganFG10,RanganFG09,SchniterRF16,RanganSF17,BarbierLSKZ23,BarbierLSKZ24} and references therein). However, almost all of them rely on a statistical nature of $\bar{\x}$ which is not present in our setup. Statistical \emph{identicalness} across all $n$ components of $\bar{\x}$ is typically featured as well (see, e.g., \cite{DonohoJM13a,DonohoJM13}) and, when $\bar{\x}$'s prior is available, often coupled with the utilization of the \emph{Bayesian} inference or \emph{maximum a posteriori} (MAP) concepts (see, e.g. \cite{KMSSZ12,KMSSZ12a,BarbierKMMZ18,ReevesG12,ReevesG13,ReevPfis2016,WuV12b,WuV12a,RanganFG12,RanganFG10,RanganFG09,BereyhiMS17,BereyhiMS19} for CS related considerations as well as, e..g., \cite{GuoV05,Tanaka02,GuoSV05,MerhavGS10,GuoWV11,WuV11,WuV10a} for earlier CDMA and mutual information-MMSE related ones) allowing to reach a more manageable objective. In particular, in the context of Bayesian inference one focuses on sampling the \emph{posterior} distribution
\begin{eqnarray}
p(\x|\y)\sim p(\x)p(\y|\x), \label{eq:exbayes1}
\end{eqnarray}
which for a given fixed $A$ becomes
\begin{eqnarray}
p(\x|\y)\sim p(\x)p(\y|\x) \sim p(\x)e^{-\frac{\|\y-A\x\|_2^2}{2\sigma^2}}. \label{eq:exbayes2}
\end{eqnarray}
Clearly, coupling with $p(\x)$ on the right hand side makes it substantially different from what is given in (\ref{eq:ex7}). Moreover, in addition to knowing $p(\x)$,  one needs a few extra things to fall in place to successfully utilize (\ref{eq:exbayes2}). In particular, the system's statistics need to be correctly postulated and on top of that $\sigma$ must be known as well (the normalization also needs to be accounted for as it is now nontrivial). Once these assumptions are in place, the above transformation turns out to be particularly convenient from the analytical point of view. Namely, after framing it into the corresponding statistical mechanics context one ends up facing the so-called high (or moderately high) temperature regime where
the replica symmetric or the first level of replica symmetry breaking behavior usually happens making the underlying analyses manageable. While the residual analytical considerations are still very challenging (see, e.g. \cite{KMSSZ12,KMSSZ12a,BarbierKMMZ18,ReevPfis2016,WuV12b,WuV12a}), they are substantially easier than the ones faced in the so-called jointly optimal ML decoding. It should also be noted that MAP estimations can be positioned so that they correspond to a regularized version of (\ref{eq:ex7}) (see, e.g., \cite{BereyhiMS17,BereyhiMS19}). However, the luxury of a priori believing in the replica symmetric or one step of replica symmetry breaking behavior quickly fades away. Some methods also allow for an additional structuring of $A$ (see, e.g., \cite{DonohoJM13a,KMSSZ12,KMSSZ12a,FHicassp,Tarokh,MaVe05,DTbern,DonTan09Univ}). None of the so to say helping concepts (statistically identical $\bar{\x}_i$s, Bayesian inference, additionally structured $A$ etc.) are possible within the context that we consider. At the same time, it is important to note that pretty much all of the mentioned lines of work provide various improvements over the classical convex relaxations -- noiseless $\ell_1$ or the corresponding noisy LASSO/SOCP analogues. These are typically manifested through a substantially lowered computational complexity, more favorable phase transitions (allowing for larger sparsity/under-sampling ratios), or significantly better noise robustness guarantees. Nonetheless, beating convexity concepts ($\ell_1$ or LASSO) in a typical plain scenario (unchangeable $A$, deterministic $\bar{\x}$, and/or not so large dimensions) remains a formidable challenge. With the appearance of Controlled loosening-up (CLuP) algorithms it became clear that convexity methods (in particular LASSO/SOCP and their associated derivatives) are universally beatable in the regimes well below phase transitions (see, e.g., \cite{Stojnicclupspreg20}). Interestingly, results that we present below, suggest that the phase transitions themselves can be beat as well.

\subsection{Contextualization within the key prior work milestones}
\label{sec:prior}

The above results are among the most relevant ones directly related to our work. We find it useful to recall on some general CS/SR milestones as well. These will help properly understand the position and importance of the preceding discussion in the overall CS/SR mosaic. We separate four streams of closely related prior work and focus on their most representative results. As a substantial majority of these results is obtained in statistical contexts, we often skip reemphasizing it and instead adopt the terminology convention of implicitly assuming that they hold with probability going to 1 as $n\rightarrow\infty$ (the very same convention is often utilized in the presentation of our own results later on as well).

\emph{\textbf{(i)} \underline{Qualitative (non phase-transitional) characterizations:}} Early compressed sensing developments put a strong emphasis on efficient implementations and fast recovery algorithms. The so-called orthogonal matching pursuit (OMP) turned out to be a particulary successful one. As shown in, e.g., \cite{JATGomp,JAT,NeVe07}, assuming $\sigma=0$,
OMP (or a slightly modified OMP) with a running complexity $O(n^2)$ could recover $k$-sparse $\x$ in (\ref{eq:ex1})
 provided that $m=O(k\log(n))$ (a stage-wise
OMP from \cite{DTDSomp} could even drop the complexity to $O(n \log n)$). This analytically reaffirmed strong ``\emph{sublinear barrier}'' as a major theoretical CS challenge. At the same time, there was a strong numerical evidence dating back to 70s and 80s of the last century that convex relaxation based heuristics (slightly slower than OMP and typically synonymized with $\ell_1$) could do better when it comes to the recoverable sparsity. In fact, as the interest in these techniques picked up in the 90s, it became evident that they have a strong potential to break the sublinear barrier. It wasn't until 2004, however, when the breakthrough papers of Donoho, Candes, Romberg, and Tao \cite{Donoho06CS,CRT,CanTao07,CT} unveiled a collection of mathematical results that rigorously confirmed the existence of the $\ell_0-\ell_1$ equivalence phenomenon (a property that $\x$ with minimal $\ell_1$ norm is the sparsest solution of noiseless (\ref{eq:ex6})). Moreover, they confirmed that for any $\alpha>0$ there is a $\beta>0$ such that $\ell_0-\ell_1$ equivalence happens, thereby also confirming
the $\ell_1$'s  \emph{linear} sparsity recoverability  property -- a key sublinear barrier breaking analytical result and undoubtedly one of the monumental milestones in the development of compressed sensing. Later on, the progress continued and even OMP like improvements CoSAMP (see e.g. \cite{NT08}) and Subspace pursuit (see e.g. \cite{DaiMil08}) were developed with the ability to attain linear recoverability in polynomial time. Many other excellent developments followed as well addressing some of key CS challenges and further raising awareness about CS and contributing to its a gigantic overall importance and popularity (see, e.g., \cite{KWT09,KabashimaWT10,Wainwright09a,Wainwright09,SPH,RFPrank,CR09matcomp,CT10matcomp,KMO10matcomp,Klopp14matcomp,KLT11matcomp,NW11matcomp,NW12matcomp,RT11matcomp} and references therein for thorough discussions regarding a host of theoretical and algorithmic aspects ranging from signifying the importance of various performance metrics to handling different forms of recovery robustness and alternative types of algorithms and signal structuring).

\emph{\textbf{(ii)} \underline{Quantitative phase-transitional characterizations -- noiseless $\ell_1$:}}  While the discovery  of $\ell_1$'s \emph{linear} sparsity recoverability represented one of the key compressed sensing milestones, it was a \emph{qualitative} analytical characterization. It affirmed that $\ell_1$ is a good heuristic, but did not precisely specify how good it actually is. This soon motivated a  search for analytically more precise (\emph{quantitative}) $\ell_1$ performance characterizations. Utilizing high-dimensional geometry tools and studying associated cross-polytopal neighborliness, Donoho in \cite{DonohoPol,DonohoUnsigned} \emph{precisely} identified values of $\beta$ for any $\alpha>0$ (together with Tanner,  in \cite{DonohoSigned,DT}, they extended characterization from the cross-polytopal to simplicial neighborliness  and consequently widened the range of applicability to the so-called nonnegative signals). Their research analytically revealed the phase transition (PT) phenomenon in $\ell_1$
which implied the existence of a distinct boundary (PT curve) between the regions of system parameters where the heuristic, specifically the $\ell_1$ (or the $\ell_1^+$ for the nonnegative signals), is successful or not. Stojnic's later Random Duality Theory (RDT) \cite{StojnicCSetam09,StojnicICASSP10var,StojnicRegRndDlt10} based contributions broadened the scope, addressing various optimization problems and random structures. These results established that the $\ell_1$ and $\ell_1^+$ phase transitions could be represented as a single line functional equations. Very recently, \cite{CapSto24,CapStoptrelISIT23} uncovered that  $\ell_1$ and $\ell_1^+$ PTs are further connected through remarkably simple explicit relations. Another important relevant line of work is a hybrid PT geometrical approach utilized in \cite{ALMT14}. It uses the same geometrical results as Donoho but in a slightly different formulation that allows for more convenient analytical considerations via Gaussian widths. The widths are eventually handled following Stojnic's RDT approach developed in \cite{StojnicCSetam09,StojnicISIT2010binary,StojnicICASSP10block,StojnicICASSP10var}. Finally, one should also add that in a series of papers that provided ultimate probabilistic settling of $\ell_1$ compressed sensing (see, e.g., \cite{Stojnicl1RegPosfinn} and references therein), Stojnic developed his own way of handling underlying polytopal angle geometry and was  able to move things way beyond the phase transitions. In particular, he actually \emph{precisely} determined the \emph{entire} transitional distribution for \emph{any} (including all \emph{finite} ones as well) set of problem dimensions. Clearly, a result that can not be further moved or extended in any form or shape and represents the ultimate hope of any analytical characterization approach.

\emph{\textbf{(iii)} \underline{Quantitative phase-transitional characterizations -- noisy $\ell_1$:}} The outlined results unequivocally defined the $\ell_1$ PTs, addressing a major mathematical CS conundrum. Subsequent research expanded on these foundational findings, exploring various facets of PT phenomena (refer to \cite{DonohoMM11,BayMon10lasso,StojnicGenLasso10,StojnicGenSocp10,StojnicICASSP10block,StojnicISIT2010binary,OH10} for examples). Given that the ML decoding is the primary topic of this paper and that it is by the definition associated with the noisy systems, of particular interest are the noisy counterparts to the above noiseless ones. Their development turned out to be a bit more challenging. To improve on not overly impressive $\ell_1$ large scaling capabilities, Donoho, Maleki, and Montanari introduced Approximate Message Passing (AMP) methodology in \cite{DonMalMon09}, presenting an impressively fast alternative to $\ell_1$. Using a state evolution analysis they predicted a perfect alignment between $\ell_1$'s and AMP's PTs. This alignment was subsequently rigorously validated by Bayati and Montanari in \cite{BayMon10}. \cite{DonohoMM11} extended the AMP to noisy setups. Relying again on a state evolution analysis, they precisely determined the AMP's worst-case $\textbf{RMSE}$s and uncovered that they also exhibit phase-transition phenomena. Bayati and Montanari in \cite{BayMon10lasso} rigorously confirmed these predictions and established an equivalence between the noisy AMPs and LASSOs. In a separate line of work, Stojnic utilized RDT and extended his noiseless $\ell_1$ PT analyses to the noisy LASSO counterparts  (see, e.g., \cite{StojnicGenLasso10}). Moreover, he established that LASSO's SOCP analogues achieve exactly the same results, thereby establishing a LASSO-SOCP equivalence as well (see, e.g., \cite{StojnicGenLasso10,StojnicGenSocp10}).

\emph{\textbf{(iv)} \underline{Non $\ell_1$ phase-transitional characterizations:}} The above AMP considerations turned out to have equivalent convexity based heuristics counterparts.  As such they could improve on the large scale computational aspects, but the theoretical gaps, characteristic for convexity methods, remained. That motivated studying AMPs (or other non-convex) methods  without making connections to the convexity ones and the avalanche of the above mentioned results \cite{KMSSZ12,KMSSZ12a,BarbierKMMZ18,ReevesG12,ReevesG13,ReevPfis2016,WuV12b,WuV12a,WuV11,DonohoJM13a,WengZMW16,ZhengMWWL17,BoraJPD17,DonohoJMM11} appeared (for more on Controlled loosening-up (CLuP) non-AMP alternatives see, e.g., \cite{Stojnicclupint19,Stojnicclupspreg20}). The conclusions were mixed. For example, Bayesian work (see, e.g.,  \cite{KMSSZ12,KMSSZ12a,BarbierKMMZ18,ReevesG12,ReevesG13,ReevPfis2016,WuV12b,WuV12a,WuV11}) allowed improvements of the associated  AMPs over the plain $\ell_1$s at the expense of the assumed prior signal's knowledge. Existence of C-gaps, akin to those observed for the convexity methods, was discovered as well essentially implying that bridging such gaps might not be possible while solely relying on the signals' priors. On the other hand, additional structuring of $A$, pursued in, e.g.,  \cite{DonohoJM13a,KMSSZ12,KMSSZ12a,FHicassp,Tarokh,MaVe05,DTbern,DonTan09Univ}, turned out to be sufficiently  helpful in that direction. The so-called spatial coupling techniques, first considered via the replica methods in \cite{KMSSZ12,KMSSZ12a} and later rigorously confirmed in \cite{DonohoJM13a}, showed that AMP like algorithms can completely bridge the C-gap if a proper structuring of $A$ is allowed and undertaken. Earlier,   \cite{FHicassp,Tarokh,MaVe05} showed that Reed-Solomon based coding/decoding strategies could be employed for the design of $A$ and decoding of $\bar{\x}$ so that a similar level of success is achieved in the so-called \emph{strong} PT sense. Related specific cyclic-polytope type of structuring of $A$ was further tailored for the corresponding success in the recovery of the nonnegative signals (see, e.g., \cite{DTbern,DonTan09Univ}). It should also be noted that while the above mentioned Byesian methods can improve on the plain $\ell_1$, their associated AMPs can not achieve such an improvement for every type of signals prior. A very successful attempt in that direction was provided in \cite{DonohoJM13a} while connecting to minmax denoising and utilizing various forms of the so-called AMP shrinkage operators. Even though  the improvement was not nominally large, it showed in principle  that the PTs of convex based methods are likely universally beatable.

As discussed in the previous section, despite an excellent progress that all of the above lines of works have made, one needs to keep in mind that none of the assumed concepts (beyond the plain $\ell_1$) that almost all of them utilize are allowed within the context considered in this paper.

\subsection{Our contributions}
\label{sec:contrib}

The above discussed very strong progress in understanding various CS aspects ultimately emphasizes that $\ell_1$ is generically still very difficult to beat by fast algorithms. Moreover, the limits of the theoretically best (but not necessarily fast), $\ell_0$ ML decoding, remained unknown so far. Recognizing these two major CS challenges, our work aims to provide a strong progress on both fronts. Here is a brief summary of our main results obtained in a statistical large dimensional context assuming the so-called \emph{linear/proportional} regime with a \emph{fixed deterministic a priori unknown} $\bar{\x}$, no  allowed structural changes of $A$, and a small noise:

\begin{itemize}
  \item Utilizing Fully lifted random duality theory (Fl RDT), we introduce an innovative method to design a generic program for studying both theoretical and algorithmic aspects of $\ell_0$ ML decoding (see Section \ref{sec:prac}).
 \item  We provide a precise characterization of two key  $\ell_0$ ML decoding  quantities, the objective value and the (noise scaled) residual root mean squared error ($\textbf{RMSE}$) (see Section \ref{sec:nuemricalags} and Theorem \ref{thm:negthmprac1}).
  \item A crucially important \emph{non-monotonic} behavior of the objective, when viewed as a function of the overlap between the true and the estimated signal, is observed (see, Figures \ref{fig:fig1}-\ref{fig:fig6}). The system dimensions relations where the non-monotonicity is further coupled with the appearance of local optima are uncovered, precisely characterized, and directly connected to the existence of multiple phase-transition (PT) phenomena (see, Figures \ref{fig:fig1}-\ref{fig:fig6}). Consequently, the associated PT curves are precisely determined as well (see Figure \ref{fig:fig7}).
  \item In particular, we uncover the location of aPT curve which splits $(\alpha,\beta)$ space into two subregions where the residual $\textbf{RMSE}$ of  the $\ell_0$ ML decoding (from (\ref{eq:ex7})) is small (i.e., proportional to the noise) or large (i.e., an order of magnitude larger than the noise). These errors are achievable by \emph{an} $\ell_0$ algorithm, i.e., by \emph{an} algorithm that solves (\ref{eq:ex7}) (such an algorithm need \emph{not} necessarily be fast) (see Figure \ref{fig:fig7}).
 \item We also uncover the location of dPT curve  which does the same as aPT while additionally assuming that the utilized solver of (\ref{eq:ex7}) is of \emph{descending} type. We say that dPT curve separates the $(\alpha,\beta)$ regions where small (i.e., proportional to noise) $\textbf{RMSE}$ is achievable by \emph{descending} $\ell_0$ (for short, $d\ell_0$) algorithms from those where it is not (see Figure \ref{fig:fig7}). In a range of small under-samplings dPT outperforms $\ell_1$ hinting that convex relaxations based methods might even be universally beatable.
     \item We implement a simple $d\ell_0$ (see Algorithm \ref{alg:algx1}) and observe that its simulated performance is in an excellent agreement with the theoretical predictions (obtained in an infinite dimensional context) even for dimensions of the order of 100 (see Figures \ref{fig:figprac1} and \ref{fig:figprac2}). We are unaware of any algorithmic structure that can achieve such a performance.
 \item For Fl RDT to be practically relevant, one needs to conduct a sizeable set of numerical evaluations. After conducting all of them, we uncover that the lifting mechanism converges astonishingly fast with relative  corrections in the estimated quantities not exceeding $\sim 0.1\%$ already
     on the third level of lifting (see Table \ref{tab:tab3} as well as Figures \ref{fig:fig5} and \ref{fig:fig6}).
\item We uncover a remarkable set of closed form analytical relations that explicitly connect key lifting parameters.  In addition to being extremely helpful and ultimately of key importance in conducting all of the required numerical work, these relations also provide direct insights into a beautiful structuring of the parametric interconnections (see Corollaries \ref{cor:closedformrel1} and \ref{cor:3closedformrel1}).
\end{itemize}

\subsection{Technical preliminaries}
 \label{sec:mathprelim}

It is rather clear from the preceding discussion that our main concern in what follows will be the performance analysis of the $\ell_0$ ML decoder/decompressor (\ref{eq:ex7}). To that end we introduce two quantities that will play a key role in conducting such an analysis and ultimately characterizing the performance of the estimator $\hat{\x}$ from (\ref{eq:ex7}). The first one is the \emph{objective} value
\begin{eqnarray}
\hspace{-0in}\textbf{\emph{\bl{objective:}}} \qquad \qquad \qquad\xi\triangleq \lim_{n\rightarrow\infty}\frac{\min_{\x,\|\x\|_0=k} \|\y-A\x\|_2}{\sqrt{n}\sigma}, \label{eq:ex8}
\end{eqnarray}
whereas the second one is the ($\sigma$ scaled) \emph{root mean square error} of the ML estimator
\begin{eqnarray}
\hspace{-1in}\textbf{\emph{\bl{root mean square error:}}} \qquad \qquad \qquad \textbf{RMSE}  \triangleq \lim_{n\rightarrow\infty}\frac{\|\bar{\x}-\hat{\x}\|_2}{\sigma}. \label{eq:ex9}
\end{eqnarray}
Many other performance characterizing quantities can be defined and studied as well. For example, fractions of correctly/incorrectly detected support and associated notions of distortion are of interest in certain scenarios (see, e.g., \cite{ReevesG12,ReevesG13} and references therein). They are also closely related to the $\textbf{RMSE}$. In some situations (say, \emph{a priori} known binary signals studied in, e.g., \cite{Stojnicclupint19}), the relation is such that they are even directly computable from each other. A key connecting parameter is the overlap between the true and the estimated signal. As that very same parameter is of critical importance for what we discuss below, we term it as \emph{true-estimated signal overlap} and define it in the following way
\begin{eqnarray}
\hspace{-1in}\textbf{\emph{\bl{true-estimated signal overlap:}}} \qquad \qquad \qquad  \hat{c}_1 & \triangleq & \lim_{n\rightarrow\infty} \bar{\x}^T\hat{\x}. \label{eq:ex10}
\end{eqnarray}
Recalling on (\ref{eq:ex1}), (\ref{eq:ex8}) can be rewritten as
\begin{eqnarray}
\xi &  =  &  \lim_{n\rightarrow\infty} \frac{\min_{\x,\|\x\|_0=k} \|\y-A\x\|_2}{\sqrt{n}\sigma}
=
\lim_{n\rightarrow\infty} \frac{\min_{\x,\|\x\|_0=k} \|A\bar{\x}+\sigma\v-A\x\|_2}{\sqrt{n}\sigma} \nonumber \\
& = &
\lim_{n\rightarrow\infty} \frac{\min_{\x,\|\x\|_0=k} \left \|\begin{bmatrix}
                                         A & \v
                                       \end{bmatrix} \begin{bmatrix}
                                                       \bar{\x}-\x \\
                                                       \sigma
                                                     \end{bmatrix}\right \|_2}{\sqrt{n}\sigma}. \label{eq:ex11}
\end{eqnarray}
 After setting
\begin{eqnarray}
 G\triangleq \begin{bmatrix}
                                         A & \v
                                       \end{bmatrix} , \label{eq:ex12}
\end{eqnarray}
and defining set $\cX(\sigma,c_1,r_1)$
\begin{eqnarray}
\cX(\sigma,c_1,r_1) \triangleq \left \{\tilde{\x}  | \quad \x\in\mR^{n+1},\tilde{\x}_{1:n}=\bar{\x}-\x,\|\x\|_2=r_1,\bar{\x}^T\x=c_1,\|\x\|_0=k, \tilde{\x}_{n+1}=\sigma\right \}, \label{eq:ex13}
\end{eqnarray}
we can further rewrite (\ref{eq:ex11}) as
\begin{equation}
\xi
=
\lim_{n\rightarrow\infty}\frac{\min_{\x,\|\x\|_0=k} \left \|\begin{bmatrix}
                                         A & \v
                                       \end{bmatrix} \begin{bmatrix}
                                                       \bar{\x}-\x \\
                                                       \sigma
                                                     \end{bmatrix}\right \|_2}{\sqrt{n}\sigma}
                                                     =
                         \lim_{n\rightarrow\infty}
                                                     \frac{\min_{c_1,r\geq 0}\min_{\tilde{\x}\in\cX(\sigma,c_1,r_1)} \left \|G \tilde{\x}\right \|_2}{\sqrt{n}\sigma}. \label{eq:ex14}
\end{equation}
To make the presentation neater, in what follows we take $A$ and consequently  $G$ as matrices with iid standard normal components (concentrations and Lindeberg central limit theorem ensure that our final results can quickly be adapted to other statistical setups as well; a particularly quick and elegant presentation of the  Lindeberg principle can be found in e.g., \cite{Chatterjee06}).

Denoting by $\mS^m$ the unit $m$-dimensional sphere, we find it useful to introduce
\begin{equation}
\xi_1(c_1,r_1)\triangleq
                         \lim_{n\rightarrow\infty}
                                                     \frac{\min_{\tilde{\x}\in\cX(\sigma,c_1,r_1)} \left \|G \tilde{\x}\right \|_2}{\sqrt{n}\sigma}
                                                     =
                                                                              \lim_{n\rightarrow\infty}
                                                     \frac{\min_{\tilde{\x}\in\cX(\sigma,c_1,r_1)} \min_{\y\in\mS^m}\y^TG \tilde{\x}}{\sqrt{n}\sigma}, \label{eq:ex16}
\end{equation}
which together with concentrations and (\ref{eq:ex14}) gives
\begin{equation}
\xi  =  \min_{c_1,r_1\geq 0} \xi_1(c_1,r_1). \label{eq:ex17}
\end{equation}
and
\begin{equation}
\mE_{A,\v}\xi  =\mE_G\xi  =  \min_{c_1,r_1\geq 0}\mE_G \xi_1(c_1,r_1). \label{eq:ex18}
\end{equation}
Along the same lines, we also define
\begin{equation}
\delta \triangleq   \mE(\textbf{RMSE} ) = \lim_{n\rightarrow\infty}\frac{\mE\|\bar{\x}-\hat{\x}\|_2}{\sigma}. \label{eq:ex18a0}
\end{equation}
 The adopted convention throughout the paper is that the subscripts next to $\mE$ (or $\mP$) denote the randomness with respect to which the statistical evaluation is taken. When the underlying randomness is clear from the context the subscripts are left unspecified.

\subsection{Connection to free energies}
 \label{sec:feeeng}

To ensure the main results presentation's smoothness, we find it useful to establish a connection between the introduced ML concepts and a corresponding statistical mechanics quantity called free energy. Since $\xi_1(c_1,r)$ plays an important role and is a key object for further studying, the representation from (\ref{eq:ex16}) turns out to be quite relevant. As recognized in the analysis of various random feasibility problems
\cite{StojnicGardGen13,Stojnicnegsphflrdt23}, studying formulation from (\ref{eq:ex16}), practically corresponds to studying properly defined \emph{free energies}. The contours of such a correspondence are for the completeness sketched below (related more detailed discussions can be found in, e.g., \cite{StojnicGardGen13,Stojnicnegsphflrdt23} and references therein).

We start with the following \emph{bilinear Hamiltonian}
\begin{equation}
\cH_{sq}(G)= \y^TG\x,\label{eq:ham1}
\end{equation}
and its corresponding (reciprocal form of) partition function
\begin{equation}
Z_{sq}(\beta,G)=\sum_{\x\in\cX} \lp \sum_{\y\in\cY}e^{\beta\cH_{sq}(G)} \rp^{-1},  \label{eq:partfun}
\end{equation}
where, $\cX$ and $\cY$ are for the time being assumed to be general sets. The thermodynamic limit of the associated (reciprocal form) \emph{average} free energy is then given as
\begin{eqnarray}
f_{sq}(\beta) & = & - \lim_{n\rightarrow\infty}\frac{\mE_G\log{(Z_{sq}(\beta,G)})}{\beta \sqrt{n}}
=\lim_{n\rightarrow\infty} \frac{\mE_G\log\lp \sum_{\x\in\cX} \lp \sum_{\y\in\cY}e^{\beta\cH_{sq}(G)} \rp^{-1} \rp}{\beta \sqrt{n}} \nonumber \\
& = &  - \lim_{n\rightarrow\infty} \frac{\mE_G\log\lp \sum_{\x\in\cX} \lp \sum_{\y\in\cY}e^{\beta\y^TG\x)}\rp^{-1} \rp}{\beta \sqrt{n}}.\label{eq:logpartfunsqrt}
\end{eqnarray}
Particularly focusing on  ``zero-temperature'' ($T\rightarrow 0$ or  $\beta=\frac{1}{T}\rightarrow\infty$) regime, one obtains the so-called (average) \emph{ground-state} energy
\begin{eqnarray}
f_{sq}(\infty)    \triangleq    \lim_{\beta\rightarrow\infty}f_{sq}(\beta) & = &
 - \lim_{\beta,n\rightarrow\infty}\frac{\mE_G\log{(Z_{sq}(\beta,G)})}{\beta \sqrt{n}}
 \nonumber \\
& = &
-  \lim_{n\rightarrow\infty}\frac{\mE_G \max_{\x\in\cX} -  \max_{\y\in\cY} \y^TG\x}{\sqrt{n}}
=
 \lim_{n\rightarrow\infty}\frac{\mE_G \min_{\x\in\cX}  \max_{\y\in\cY} \y^TG\x}{\sqrt{n}}. \nonumber \\
  \label{eq:limlogpartfunsqrta0}
\end{eqnarray}
An obvious direct relation between (\ref{eq:limlogpartfunsqrta0}) and (\ref{eq:ex16})  implies that $f_{sq}(\infty)$ is very tightly connected to $\xi_1(c_1,r)$, which, together with (\ref{eq:ex17}), implies that understanding $f_{sq}(\infty)$'s  behavior is of critical importance for characterizing the ML decoder (\ref{eq:ex7}). Before we get to the technicalities of the following sections which shed a bit more light on $f_{sq}(\infty)$, we should point out that  studying $f_{sq}(\infty)$ directly is typically not very easy. To circumvent such an obstacle, we first study $f_{sq}(\beta)$ assuming a general $\beta$  and then specialize the obtained results to $\beta\rightarrow\infty$ ground state regime. To further facilitate presentation, we, on occasion, neglect analytical details that remain without much ground state relevance.

\section{Utilization of bli random processes and sfl RDT}
\label{sec:randlincons}

The observation that the free energy from (\ref{eq:logpartfunsqrt}),
\begin{eqnarray}
f_{sq}(\beta) & = &\lim_{n\rightarrow\infty} \frac{\mE_G\log\lp \sum_{\x\in\cX} \sum_{\y\in\cY}e^{\beta\y^TG\x)}\rp}{\beta \sqrt{n}},\label{eq:hmsfl1}
\end{eqnarray}
 can be viewed as a function of \emph{bilinearly indexed} (bli) random process $\y^TG\x$ is  of key importance for everything that follows. Such an observation enables connecting $f_{sq}(\beta)$ and studies of bli processes from \cite{Stojnicsflgscompyx23,Stojnicnflgscompyx23,Stojnicflrdt23}. To fully establish such a connection, we closely follow the path traced in \cite{Stojnichopflrdt23,Stojnicnegsphflrdt23} and start with a collection of useful technical definitions. For $r\in\mN$, $k\in\{1,2,\dots,r+1\}$, sets $\cX\subseteq \mR^{n+1}$ and $\cY\subseteq \mR^m$, function $f_S(\cdot):\mR^{n+1}\rightarrow R$, real scalars $x$, $y$, and $s$  such that $x>0$, $y>0$, and $s^2=1$,   vectors $\p=[\p_0,\p_1,\dots,\p_{r+1}]$, $\q=[\q_0,\q_1,\dots,\q_{r+1}]$, and $\c=[\c_0,\c_1,\dots,\c_{r+1}]$ such that
 \begin{eqnarray}\label{eq:hmsfl2}
1=\p_0\geq \p_1\geq \p_2\geq \dots \geq \p_r\geq \p_{r+1} & = & 0 \nonumber \\
1=\q_0\geq \q_1\geq \q_2\geq \dots \geq \q_r\geq \q_{r+1} & = &  0,
 \end{eqnarray}
$\c_0=1$, $\c_{r+1}=0$, and ${\mathcal U}_k\triangleq [u^{(4,k)},\u^{(2,k)},\h^{(k)}]$  with  $u^{(4,k)}\in\mR$, $\u^{(2,k)}\in\mR^m$, and $\h^{(k)}\in\mR^{n+1}$ having iid standard normal elements, we set
  \begin{eqnarray}\label{eq:fl4}
\psi_{S,\infty}(f_{S},\calX,\calY,\p,\q,\c,x,y,s)  =
 \mE_{G,{\mathcal U}_{r+1}} \frac{1}{n\c_r} \log
\lp \mE_{{\mathcal U}_{r}} \lp \dots \lp \mE_{{\mathcal U}_3}\lp\lp\mE_{{\mathcal U}_2} \lp \lp Z_{S,\infty}\rp^{\c_2}\rp\rp^{\frac{\c_3}{\c_2}}\rp\rp^{\frac{\c_4}{\c_3}} \dots \rp^{\frac{\c_{r}}{\c_{r-1}}}\rp, \nonumber \\
 \end{eqnarray}
where
\begin{eqnarray}\label{eq:fl5}
Z_{S,\infty} & \triangleq & e^{D_{0,S,\infty}} \nonumber \\
 D_{0,S,\infty} & \triangleq  & \max_{\x\in\cX,\|\x\|_2=x} s \max_{\y\in\cY,\|\y\|_2=y}
 \lp \sqrt{n} f_{S}
+\sqrt{n}  y    \lp\sum_{k=2}^{r+1}c_k\h^{(k)}\rp^T\x
+ \sqrt{n} x \y^T\lp\sum_{k=2}^{r+1}b_k\u^{(2,k)}\rp \rp \nonumber  \\
 b_k & \triangleq & b_k(\p,\q)=\sqrt{\p_{k-1}-\p_k} \nonumber \\
c_k & \triangleq & c_k(\p,\q)=\sqrt{\q_{k-1}-\q_k}.
 \end{eqnarray}
The preceding technical preliminaries are sufficient to proceed by recalling on the following theorem -- undoubtedly, one of most significant sfl RDT's  components.
\begin{theorem} \cite{Stojnicflrdt23}
\label{thm:thmsflrdt1}  Consider large $n$ linear regime with  $\alpha\triangleq \lim_{n\rightarrow\infty} \frac{m}{n}$, remaining constant as  $n$ grows. Let $\cX\subseteq \mR^{n+1}$ and $\cY\subseteq \mR^m$ be two given sets and let the elements of  $G\in\mR^{m\times n}$
 be i.i.d. standard normals. Assume the complete sfl RDT frame from \cite{Stojnicsflgscompyx23} and consider a given function $f(\y):R^m\rightarrow R$. Set
\begin{align}\label{eq:thmsflrdt2eq1}
   \psi_{rp} & \triangleq - \max_{\x\in\cX} s \max_{\y\in\cY} \lp f(\y)+\y^TG\x \rp
   \qquad  \mbox{(\bl{\textbf{random primal}})} \nonumber \\
   \psi_{rd}(\p,\q,\c,x,y,s) & \triangleq    \frac{x^2y^2}{2}    \sum_{k=2}^{r+1}\Bigg(\Bigg.
   \p_{k-1}\q_{k-1}
   -\p_{k}\q_{k}
  \Bigg.\Bigg)
\c_k
  - \psi_{S,\infty}(f(\y),\calX,\calY,\p,\q,\c,x,y,s) \hspace{.03in} \mbox{(\bl{\textbf{fl random dual}})}. \nonumber \\
 \end{align}
Let $\hat{\p_0}\rightarrow 1$, $\hat{\q_0}\rightarrow 1$, and $\hat{\c_0}\rightarrow 1$, $\hat{\p}_{r+1}=\hat{\q}_{r+1}=\hat{\c}_{r+1}=0$, and let the non-fixed parts of $\hat{\p}\triangleq \hat{\p}(x,y)$, $\hat{\q}\triangleq \hat{\q}(x,y)$, and  $\hat{\c}\triangleq \hat{\c}(x,y)$ be the solutions of the following system
\begin{eqnarray}\label{eq:thmsflrdt2eq2}
   \frac{d \psi_{rd}(\p,\q,\c,x,y,s)}{d\p} =  0,\quad
   \frac{d \psi_{rd}(\p,\q,\c,x,y,s)}{d\q} =  0,\quad
   \frac{d \psi_{rd}(\p,\q,\c,x,y,s)}{d\c} =  0.
 \end{eqnarray}
 Then,
\begin{eqnarray}\label{eq:thmsflrdt2eq3}
    \lim_{n\rightarrow\infty} \frac{\mE_G  \psi_{rp}}{\sqrt{n}}
  & = &
\min_{x>0} \max_{y>0} \lim_{n\rightarrow\infty} \psi_{rd}(\hat{\p}(x,y),\hat{\q}(x,y),\hat{\c}(x,y),x,y,s) \qquad \mbox{(\bl{\textbf{strong sfl random duality}})},\nonumber \\
 \end{eqnarray}
where $\psi_{S,\infty}(\cdot)$ is as in (\ref{eq:fl4})-(\ref{eq:fl5}).
 \end{theorem}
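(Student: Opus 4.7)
The statement is exactly the strong sfl duality of \cite{Stojnicflrdt23} applied to the bilinear process $\y^T G \x$ with an additional additive term $f(\y)$. My plan is therefore to reduce to that framework and to verify that the added $f(\y)$ term causes no new difficulty because it does not interact with $G$. First I would rewrite $\psi_{rp}$ as the ground state of the bilinearly indexed Hamiltonian $\cH(G;\x,\y)= f(\y)+\y^TG\x$ over $\x\in\cX,\,\|\x\|_2=x$ and $\y\in\cY,\,\|\y\|_2=y$, and then introduce the usual interpolating process
\begin{equation*}
\cH_t(\x,\y)=\sqrt{t}\,\y^TG\x+\sqrt{1-t}\,\bigl(xy\,\sum_{k=2}^{r+1}c_k(\h^{(k)})^T\x+xy\,\sum_{k=2}^{r+1}b_k\y^T\u^{(2,k)}\bigr)+f(\y),
\end{equation*}
so that at $t=1$ one recovers the random primal and at $t=0$ one recovers the decoupled object whose free energy is exactly $\psi_{S,\infty}$ of (\ref{eq:fl4})--(\ref{eq:fl5}).

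Second, I would introduce the nested (Parisi-type) partition-function recursion that the vectors $\p,\q,\c$ parametrize: define $Z_{r+1}=\exp(\beta \cH_t)$ and recursively
$Z_{k-1}=\mE_{{\mathcal U}_k}\bigl(Z_k^{\c_k/\c_{k-1}}\bigr)$ for $k=r+1,\dots,2$, then form the quenched log of $Z_1$. Taking the derivative $\frac{d}{dt}$ of the resulting $t$-dependent free energy and applying Gaussian integration by parts to $G$ on the one hand and to each $\h^{(k)},\u^{(2,k)}$ on the other hand produces, after cancellation of the variance terms by the quadratic piece $\tfrac{x^2y^2}{2}\sum(\p_{k-1}\q_{k-1}-\p_k\q_k)\c_k$, an expression that is a sum over $k$ of replica-overlap differences weighted by $\c_k$. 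This is precisely the step that was carried out in \cite{Stojnicsflgscompyx23,Stojnicnflgscompyx23,Stojnicflrdt23} for the pure bilinear case; the extra term $f(\y)$ enters both at $t=1$ and $t=0$ in exactly the same way and commutes with all $G$-derivatives, so it passes through unchanged.

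Third, taking the $\beta\to\infty$ (ground-state) limit gives the weak sfl random duality in the form $\lim_{n\to\infty}\mE_G\psi_{rp}/\sqrt n \le \min_{x>0}\max_{y>0}\psi_{rd}$ for \emph{every} admissible triple $(\p,\q,\c)$ satisfying the ordering in (\ref{eq:hmsfl2}). The strong statement then comes from two ingredients: (i) the stationarity system (\ref{eq:thmsflrdt2eq2}) identifies a saddle $(\hat\p,\hat\q,\hat\c)$ at which the $t$-derivative above vanishes identically in $t$, so the $t=1$ and $t=0$ free energies coincide, and (ii) the outer $\min_{x>0}\max_{y>0}$ is matched by choosing $x=\|\hat\x\|_2,\,y=\|\hat\y\|_2$ at the primal optimum, which is legal because $\cX,\cY$ are sliced by the norm constraints. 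This is the same closing argument used in \cite{Stojnicflrdt23,Stojnichopflrdt23,Stojnicnegsphflrdt23}.

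The main obstacle I expect is not the interpolation itself but the justification of the ground-state limit exchange and of the saddle-point match in step three. Specifically, one must show that the $\beta\to\infty$ limit commutes with the nested expectations and with $\lim_{n\to\infty}$, and that the stationary $(\hat\p,\hat\q,\hat\c)$ produced by (\ref{eq:thmsflrdt2eq2}) is genuinely attained (not merely an infimum) so the $t$-derivative vanishes pointwise. Both issues are handled in the complete sfl RDT frame assumed in the hypothesis; once that frame is invoked, the proof reduces to verifying that the presence of $f(\y)$ does not disturb any of the Gaussian integration-by-parts identities, which is immediate since $f(\y)$ is $G$-independent and enters additively.
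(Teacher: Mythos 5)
Your proposal is correct and follows essentially the same route as the paper: the paper's proof is a one-line reduction to the strong sfl random duality established in \cite{Stojnicflrdt23} (mirroring Theorem 1 of \cite{Stojnicnegsphflrdt23}), and your sketch simply reconstructs that same interpolation/lifting machinery while correctly observing that the additive, $G$-independent term $f(\y)$ passes through the Gaussian integration-by-parts steps unchanged. No substantive difference in approach.
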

\begin{proof}
Follows from the corresponding one proven in \cite{Stojnicflrdt23} in exactly the same way as Theorem 1 in \cite{Stojnicnegsphflrdt23}.
 \end{proof}

While the above theorem holds for generic sets $\cX$ and $\cY$, the following corollary specializes it so to make it fully operational for studying the $\ell_0$ ML decoding which is of our interest here.
\begin{corollary}
\label{cor:cor1}  Assume the setup of Theorem \ref{thm:thmsflrdt1} with $r>0$, $\sigma >0$, $c_1\in [0,\sqrt{r_1}]$, and $\cX=\cX(\sigma,c_1,r_1)$ and $\cY=\mS^m$, where $\cX(\sigma,c_1,r_1)$ is as in (\ref{eq:ex13}) and $\mS^m$ is the unit $m$-dimensional Euclidean sphere. Set $r_2=\sqrt{\sigma^2+(1-2c_1+r_1^2)}$ and
\begin{align}\label{eq:thmsflrdt2eq1a0}
   \psi_{rp} & \triangleq - \max_{\x\in\cX} s \max_{\y\in\cY} \lp \y^TG\x  \rp
   \qquad  \mbox{(\bl{\textbf{random primal}})} \nonumber \\
   \psi_{rd}(\p,\q,\c,r_2,1,s) & \triangleq    \frac{r_2^2}{2}    \sum_{k=2}^{r+1}\Bigg(\Bigg.
   \p_{k-1}\q_{k-1}
   -\p_{k}\q_{k}
  \Bigg.\Bigg)
\c_k
  - \psi_{S,\infty}(0,\calX,\calY,\p,\q,\c,r_2,1,s) \quad \mbox{(\bl{\textbf{fl random dual}})}. \nonumber \\
 \end{align}
Let the non-fixed parts of $\hat{\p}$, $\hat{\q}$, and  $\hat{\c}$ be the solutions of the following system
\begin{eqnarray}\label{eq:thmsflrdt2eq2a0}
   \frac{d \psi_{rd}(\p,\q,\c,r_2,1,s)}{d\p} =  0,\quad
   \frac{d \psi_{rd}(\p,\q,\c,r_2,1,s)}{d\q} =  0,\quad
   \frac{d \psi_{rd}(\p,\q,\c,r_2,1,s)}{d\c} =  0.
 \end{eqnarray}
 Then,
\begin{eqnarray}\label{eq:thmsflrdt2eq3a0}
    \lim_{n\rightarrow\infty} \frac{\mE_G  \psi_{rp}}{\sqrt{n}}
  & = &
 \lim_{n\rightarrow\infty} \psi_{rd}(\hat{\p},\hat{\q},\hat{\c},r_2,1,s) \qquad \mbox{(\bl{\textbf{strong sfl random duality}})},\nonumber \\
 \end{eqnarray}
where $\psi_{S,\infty}(\cdot)$ is as in (\ref{eq:fl4})-(\ref{eq:fl5}).
 \end{corollary}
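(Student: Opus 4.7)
The plan is to invoke Theorem \ref{thm:thmsflrdt1} with the specific choices $f(\y)\equiv 0$, $\cX=\cX(\sigma,c_1,r_1)$, and $\cY=\mS^m$, and to observe that the $\min_{x>0}\max_{y>0}$ operations in (\ref{eq:thmsflrdt2eq3}) degenerate to evaluation at the single point $(x,y)=(r_2,1)$. The reason is that both sets entering the corollary consist exclusively of vectors of a fixed Euclidean norm, so the auxiliary radius constraints inside $\psi_{rp}$ and $\psi_{S,\infty}$ become redundant at precisely one choice of $(x,y)$ and infeasible elsewhere.

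The first key step is to verify the norm rigidity of $\cX(\sigma,c_1,r_1)$. For any $\tilde{\x}\in\cX(\sigma,c_1,r_1)$ a direct expansion gives
\begin{equation*}
\|\tilde{\x}\|_2^2 \;=\; \|\bar{\x}-\x\|_2^2+\sigma^2 \;=\; \|\bar{\x}\|_2^2-2\bar{\x}^T\x+\|\x\|_2^2+\sigma^2 \;=\; 1-2c_1+r_1^2+\sigma^2 \;=\; r_2^2,
\end{equation*}
using $\|\bar{\x}\|_2=1$ together with the defining constraints $\bar{\x}^T\x=c_1$, $\|\x\|_2=r_1$, and $\tilde{\x}_{n+1}=\sigma$. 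Analogously, $\cY=\mS^m$ forces $\|\y\|_2=1$. Consequently, the constraints $\|\x\|_2=x$ and $\|\y\|_2=y$ appearing in $D_{0,S,\infty}$ and in the definition of $\psi_{rp}$ in Theorem \ref{thm:thmsflrdt1} are automatically satisfied for all admissible elements precisely when $x=r_2$ and $y=1$; for any other $(x,y)$ the inner maximizations would be over empty sets.

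Next, setting $f(\y)=0$ makes the $\sqrt{n}\,f_S$ term in $D_{0,S,\infty}$ vanish, and the prefactor $x^2y^2/2$ in the $\psi_{rd}$ of Theorem \ref{thm:thmsflrdt1} collapses to $r_2^2/2$; hence the $\psi_{rp}$ and $\psi_{rd}$ of the corollary coincide with those of the theorem specialized at $(x,y)=(r_2,1)$. The stationarity system (\ref{eq:thmsflrdt2eq2}) therefore reduces term-for-term to (\ref{eq:thmsflrdt2eq2a0}) at this single point, and the strong sfl random duality (\ref{eq:thmsflrdt2eq3}) of the theorem yields (\ref{eq:thmsflrdt2eq3a0}) after removing the now-trivial outer $\min_{x>0}\max_{y>0}$.

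I do not anticipate a genuine obstacle: the corollary is a direct specialization of Theorem \ref{thm:thmsflrdt1}, and no fresh concentration estimate or bli-process argument beyond what is already packaged inside that theorem is needed. The only point that demands care is the verification that the $\ell_0$ sparsity, the inner-product equality $\bar{\x}^T\x=c_1$, and the radius condition $\|\x\|_2=r_1$ are jointly compatible with a nonempty $\cX(\sigma,c_1,r_1)$ (which is what the assumption on $c_1$ ensures via Cauchy–Schwarz) and that they all act on a level set of $\|\tilde{\x}\|_2$ — exactly the structural feature which licenses the collapse of the outer minimax.
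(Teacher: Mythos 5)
Your proposal is correct and follows essentially the same route as the paper: the paper's own proof is a one-line appeal to Theorem \ref{thm:thmsflrdt1} with $f(\y)=0$ together with the observation that elements of $\cX(\sigma,c_1,r_1)$ and $\mS^m$ have fixed norms $r_2$ and $1$, which is exactly your argument, merely spelled out in more detail (including the explicit expansion $\|\tilde{\x}\|_2^2=1-2c_1+r_1^2+\sigma^2=r_2^2$ and the collapse of the outer $\min_{x>0}\max_{y>0}$).
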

\begin{proof}
Follows immediately as a direct consequence of Theorem \ref{thm:thmsflrdt1}, after choosing $f(\y)=0$ and recognizing that the specialized sets $\cX=\cX(\sigma,c_1,r_1)$ and $\cY=\mS^m$ are such that their elements have $r_2=\sqrt{\sigma^2+(1-2c_1+r_1^2)}$ and unit Euclidean norms, respectively.
 \end{proof}

As observed in \cite{Stojnicflrdt23,Stojnichopflrdt23}, due to trivial random primal concentrations, various  probabilistic variants of (\ref{eq:thmsflrdt2eq3}) and (\ref{eq:thmsflrdt2eq3a0}) are possible as well. As these are rather simple and bring no further conceptual insights, we skip stating them and instead work with the expected values throughout the presentation.

\section{Concrete implementation}
\label{sec:prac}

While Theorem \ref{thm:thmsflrdt1} and Corollary \ref{cor:cor1} are very powerful, their power comes to practical fruition only if each of the associated quantities can be evaluated. Two technical obstacles require a particular attention: \textbf{\emph{(i)}} It is not immediately clear how the right value for $r$ should be chosen; and \textbf{\emph{(ii)}} Sets $\cX$ and $\cY$ have no typically desirable component-wise structural characterizations which questions the straightforwardness of decoupling over $\x$ and $\y$. As we see below, each of these potential obstacles turns out to be surpassable.

Assuming the above discussed sets specializations, $\cX=\cX(\sigma,c_1,r_1)$ and $\cY=\mS^m$, and utilizing  Corollary \ref{cor:cor1}, we start by noting that the object of key practical relevance is the following so-called \emph{random dual}
\begin{align}\label{eq:prac1}
    \psi_{rd}(\p,\q,\c,r_2,1,s) & \triangleq    \frac{r_2^2}{2}    \sum_{k=2}^{r+1}\Bigg(\Bigg.
   \p_{k-1}\q_{k-1}
   -\p_{k}\q_{k}
  \Bigg.\Bigg)
\c_k
  - \psi_{S,\infty}(0,\cX,\cY,\p,\q,\c,r_2,1,s) \nonumber \\
  & =   \frac{r_2^2}{2}    \sum_{k=2}^{r+1}\Bigg(\Bigg.
   \p_{k-1}\q_{k-1}
   -\p_{k}\q_{k}
  \Bigg.\Bigg)
\c_k
  - \psi_{S,\infty}(0,\cX(\sigma,c_1,r_1),\mS^m,\p,\q,\c,r_2,1,s) \nonumber \\
  & =   \frac{r_2^2}{2}    \sum_{k=2}^{r+1}\Bigg(\Bigg.
   \p_{k-1}\q_{k-1}
   -\p_{k}\q_{k}
  \Bigg.\Bigg)
\c_k
  - \frac{1}{n}\varphi(D^{(per)}(s),
  \c) - \frac{1}{n}\varphi(D^{(sph)}(s),\c), \nonumber \\
  \end{align}
where, based on (\ref{eq:fl4})-(\ref{eq:fl5}),
  \begin{eqnarray}\label{eq:prac2}
\varphi(D,\c) & \triangleq &
 \mE_{G,{\mathcal U}_{r+1}} \frac{1}{\c_r} \log
\lp \mE_{{\mathcal U}_{r}} \lp \dots \lp \mE_{{\mathcal U}_3}\lp\lp\mE_{{\mathcal U}_2} \lp
\lp    e^{D}   \rp^{\c_2}\rp\rp^{\frac{\c_3}{\c_2}}\rp\rp^{\frac{\c_4}{\c_3}} \dots \rp^{\frac{\c_{r}}{\c_{r-1}}}\rp, \nonumber \\
  \end{eqnarray}
and
\begin{eqnarray}\label{eq:prac3}
D^{(per)}(s) & = & \max_{\x\in\cX(\sigma,c_1,r_1)} \lp   s\sqrt{n}      \lp\sum_{k=2}^{r+1}c_k\h^{(k)}\rp^T\x  \rp \nonumber \\
  D^{(sph)}(s) & \triangleq  &   s \max_{\y\in\mS^m}
\lp  \sqrt{n} r_2 \y^T\lp\sum_{k=2}^{r+1}b_k\u^{(2,k)}\rp \rp.
 \end{eqnarray}

\noindent  \underline{\red{\textbf{\emph{(i) Handling $D^{(per)}(-1)$:}}}}  We start by first recalling on  the $\cX(\sigma,c_1,r_1)$ parametrization from (\ref{eq:ex13})
\begin{eqnarray}
\cX(\sigma,c_1,r_1) = \left \{\tilde{\x}  | \quad \x\in\mR^{n},\tilde{\x}_{1:n}=\bar{\x}-\x,\|\x\|_2=r_1,\bar{\x}^T\x=c_1,\|\x\|_0=k, \tilde{\x}_{n+1}=\sigma\right \}, \label{eq:prac0a1}
\end{eqnarray}
and observing that, due to statistical nature of $D^{(per)}(-1)$ and rotational invariance of $\h^{(k)}$, one can write
\begin{eqnarray}\label{eq:prac4}
D^{(per)}(-1) & = & \max_{\tilde{\x}\in\cX(\sigma,\c_1,r_1)}   \lp -\sqrt{n}      \lp\sum_{k=2}^{r+1}c_k\h^{(k)}\rp^T\tilde{\x} \rp
=
\max_{\tilde{\x}\in\cX(\sigma,\c_1,r_1)}   \lp -\sqrt{n}      \lp -\sum_{k=2}^{r+1}c_k\h^{(k)}\rp^T\tilde{\x} \rp.
 \end{eqnarray}
Utilizing the sparsity imposing mechanism introduced in \cite{StojnicLiftStrSec13}, we reparameterize
 $\cX(\sigma,c_1,r_1)$  as
 \begin{equation}
\cX(\sigma,c_1,r_1) =\left \{\tilde{\x}  | \hspace{.031in} \x\in\mR^{n},\tilde{\x}_{1:n}=\bar{\x}-\d\circ\x,\|\d\circ\x\|_2=r_1,\bar{\x}^T(\d\circ\x)=c_1,\|\d\|_0=k, \d\in\{0,1\}^n,\tilde{\x}_{n+1}=\sigma\right \}. \label{eq:prac0a2}
\end{equation}
 A combination of (\ref{eq:prac4}) and (\ref{eq:prac0a2}) then gives the following optimization form
\begin{eqnarray}\label{eq:prac4a0a2}
D^{(per)}(-1)  =   -\sqrt{n} \min_{\d,\x}  & &  \lp      \lp -\sum_{k=2}^{r+1}c_k\h^{(k)}\rp^T
\begin{bmatrix}
  \bar{\x}-\d\circ\x \\
  \sigma
\end{bmatrix}
\rp
\nonumber \\
\mbox{subject to} & & \|\d\circ\x\|_2=r_1,\bar{\x}^T(\d\circ\x)=c_1,\|\d\|_0=k, \d\in\{0,1\}^n.
 \end{eqnarray}
One can further write the Lagrangian and utilize the strong Lagrangian duality to obtain
\begin{eqnarray}\label{eq:prac4a0a3}
D^{(per)}(-1) & =  &  -\sqrt{n} \min_{\d\in \left \{0,1\right \}^n,\x} \max_{\gamma,\nu_1,\nu}
{\cal L}(\gamma,\mu,\nu)
=
-\sqrt{n} \max_{\gamma,\nu_1,\nu}
\min_{\d\in \left \{0,1\right \}^n,\x}
{\cal L}(\gamma,\mu,\nu),
 \end{eqnarray}
where
\begin{eqnarray}\label{eq:prac4a0a3a0}
{\cal L}(\gamma,\mu,\nu) =     \lp -\sum_{k=2}^{r+1}c_k\h^{(k)}\rp^T
\begin{bmatrix}
  \bar{\x}-\d\circ\x \\
  \sigma
\end{bmatrix}
    +\nu_1 \bar{\x}^T(\d\circ\x)
    - \nu_1 c_1
    +\gamma \|\d\circ\x\|_2^2 -\gamma r_1^2
    +\nu \sum_{i=1}^{n}\d_i -\nu k.
 \end{eqnarray}
 Setting
 \begin{eqnarray}
  \bar{\h}\triangleq -\sum_{k=2}^{r+1}c_k\h^{(k)},\label{eq:prac4a0a3a1}
   \end{eqnarray}
and transforming further, we find
\begin{eqnarray}\label{eq:prac4a0a3a2}
{\cal L}(\gamma,\mu,\nu) &  = &       \bar{\h}^T
\begin{bmatrix}
  \bar{\x}\\
  \sigma
\end{bmatrix}
-
\bar{\h}^T (\d\circ\x)
    +\nu_1 \sum_{i=1}^{n}\bar{\x}_i^T\d_i\x_i  - \nu_1 c_1
    +\gamma \sum_{i=1}^{n}\d_i^2\x_i^2 -\gamma r_1^2
    +\nu \sum_{i=1}^{n}\d_i -\nu k
    \nonumber \\
    & = &
    \sum_{i=1}^{n}\bar{\h}_i\bar{\x}_i +\bar{\h}_{n+1}\sigma
  -
\sum_{i=1}^{n}\bar{\h}_i \d_i\x_i
    +\nu_1 \sum_{i=1}^{n}\bar{\x}_i^T\d_i\x_i  - \nu_1 c_1
    +\gamma \sum_{i=1}^{n}\d_i^2\x_i^2 -\gamma r_1^2
    +\nu \sum_{i=1}^{n}\d_i -\nu k. \nonumber \\
 \end{eqnarray}
To optimize over $\x$, we find
 \begin{eqnarray}
 \frac{d{\cal L} (\gamma,\nu_1,\nu)}{d\x_i}=
   -
 \bar{\h}_i \d_i
    +\nu_1  \bar{\x}_i^T\d_i
    +2\gamma  \d_i^2\x_i.
 \label{eq:prac4a0a3a3}
   \end{eqnarray}
Equaling the above derivative to zero for $\d_i=1$ gives
 \begin{eqnarray}
\x_i= \frac{\bar{\h}_i \d_i
    -\nu_1  \bar{\x}_i^T\d_i}{2\gamma  \d_i^2}
    =
     \frac{\bar{\h}_i
    -\nu_1  \bar{\x}_i^T}{2\gamma }.
 \label{eq:prac4a0a3a4}
   \end{eqnarray}
Combining (\ref{eq:prac4a0a3a2})-(\ref{eq:prac4a0a3a4})  one arrives at
\begin{eqnarray}\label{eq:prac4a0a3a5}
\min_{\d\in\{0,1\}^n,\x}{\cal L}(\gamma,\mu,\nu)
    & = &
    \sum_{i=1}^{n}\bar{\h}_i\bar{\x}_i +\bar{\h}_{n+1}\sigma
  +
\sum_{i=1}^{n} \min \lp- \frac{(\bar{\h}_i
    -\nu_1  \bar{\x}_i)^2}{4\gamma}+\nu,0\rp - \nu_1 c_1
  -\gamma r_1^2
     -\nu k. \nonumber \\
 \end{eqnarray}
 After the appropriate scalings, $\gamma\rightarrow \gamma\sqrt{n}$, $\nu_1\rightarrow\nu_1\sqrt{n}$, and $\nu\rightarrow\frac{\nu}{\sqrt{n}}$, a combination of (\ref{eq:prac4a0a3}) and (\ref{eq:prac4a0a3a5}) gives
\begin{align}\label{eq:prac4a0a6}
D^{(per)}(-1) & =
-\sqrt{n} \max_{\gamma,\nu_1,\nu}
\min_{\d\in \left \{0,1\right \}^n,\x}
{\cal L}(\gamma,\mu,\nu)
\nonumber  \\
& =  \hspace{-.0in}
-  \max_{\gamma,\nu_1,\nu}
    \sum_{i=1}^{n}\bar{\h}_i\bar{\x}_i\sqrt{n} +\bar{\h}_{n+1}\sigma\sqrt{n}
  +
\sum_{i=1}^{n} \min \lp- \frac{(\bar{\h}_i
    -\nu_1  \bar{\x}_i\sqrt{n})^2}{4\gamma}+\nu,0\rp - \nu_1 c_1 n
  -\gamma  r_1^2 n
     -\nu k
     \nonumber \\
     & =  \hspace{-.0in}
-  \max_{\gamma,\nu_1,\nu}
   \lp      - \nu_1 c_1 n
  -\gamma  r_1^2 n
     -\nu k + \bar{\h}_{n+1}\sigma\sqrt{n}
+  \sum_{i=1}^{n} D^{(per)}_i(c_k) \rp\nonumber \\
     & =  \hspace{-.0in}
-  \max_{\gamma,\nu_1,\nu}
   \lp      - \nu_1 c_1 n
  -\gamma  r_1^2 n
     -\nu k + \lp -\sum_{k=2}^{r+1}c_k\h_{n+1}^{(k)}\rp\sigma\sqrt{n}
+  \sum_{i=1}^{n} D^{(per)}_i(c_k) \rp,
 \end{align}
where
\begin{eqnarray}\label{eq:prac4a0a7}
D_i^{(per)}(c_k)
& =  &
 \bar{\h}_i\bar{\x}_i\sqrt{n} +  \min \lp- \frac{(\bar{\h}_i
    -\nu_1  \bar{\x}_i\sqrt{n})^2}{4\gamma}+\nu,0\rp
    \nonumber \\
    & =  &
 \lp -\sum_{k=2}^{r+1}c_k\h_{i}^{(k)}\rp\bar{\x}_i\sqrt{n} +  \min \lp- \frac{\lp -\sum_{k=2}^{r+1}c_k\h_{i}^{(k)}
    -\nu_1  \bar{\x}_i\sqrt{n}\rp^2}{4\gamma}+\nu,0\rp    \nonumber \\
    & =  &
 \lp -\sum_{k=2}^{r+1}c_k\h_{i}^{(k)}\rp\bar{\x}_i\sqrt{n} +  \min \lp- \frac{\lp \sum_{k=2}^{r+1}c_k\h_{i}^{(k)}
    +\nu_1  \bar{\x}_i\sqrt{n}\rp^2}{4\gamma}+\nu,0\rp.
 \end{eqnarray}
To enable concrete evaluations later on, we without loss of generality  for $k$-sparse $\bar{\x}$ assume that its nonzero components are located in the first $k$ positions, i.e., we assume that $\bar{\x}_i=0,k+1\leq i\leq n$. Consequently, we then distinguish
\begin{eqnarray}\label{eq:prac4a0a7a0}
D_{i1}^{(per)}(c_k)
     & =  &
 \lp -\sum_{k=2}^{r+1}c_k\h_{i}^{(k)}\rp\bar{\x}_i\sqrt{n} +  \min \lp- \frac{\lp \sum_{k=2}^{r+1}c_k\h_{i}^{(k)}
    +\nu_1  \bar{\x}_i\sqrt{n}\rp^2}{4\gamma}+\nu,0\rp.
 \end{eqnarray}
and
\begin{eqnarray}\label{eq:prac4a0a7a1}
D_{i0}^{(per)}(c_k)
     & =  &
   \min \lp- \frac{\lp \sum_{k=2}^{r+1}c_k\h_{i}^{(k)}
     \rp^2}{4\gamma}+\nu,0\rp.
 \end{eqnarray}
It should also be noted that the assumed knowledge of sparsity $k$ is actually not needed. In the above derivations it is only reflected through the term $\nu k$ in (\ref{eq:prac4a0a6}). Everything remains in place if, instead of being the exact value of the sparsity, it is just the running parameter emulating the true value of sparsity.

\noindent \underline{\red{\textbf{\emph{(ii) Handling $D^{(sph)}(1)$:}}}}  A simple observation gives
\begin{eqnarray}\label{eq:prac7}
   D^{(sph)}(-1) & \triangleq  & -  \sqrt{n}r_2  \max_{\y\in\mS^m}
\lp  \y^T\lp\sum_{k=2}^{r+1}b_k\u^{(2,k)}\rp \rp
= -  \sqrt{n}  r_2 \left \| \sum_{k=2}^{r+1}b_k\u^{(2,k)} \right \|_2.
 \end{eqnarray}
On then recognizes that a scaled version of the quantity on the most right hand side has already been characterized in \cite{Stojnicnegsphflrdt23} through the utilization of the \emph{square root trick} introduced on numerous occasions in, e.g., \cite{StojnicLiftStrSec13}. As shown in \cite{Stojnicnegsphflrdt23}, we have
\begin{eqnarray}\label{eq:prac9}
   D^{(sph)}(-1)
   & =  &  - r_2 \min_{\gamma_{sq}} \lp \sum_{i=1}^{m} D_i^{(sph)}(b_k)+\gamma_{sq}n \rp, \nonumber \\
 \end{eqnarray}
where
\begin{eqnarray}\label{eq:prac10}
   D_i^{(sph)}(b_k)= \frac{\lp  \sum_{k=2}^{r+1}b_k\u_i^{(2,k)} \rp^2}{4\gamma_{sq}}.
 \end{eqnarray}
Recalling on (\ref{eq:ex16}) and (\ref{eq:limlogpartfunsqrta0}) and utilizing concentrations, one then finds
 \begin{eqnarray}
\sigma \mE_{G} \xi(c_1,r_1)
  =
 \lim_{n\rightarrow\infty}\frac{\mE_G \max_{\x\in\cX(\sigma,c_1,r_1)}   \max_{\y\in\mS^m} \y^TG\x}{\sqrt{n}}
 =
    \lim_{n\rightarrow\infty} \frac{\mE_G  \psi_{rp}}{\sqrt{n}}
   =
 \lim_{n\rightarrow\infty} \psi_{rd}(\hat{\p},\hat{\q},\hat{\c},r_2,1,-1).
  \label{eq:negprac11}
\end{eqnarray}
Keeping in mind (\ref{eq:prac1})-(\ref{eq:prac10}), the following theorem that ultimately enables fitting the $\ell_0$ ML decoding within the sfl RDT machinery can be formulated.

\begin{theorem}
  \label{thm:negthmprac1}
  Assume the setup of Theorem \ref{thm:thmsflrdt1} and Corollary \ref{cor:cor1},
  and consider large $n$ linear regime with $\alpha=\lim_{n\rightarrow\infty} \frac{m}{n}$. Set
  \begin{eqnarray}\label{eq:thm2eq1}
\varphi(D,\c) & = &
 \mE_{G,{\mathcal U}_{r+1}} \frac{1}{\c_r} \log
\lp \mE_{{\mathcal U}_{r}} \lp \dots \lp \mE_{{\mathcal U}_3}\lp\lp\mE_{{\mathcal U}_2} \lp
\lp    e^{D}   \rp^{\c_2}\rp\rp^{\frac{\c_3}{\c_2}}\rp\rp^{\frac{\c_4}{\c_3}} \dots \rp^{\frac{\c_{r}}{\c_{r-1}}}\rp, \nonumber \\
  \end{eqnarray}
and
\begin{eqnarray}\label{eq:thm2eq2}
    \bar{\psi}_{rd}(\p,\q,\c,\gamma_{sq},\gamma,\nu_1,\nu,\sigma,c_1,r_1) & \triangleq &      \frac{r_2^2}{2}    \sum_{k=2}^{r+1}\Bigg(\Bigg.
   \p_{k-1}\q_{k-1}
   -\p_{k}\q_{k}
  \Bigg.\Bigg)
\c_k
- \frac{\sigma^2}{2}    \sum_{k=2}^{r+1}\Bigg(\Bigg.
   \q_{k-1}
   -\q_{k}
  \Bigg.\Bigg)
\c_k \nonumber \\
& &
 - \nu_1 c_1
  -\gamma  r_1^2
     -\nu \beta
   - (1-\beta)\varphi(D_{10}^{(per)}(-1),\c) - \frac{1}{n}\sum_{i=1}^{k}\varphi(D_{i1}^{(per)}(-1),\c) \nonumber \\
    & &  + \gamma_{sq}r_2  - \alpha\varphi(D_1^{(sph)}(-1),\c),
  \end{eqnarray}
where $D_{i1}^{(per)}(-1)$, $D_{10}^{(per)}(-1)$, and $D_1^{(sph)}(-1)$ are as in (\ref{eq:prac4a0a7a0}), (\ref{eq:prac4a0a7a1}), and (\ref{eq:prac10}), respectively. Let the ``fixed'' parts of $\hat{\p}$, $\hat{\q}$, and $\hat{\c}$ satisfy $\hat{\p}_1\rightarrow 1$, $\hat{\q}_1\rightarrow 1$, $\hat{\c}_1\rightarrow 1$, $\hat{\p}_{r+1}=\hat{\q}_{r+1}=\hat{\c}_{r+1}=0$. Further, let the ``non-fixed'' parts of $\hat{\p}_k$, $\hat{\q}_k$, and $\hat{\c}_k$ ($k\in\{2,3,\dots,r\}$) and $\hat{\gamma}_{sq}$, $\hat{\gamma}$, $\hat{\nu}_1$, and $\hat{\nu}$ be the solutions of the following system of equations
  \begin{eqnarray}\label{eq:negthmprac1eq1}
   \frac{d \bar{\psi}_{rd}(\p,\q,\c,\gamma_{sq},\gamma,\nu_1,\nu,\sigma,c_1,r_1)}{d\p} & =  & 0 \nonumber \\
   \frac{d \bar{\psi}_{rd}(\p,\q,\c,\gamma_{sq},\gamma,\nu_1,\nu,\sigma,c_1,r_1)}{d\q} & =  & 0 \nonumber \\
   \frac{d \bar{\psi}_{rd}(\p,\q,\c,\gamma_{sq},\gamma,\nu_1,\nu,\sigma,c_1,r_1)}{d\c} & =  & 0 \nonumber \\
   \frac{d \bar{\psi}_{rd}(\p,\q,\c,\gamma_{sq},\gamma,\nu_1,\nu,\sigma,c_1,r_1)}{d\gamma_{sq}} & =  & 0\nonumber \\
   \frac{d \bar{\psi}_{rd}(\p,\q,\c,\gamma_{sq},\gamma,\nu_1,\nu,\sigma,c_1,r_1)}{d\gamma} & =  & 0\nonumber \\
   \frac{d \bar{\psi}_{rd}(\p,\q,\c,\gamma_{sq},\gamma,\nu_1,\nu,\sigma,c_1,r_1)}{d\nu_1} & =  & 0\nonumber \\
   \frac{d \bar{\psi}_{rd}(\p,\q,\c,\gamma_{sq},\gamma,\nu_1,\nu,\sigma,c_1,r_1)}{d\nu} & =  &  0.
 \end{eqnarray}
Consequently, let
\begin{eqnarray}\label{eq:prac17}
c_k(\hat{\p},\hat{\q})  & = & \sqrt{\hat{\q}_{k-1}-\hat{\q}_k} \nonumber \\
b_k(\hat{\p},\hat{\q})  & = & \sqrt{\hat{\p}_{k-1}-\hat{\p}_k}.
 \end{eqnarray}
 Then
 \begin{equation}
\sigma\xi(c_1.r_1) =   \bar{\psi}_{rd}(\hat{\p},\hat{\q},\hat{\c},\hat{\gamma}_{sq},\hat{\gamma},\hat{\nu_1},\hat{\nu},\sigma,c_1,r_1).
  \label{eq:negthmprac1eq2}
\end{equation}
\end{theorem}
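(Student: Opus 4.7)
The plan is to apply the strong sfl random duality of Corollary~\ref{cor:cor1} with $s=-1$, $\cX=\cX(\sigma,c_1,r_1)$, $\cY=\mS^m$, and then assemble $\bar{\psi}_{rd}$ by independently handling the two contributions to $\psi_{S,\infty}$ coming from $D^{(per)}(-1)$ and $D^{(sph)}(-1)$. The overall architecture is already laid out in (\ref{eq:prac1})--(\ref{eq:negprac11}): the only remaining task is to verify, for each of the two sides, that after Lagrangian dualization and suitable component decoupling the expressions match term-by-term with the definition (\ref{eq:thm2eq2}), and that the stationarity system (\ref{eq:negthmprac1eq1}) reproduces the sfl RDT stationarity (\ref{eq:thmsflrdt2eq2a0}) together with the scalar saddle points introduced to linearize the constraints.

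For the $\x$-side, I would first reparametrize $\cX(\sigma,c_1,r_1)$ using the sparsity indicator $\d\in\{0,1\}^n$ as in (\ref{eq:prac0a2}), per the mechanism of \cite{StojnicLiftStrSec13}. Strong Lagrangian duality (justified because, for frozen $\d$, the inner problem is a strictly convex quadratic in $\x$ once $\gamma>0$ at the saddle) brings in multipliers $\gamma,\nu_1,\nu$ and, after the explicit minimization over $\x_i$ on the support $\d_i=1$ carried out in (\ref{eq:prac4a0a3a3})--(\ref{eq:prac4a0a3a4}), produces the component-wise decoupled form (\ref{eq:prac4a0a6}). Since the $n+1$-st coordinate of $\tilde{\x}$ is the deterministic $\sigma$ rather than an optimization variable, it contributes the \emph{additive} Gaussian $\bar{\h}_{n+1}\sigma\sqrt{n}$ that is independent of all $\h^{(k)}_{i}$ for $i\le n$ and of all $\u^{(2,k)}$; this independence allows it to be extracted from the $\varphi$ recursion, and a direct standard Gaussian lifted moment calculation (as performed throughout \cite{Stojnicflrdt23}) evaluates $\tfrac{1}{n}\varphi(\bar{\h}_{n+1}\sigma\sqrt{n},\c)=\tfrac{\sigma^2}{2}\sum_{k=2}^{r+1}(\q_{k-1}-\q_k)\c_k$, accounting for the $-\tfrac{\sigma^2}{2}\sum(\q_{k-1}-\q_k)\c_k$ correction appearing in $\bar{\psi}_{rd}$. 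The remaining $i=1,\ldots,n$ contributions are driven by iid $\h^{(k)}_i$, so the standard decoupling of the $\varphi$ recursion splits the sum into the $k$ "on-support" terms and the $n-k$ "off-support" terms, yielding after division by $n$ the contributions $\beta\varphi(D_{i1}^{(per)}(-1),\c)$ and $(1-\beta)\varphi(D_{10}^{(per)}(-1),\c)$ with $\beta=k/n$, together with the constant multiplier terms $-\nu_1 c_1-\gamma r_1^2-\nu\beta$ after the rescalings indicated just before (\ref{eq:prac4a0a6}).

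For the $\y$-side I would apply the \emph{square root trick} recalled in (\ref{eq:prac9})--(\ref{eq:prac10}) to replace the Euclidean norm maximization over $\mS^m$ by a linear-in-$\u^{(2,k)}$ expression at the price of an additional scalar dual variable $\gamma_{sq}$, giving rise to the $\gamma_{sq}r_2$ term and, after component decoupling over the $m$ iid entries and invocation of $m/n\to\alpha$, the $-\alpha\varphi(D_1^{(sph)}(-1),\c)$ contribution. Combining these two blocks with the $r_2^2/2$ prefactor inherited from $\psi_{rd}$ in Corollary~\ref{cor:cor1} and the $-\sigma^2/2$ piece derived above produces exactly $\bar{\psi}_{rd}(\p,\q,\c,\gamma_{sq},\gamma,\nu_1,\nu,\sigma,c_1,r_1)$. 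The seven stationarity equations in (\ref{eq:negthmprac1eq1}) then simultaneously encode the sfl RDT stationarity in $(\p,\q,\c)$ required by Corollary~\ref{cor:cor1} and the primal-dual saddle conditions in $(\gamma_{sq},\gamma,\nu_1,\nu)$ introduced by the Lagrangian and square-root-trick reductions, so evaluating $\bar{\psi}_{rd}$ at the common solution gives (\ref{eq:negthmprac1eq2}) via (\ref{eq:negprac11}).

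The main technical obstacle is the clean separation of expectations across the $r$ lifting levels: one has to justify pulling the deterministic-in-$(\x,\d,\y)$ pieces (the multiplier terms and the extra $\bar{\h}_{n+1}\sigma\sqrt{n}$) through the nested $\varphi$ averages, and to argue that the sum over $i$ of component-wise Hamiltonians genuinely decouples in the sfl sense despite the fact that the $i\le k$ and $i>k$ indices have \emph{different} effective single-site Hamiltonians ($D_{i1}^{(per)}$ versus $D_{i0}^{(per)}$). Both are standard consequences of independence of the underlying $\h^{(k)}_i$ and $\u^{(2,k)}$ collections across $i$ and of the multiplicative structure of $e^D$, together with the concentration of the random primal which, as noted after (\ref{eq:thmsflrdt2eq3a0}), removes the need for any additional probabilistic bookkeeping.
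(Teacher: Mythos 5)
Your proposal is correct and follows essentially the same route as the paper, whose proof of Theorem \ref{thm:negthmprac1} simply invokes Theorem \ref{thm:thmsflrdt1}, Corollary \ref{cor:cor1}, the preceding Lagrangian/sparsity-indicator treatment of $D^{(per)}(-1)$, the square-root-trick treatment of $D^{(sph)}(-1)$, and the integration of the term $\lp\sum_{k=2}^{r+1}c_k\h_{n+1}^{(k)}\rp\sigma\sqrt{n}$ — precisely the step you identify as the source of the $-\tfrac{\sigma^2}{2}\sum_{k}(\q_{k-1}-\q_k)\c_k$ correction. The only cosmetic imprecision is writing the on-support contribution as $\beta\varphi(D_{i1}^{(per)}(-1),\c)$, which presupposes identical $\bar{\x}_i$ on the support, whereas the theorem retains the general form $\tfrac{1}{n}\sum_{i=1}^{k}\varphi(D_{i1}^{(per)}(-1),\c)$.
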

\begin{proof}
Follows from  Theorem \ref{thm:thmsflrdt1}, Corollary \ref{cor:cor1}, the sfl RDT machinery presented in \cite{Stojnicnflgscompyx23,Stojnicsflgscompyx23,Stojnicflrdt23,Stojnichopflrdt23}, the above discussion, and after integrating out $\lp\sum_{k=2}^{r+1}c_k\h_{n+1}^{(k)}\rp\sigma\sqrt{n}$.
\end{proof}

\noindent \textbf{Remark:} Along the lines of the observation right after (\ref{eq:prac4a0a7a1}), the assumed knowledge of sparsity $k$ is actually not needed.  Everything remains in place if $\beta$ in (\ref{eq:thm2eq2}) (instead of representing the exact value of sparsity ratio) is just viewed as a running parameter emulating the sparsity ratio. Moreover, this also means that the above can be utilized to characterize not only the ML optimal sparsity (if it is a priori unknown) but also to characterize the ML performance for any assumed sparsity. As our prevalent interest here is the $\ell_0$ ML from (\ref{eq:ex7}) (which assumes that the sparsity is a priori known), we below conduct numerical evaluations within such a context.

\subsection{Numerical evaluations}
\label{sec:nuemricalags}

The results of Theorem \ref{thm:negthmprac1} are conceptually very powerful, but, as emphasized earlier, they become practically relevant only if  one can conduct all the underlying numerical evaluations. We start the evaluations with $r=1$ and then proceed systematically by incrementally increasing $r$ and showing how the entire lifting mechanism is progressing. Along the way we uncover several closed form analytical results that substantially simplify and facilitate the entire evaluation process. Also, to be able to provide concrete numerical values, $\bar{\x}$ needs to be precisely specified. We choose $\bar{\x}_i=\frac{1}{\sqrt{k}},1\leq i\leq k$, which is typically viewed as algorithmically the hardest.

\subsubsection{$r=1$ -- first level of lifting}
\label{sec:firstlev}

For the first level, one has $r=1$ and $\hat{\p}_1\rightarrow 1$ and $\hat{\q}_1\rightarrow 1$ which, coupled with $\hat{\p}_{r+1}=\hat{\p}_{2}=\hat{\q}_{r+1}=\hat{\q}_{2}=0$, and $\hat{\c}_{2}\rightarrow 0$, gives
\begin{align}\label{eq:negprac19}
    \bar{\psi}_{rd}^{(1)}    & =   \frac{r_2^2-\sigma^2}{2}
\hat{\c}_2
- \nu_1 c_1
  -\gamma  r_1^2
     -\nu \beta
     - \frac{1-\beta}{\hat{\c}_2}\log\lp \mE_{{\mathcal U}_2} e^{-\hat{\c}_2   \min \lp- \frac{\lp \sqrt{1-0}\h_{i}^{(2)}
     \rp^2}{4\gamma}+\nu,0\rp }\rp
\nonumber \\
& \quad     - \frac{1}{\hat{n\c}_2}\sum_{i=1}^{k}\log\lp \mE_{{\mathcal U}_2} e^{\hat{\c}_2  \lp \lp \sqrt{1-0}\h_{i}^{(2)}\rp\bar{\x}_i\sqrt{n} -  \min \lp- \frac{\lp \sqrt{1-0}\h_{i}^{(2)}
    +\nu_1  \bar{\x}_i\sqrt{n}\rp^2}{4\gamma}+\nu,0\rp \rp}\rp
 \nonumber \\
&  \quad +\gamma_{sq}r_2
- \alpha\frac{1}{\hat{\c}_2}\log\lp \mE_{{\mathcal U}_2} e^{-\hat{\c}_2r_2\frac{\lp  \sqrt{1-0}\u_1^{(2,2)} \rp^2}{4\gamma_{sq}}}\rp \nonumber \\
& \rightarrow
- \nu_1 c_1
  -\gamma  r_1^2
     -\nu \beta
     \nonumber \\
& \quad   -\frac{1-\beta}{\hat{\c}_2}\log\lp 1- \mE_{{\mathcal U}_2} \hat{\c}_2   \lp   \min \lp- \frac{\lp \sqrt{1-0}\h_{i}^{(2)}
    \rp^2}{4\gamma}+\nu,0\rp \rp \rp
     \nonumber \\
& \quad   -\frac{1}{\hat{n\c}_2}\sum_{i=1}^{k}\log\lp 1+ \mE_{{\mathcal U}_2} \hat{\c}_2   \lp \lp \sqrt{1-0}\h_{i}^{(2)}\rp\bar{\x}_i\sqrt{n} -  \min \lp- \frac{\lp \sqrt{1-0}\h_{i}^{(2)}
    +\nu_1  \bar{\x}_i\sqrt{n}\rp^2}{4\gamma}+\nu,0\rp \rp \rp
 \nonumber \\
&  \quad +\gamma_{sq}r_2
- \alpha\frac{1}{\hat{\c}_2}\log\lp 1 - \mE_{{\mathcal U}_2} \hat{\c}_2r_2\frac{\lp  \sqrt{1-0}\u_1^{(2,2)} \rp^2}{4\gamma_{sq}}\rp \nonumber \\
\nonumber \\
& \rightarrow
- \nu_1 c_1
  -\gamma  r_1^2
     -\nu \beta
      \nonumber \\
& \quad   -\frac{1-\beta}{\hat{\c}_2}\log\lp 1- \mE_{{\mathcal U}_2} \hat{\c}_2  \lp    \min \lp- \frac{\lp \sqrt{1-0}\h_{i}^{(2)}
    +\nu_1  \bar{\x}_i\sqrt{n}\rp^2}{4\gamma}+\nu,0\rp \rp \rp
      \nonumber \\
& \quad   -\frac{1}{\hat{n\c}_2}\sum_{i=1}^{k}\log\lp 1+ \mE_{{\mathcal U}_2} \hat{\c}_2  \lp \lp \sqrt{1-0}\h_{i}^{(2)}\rp\bar{\x}_i\sqrt{n} -   \min \lp- \frac{\lp \sqrt{1-0}\h_{i}^{(2)}
    +\nu_1  \bar{\x}_i\sqrt{n}\rp^2}{4\gamma}+\nu,0\rp \rp \rp
 \nonumber \\
 &  \quad +\gamma_{sq}r_2
- \alpha\frac{1}{\hat{\c}_2}\log\lp 1 - \frac{\hat{\c}_2r_2}{4\gamma_{sq}}\rp \nonumber \\
& \rightarrow
- \nu_1 c_1
  -\gamma  r_1^2
     -\nu \beta
+(1-\beta)\mE_{{\mathcal U}_2}   \lp   \min \lp- \frac{\lp \sqrt{1-0}\h_{i}^{(2)}
    +\nu_1  \bar{\x}_i\sqrt{n}\rp^2}{4\gamma}+\nu,0\rp \rp
    \nonumber \\
    & \quad
    -\frac{1}{n}\sum_{i=1}^{k}\mE_{{\mathcal U}_2}   \lp \lp \sqrt{1-0}\h_{i}^{(2)}\rp\bar{\x}_i\sqrt{n} -  \min \lp- \frac{\lp \sqrt{1-0}\h_{i}^{(2)}
    +\nu_1  \bar{\x}_i\sqrt{n}\rp^2}{4\gamma}+\nu,0\rp \rp
    \nonumber \\
& \quad  +\gamma_{sq}r_2
+   r_2\frac{\alpha}{4\gamma_{sq}},
  \end{align}
  where we leave out the arguments and write $ \bar{\psi}_{rd}^{(1)}   $ instead of  $ \bar{\psi}_{rd}^{(1)}(\hat{\p},\hat{\q},\hat{\c},\gamma_{sq},\gamma,\nu_1,\nu,\sigma,c_1,r_1)  $ to simplify notation. Utilizing  $\frac{ \bar{\psi}_{rd}(\hat{\p},\hat{\q},\hat{\c},\gamma_{sq},\gamma,\nu_1,\nu,\sigma,c_1,r_1) }{d\gamma_{sq}}=0$, we easily find
$\hat{\gamma}_{sq}=\frac{\sqrt{\alpha}}{2}$, which, together with (\ref{eq:negprac19}), gives
\begin{eqnarray}\label{eq:negprac19a0}
    \bar{\psi}_{rd}^{(1)}   &  \rightarrow &
- \nu_1 c_1
  -\gamma  r_1^2
     -\nu \beta
+(1-\beta)\mE_{{\mathcal U}_2}   \lp   \min \lp- \frac{\lp \h_{i}^{(2)}
     \rp^2}{4\gamma}+\nu,0\rp \rp
\nonumber \\
& &
-\frac{1}{n}\sum_{i=1}^{k}\mE_{{\mathcal U}_2}   \lp  \h_{i}^{(2)}\bar{\x}_i\sqrt{n} -  \min \lp- \frac{\lp \h_{i}^{(2)}
    +\nu_1  \bar{\x}_i\sqrt{n}\rp^2}{4\gamma}+\nu,0\rp \rp
+ \sqrt{\alpha}r_2 \nonumber \\
&  \rightarrow &
 f_1(\gamma,\nu_1,\nu) + \sqrt{\alpha}r_2,
  \end{eqnarray}
where
\begin{eqnarray}\label{eq:negprac19a1}
f_1(\gamma,\nu_1,\nu)  & \triangleq &
- \nu_1 c_1
  -\gamma  r_1^2
     -\nu \beta
+(1-\beta)\mE_{{\mathcal U}_2}   \lp   \min \lp- \frac{\lp \h_{i}^{(2)}
     \rp^2}{4\gamma}+\nu,0\rp \rp
\nonumber \\
 & & -\frac{1}{n}\sum_{i=1}^{k} \mE_{{\mathcal U}_2}   \lp  \h_{i}^{(2)}\bar{\x}_i\sqrt{n} -  \min \lp- \frac{\lp \h_{i}^{(2)}
   +\nu_1  \bar{\x}_i\sqrt{n}\rp^2}{4\gamma}+\nu,0\rp \rp
   \nonumber \\
   & = &
- \nu_1 c_1
  -\gamma  r_1^2
     -\nu \beta
+(1-\beta)\mE_{{\mathcal U}_2}   \lp   \min \lp- \frac{\lp \h_{i}^{(2)}
     \rp^2}{4\gamma}+\nu,0\rp \rp
\nonumber \\
& &
+\frac{1}{n}\sum_{i=1}^{k} \mE_{{\mathcal U}_2}   \lp  \min \lp- \frac{\lp \h_{i}^{(2)}
   +\nu_1  \bar{\x}_i\sqrt{n}\rp^2}{4\gamma}+\nu,0\rp \rp.
  \end{eqnarray}
After recalling on $\bar{\x}_i=\frac{1}{\sqrt{k}},1\leq i\leq k,$, setting
\begin{eqnarray}\label{eq:negprac19a1a0}
\bar{d} & = & \frac{1}{\sqrt{\beta}} \nonumber \\
\bar{a} &  =  & 2\sqrt{\gamma\nu} \nonumber \\
\bar{a}_1 & = & 2\sqrt{\gamma\nu}-\nu_1\bar{d}\nonumber \\
\bar{a}_2 & = & -2\sqrt{\gamma\nu}- \nu_1\bar{d} \nonumber \\
\bar{b} & = & \nu_1\bar{d},
 \end{eqnarray}
and
\begin{eqnarray}\label{eq:negprac19a1a1}
f_{21} & =  &   2\lp-\frac{1}{4\gamma}\lp\frac{\bar{a}e^{-\bar{a}^2/2} }{\sqrt{2\pi}} + \frac{1}{2}\erfc\lp \frac{\bar{a}}{\sqrt{2}}\rp\rp + \frac{\nu}{2}\erfc\lp \frac{\bar{a}}{\sqrt{2}} \rp \rp
\nonumber \\
 f_{221}  & = &
     \lp-\frac{1}{4\gamma}\lp\frac{(\bar{a}_1+2\bar{b})e^{-\bar{a}_1^2/2} }{\sqrt{2\pi}} + \frac{\bar{b}^2+1}{2}\erfc\lp \frac{\bar{a}_1}{\sqrt{2}}\rp\rp + \frac{\nu}{2}\erfc\lp \frac{\bar{a}_1}{\sqrt{2}} \rp \rp \nonumber \\
 f_{222}  & = &
     \lp-\frac{1}{4\gamma}\lp -\frac{(\bar{a}_2+2\bar{b})e^{-\bar{a}_2^2/2} }{\sqrt{2\pi}} + \frac{\bar{b}^2+1}{2}\erfc\lp -\frac{\bar{a}_2}{\sqrt{2}}\rp\rp + \frac{\nu}{2}\erfc\lp -\frac{\bar{a}_2}{\sqrt{2}} \rp \rp,
 \end{eqnarray}
one computes the integrals and obtains
\begin{eqnarray}\label{eq:negprac19a1a2}
f_1(\gamma,\nu_1,\nu)  =
- \nu_1 c_1
  -\gamma  r_1^2
     -\nu \beta
+(1-\beta) f_{21} +\beta(f_{221}+f_{222}).
  \end{eqnarray}
Connecting (\ref{eq:negprac19a0})  and  (\ref{eq:negprac19a1a2}), we finally have
\begin{eqnarray}\label{eq:negprac19a0b0}
    \bar{\psi}_{rd}^{(1)}    &  \rightarrow &
 f_1(\gamma,\nu_1,\nu) + \sqrt{\alpha}r_2 =- \nu_1 c_1
  -\gamma  r_1^2
     -\nu \beta
+(1-\beta) f_{21} +\beta(f_{221}+f_{222}) + \sqrt{\alpha}r_2,
  \end{eqnarray}
where $f_{21}$, $f_{221}$, and $f_{222}$ are given through  (\ref{eq:negprac19a1a0}), and (\ref{eq:negprac19a1a1}). One then obtains $\hat{\gamma}$, $\hat{\nu}_1$, and $\hat{\nu}$ as the solutions of the following system
\begin{eqnarray}\label{eq:dersystem0}
 \frac{d\bar{\psi}_{rd}^{(1)} }{d\gamma}
 =
  \frac{d\bar{\psi}_{rd}^{(1)} }{d\nu_1}
  =
   \frac{d\bar{\psi}_{rd}^{(1)} }{d\nu}=0.
\end{eqnarray}
To get concrete values we take $\alpha=0.057$, $\beta=0.01$, $c_1=0.5$, $r_1=1$, and $\sigma=\sqrt{0.00025}$ (throughout numerical evaluations we will keep $r_1=1$, as in the systems where the SNR is known it is trivial to estimate the norm of $\bar{\x}$ (which is assumed to be $1$) and optimizing additionally over $r_1$ would mean that one is not utilizing that information). The obtained results for all parameters are given in Table \ref{tab:tab1}.
\begin{table}[h]
\caption{$1$-sfl RDT parameters; $\ell_0$ ML decoding ;  $\hat{\c}_1\rightarrow 1$; $n,\beta\rightarrow\infty$; $c_1=0.5$, $r_1=1$}\vspace{.1in}
\centering
\def\arraystretch{1.2}
\begin{tabular}{||l||c|c|c|c||c|c||c|c||c||c||}\hline\hline
 \hspace{-0in}$r$-sfl RDT                                             & $\hat{\gamma}_{sq}$   & $\hat{\gamma}$    & $\hat{\nu_1}$   & $\hat{\nu}$ &  $\hat{\p}_2$ & $\hat{\p}_1$     & $\hat{\q}_2$  & $\hat{\q}_1$ &  $\hat{\c}_2$    & $\xi_1^{(r)}(c_1,r_1)$  \\ \hline\hline
$1$-sfl RDT                                       & $0.1194$  & $0.1673$   & $-0.2711$ & $11.5934$ & $0$  & $\rightarrow 1$   & $0$ & $\rightarrow 1$
 &  $\rightarrow 0$  & \bl{$\mathbf{2.5149}$}
  \\ \hline\hline
  \end{tabular}
\label{tab:tab1}
\end{table}
 We complement the numerical results for $\xi_1(c_1,r_1)$ from Table \ref{tab:tab1} with the corresponding ones obtained for a range of $c_1$ in Figure \ref{fig:fig1}. As we will see later on, it will turn out that these results can be  improved in a wide range of $c_1$. Nonetheless, several points are of critical importance and need to be addressed carefully. The curve has a very non-monotonic shape which suggests that the connection between the decoding error ($\textbf{RMSE}$) and the value of the objective might not be very straightforward. In particular, a small objective clearly does not always imply a small recovery error (which one might intuitively expect from the ML criterion). Moreover, the curve has multiple local optima. The good thing in this scenario though, is that at least the objective's global minimum seems to be achievable for a fairly small value of error. In general, however, things can be even much worse.
  \begin{figure}[h]
\centering
\centerline{\includegraphics[width=1\linewidth]{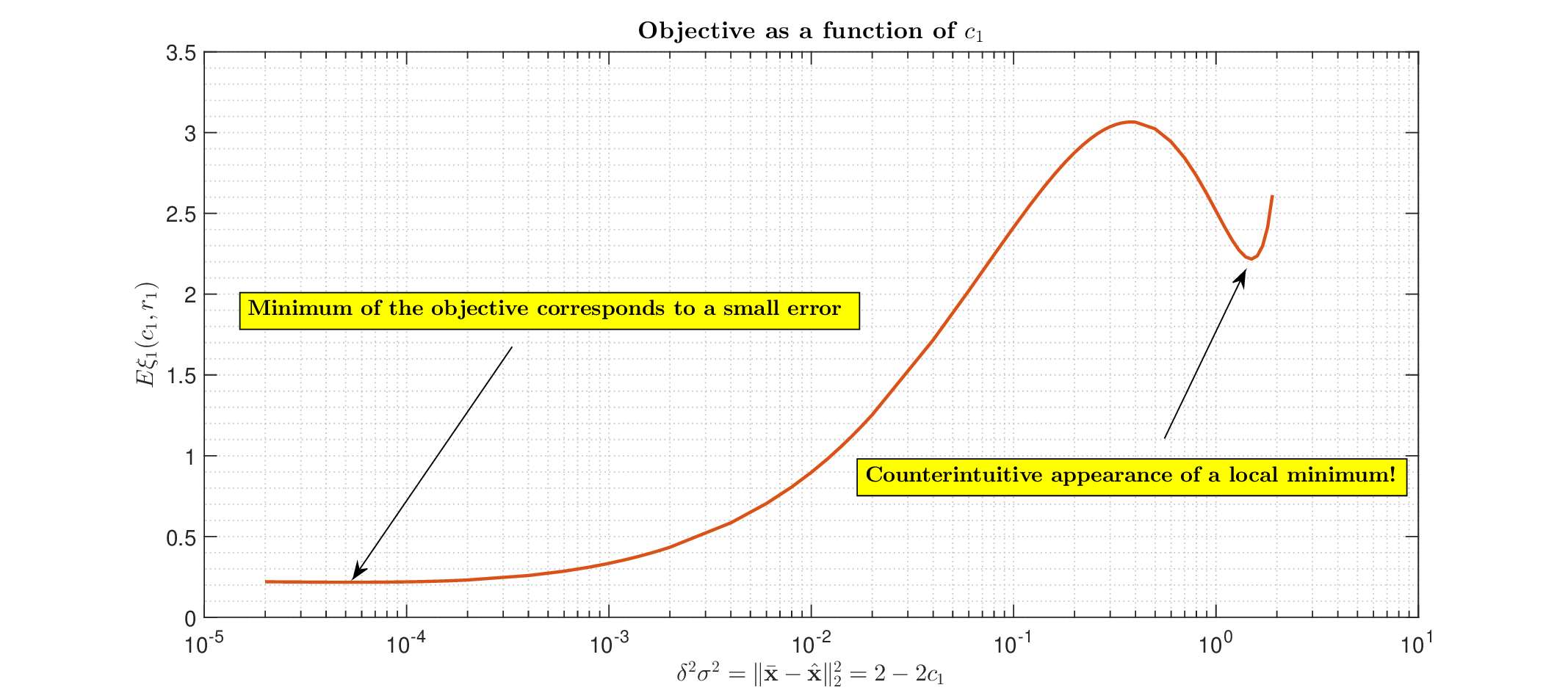}}
\caption{$\xi_1^{(1)}(c_1,r_1)$ as a function of $c_1$; $\alpha=0.057$, $\beta=0.01$, $\sigma=\sqrt{0.00025}$, and $r_1=1$.}
\label{fig:fig1}
\end{figure}
Looking at Figure \ref{fig:fig2}, which shows the same set of results for $\alpha=0.563$ and $\beta=0.4$, one observes that the global minimum is now achieved for a fairly large error. This  would clearly suggest that searching for a minimal objective might not always be the right way to go. A fact somewhat contradictory to the common wisdom behind the ML decoding (particularly in the so-called high $\text{SNR}=1/\sigma^2$ regime). Right now, one can not provide the answers to this conundrum. Instead, they will start appearing  as the presentation progresses. However, it is useful to point out here (and keep in mind for the remainder of the presentation) that the shape of these curves is not mistakenly or unexplainably weird, but is indeed as presented and precisely as such will play a critical role in overall understanding and ultimate utilization of the main results.
\begin{figure}[h]
\centering
\centerline{\includegraphics[width=1\linewidth]{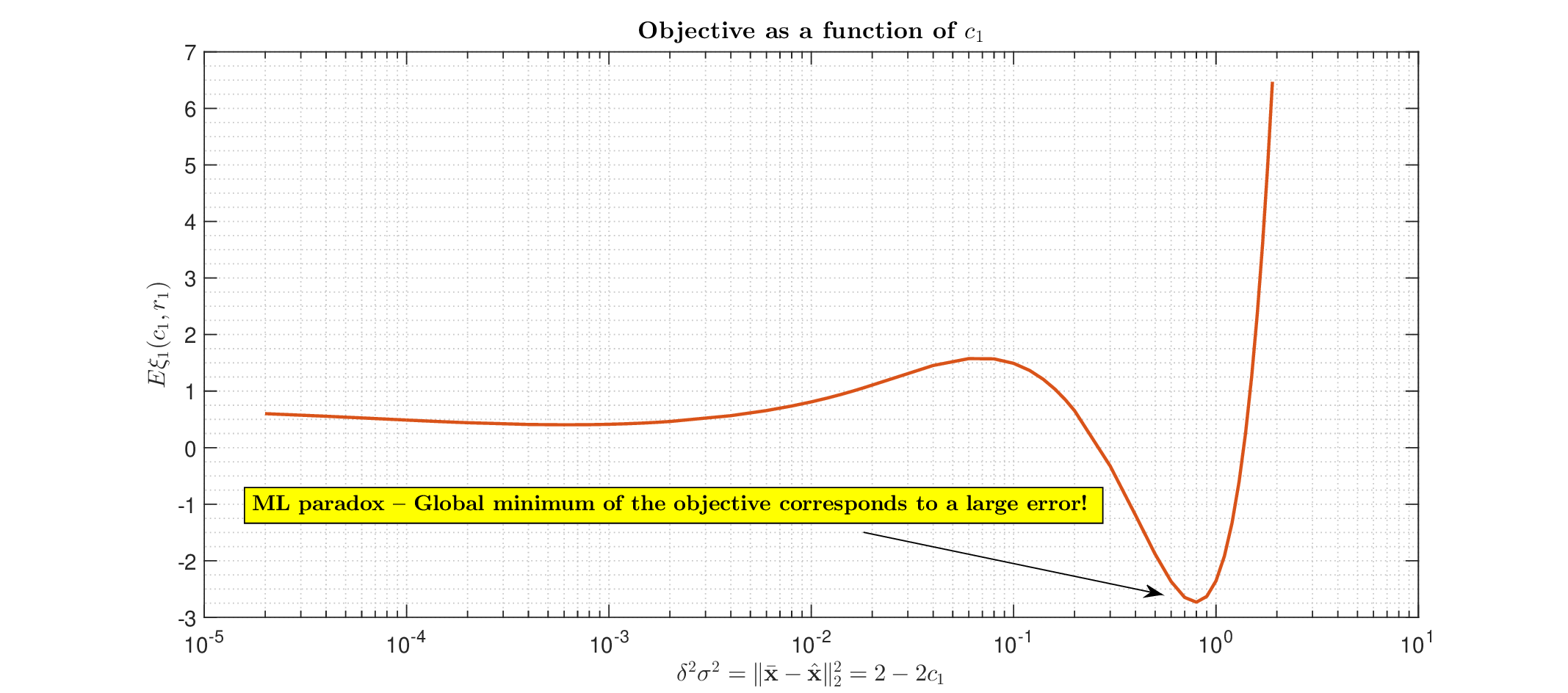}}
\caption{$\xi_1^{(1)}(c_1,r_1)$ as a function of $c_1$; $\alpha=0.563$, $\beta=0.4$, $\sigma=\sqrt{0.00025}$, and $r_1=1$.}
\label{fig:fig2}
\end{figure}

\subsubsection{$r=2$ -- second level of lifting}
\label{sec:secondlev}

The setup presented in the previous subsection can be utilized for the second level as well. One now, however, has $r=2$ and, similarly to what happened on the first level, $\hat{\p}_1\rightarrow 1$ and $\hat{\q}_1\rightarrow 1$. On the other hand, $\hat{\c}_{2}\neq 0$, $\p_2\neq0$, and $\q_2\neq0$ which, together with $\hat{\p}_{r+1}=\hat{\p}_{3}=\hat{\q}_{r+1}=\hat{\q}_{3}=0$ enables us to write, analogously to (\ref{eq:negprac19}),
\begin{eqnarray}\label{eq:negprac24}
    \bar{\psi}_{rd}^{(2)}
    & = &
    \frac{r_2^2}{2}
(1-\p_2\q_2)\c_2 -  \frac{\sigma^2}{2}
(1-\q_2)\c_2
- \nu_1 c_1
  -\gamma  r_1^2
     -\nu \beta
  \nonumber \\
& & - \frac{1-\beta}{\c_2}\mE_{{\mathcal U}_3}\log\lp \mE_{{\mathcal U}_2}
e^{-\c_2   \min \lp- \frac{\lp \sqrt{1-\q_2}\h_{i}^{(2)} + \sqrt{\q_2}\h_{i}^{(3)}
     \rp^2}{4\gamma}+\nu,0\rp }\rp
  \nonumber \\
& & - \frac{1}{n\c_2} \sum_{i=1}^{k}\mE_{{\mathcal U}_3}\log\lp \mE_{{\mathcal U}_2}
e^{\c_2 \lp \lp \sqrt{1-\q_2}\h_{i}^{(2)} + \sqrt{\q_2}\h_{i}^{(3)}\rp\bar{\x}_i\sqrt{n} -  \min \lp- \frac{\lp \sqrt{1-\q_2}\h_{i}^{(2)} + \sqrt{\q_2}\h_{i}^{(3)}
  +\nu_1\bar{\x}\sqrt{n}   \rp^2}{4\gamma}+\nu,0\rp \rp }
\rp
\nonumber \\
& &   + \gamma_{sq}r_2
 -\alpha\frac{1}{\c_2}\mE_{{\mathcal U}_3} \log\lp \mE_{{\mathcal U}_2} e^{-\c_2r_2\frac{\lp\sqrt{1-\p_2}\u_1^{(2,2)}+\sqrt{\p_2}\u_1^{(2,3)}\rp^2}{4\gamma_{sq}}}\rp \nonumber \\
& = &    \frac{r_2^2}{2}
(1-\p_2\q_2)\c_2 -  \frac{\sigma^2}{2}
(1-\q_2)\c_2
- \nu_1 c_1
  -\gamma  r_1^2
     -\nu \beta
  \nonumber \\
& & - \frac{1-\beta}{\c_2}\mE_{{\mathcal U}_3}\log\lp \mE_{{\mathcal U}_2}
e^{-\c_2   \min \lp- \frac{\lp \sqrt{1-\q_2}\h_{i}^{(2)} + \sqrt{\q_2}\h_{i}^{(3)}
     \rp^2}{4\gamma}+\nu,0\rp }\rp
  \nonumber \\
& & - \frac{\beta}{\c_2}\mE_{{\mathcal U}_3}\log\lp \mE_{{\mathcal U}_2}
e^{\c_2 \lp \lp \sqrt{1-\q_2}\h_{i}^{(2)} + \sqrt{\q_2}\h_{i}^{(3)}\rp \bar{d} -  \min \lp- \frac{\lp \sqrt{1-\q_2}\h_{i}^{(2)} + \sqrt{\q_2}\h_{i}^{(3)}
  +\nu_1\bar{d}  \rp^2}{4\gamma}+\nu,0\rp \rp }
\rp
\nonumber \\
 & &  +  \gamma_{sq}r_2
+ \alpha r_2 \Bigg(\Bigg. \frac{1}{2\c_2r_2} \log \lp \frac{2\gamma_{sq}+\c_2r_2(1-\p_2)}{2\gamma_{sq}} \rp  +  \frac{\p_2}{2(2\gamma_{sq}+\c_2r_2(1-\p_2))}   \Bigg.\Bigg)
\nonumber \\
& = &    \frac{r_2^2}{2}
(1-\p_2\q_2)\c_2 -  \frac{\sigma^2}{2}
(1-\q_2)\c_2
- \nu_1 c_1
  -\gamma  r_1^2
     -\nu \beta
  \nonumber \\
& & - \frac{1-\beta}{\c_2}\mE_{{\mathcal U}_3}\log\lp \mE_{{\mathcal U}_2}
{\cal Z}_0^{(2)}\rp
  - \frac{\beta}{\c_2}\mE_{{\mathcal U}_3}\log\lp \mE_{{\mathcal U}_2}
{\cal Z}_0^{(2)}
\rp
\nonumber \\
 & &  +  \gamma_{sq}r_2
+ \alpha r_2 \Bigg(\Bigg. \frac{1}{2\c_2r_2} \log \lp \frac{2\gamma_{sq}+\c_2r_2(1-\p_2)}{2\gamma_{sq}} \rp  +  \frac{\p_2}{2(2\gamma_{sq}+\c_2r_2(1-\p_2))}   \Bigg.\Bigg),
     \end{eqnarray}
where
\begin{eqnarray}
{\cal Z}_0^{(2)} & = &      e^{-\c_2   \min \lp- \frac{\lp \sqrt{1-\q_2}\h_{i}^{(2)} + \sqrt{\q_2}\h_{i}^{(3)}
     \rp^2}{4\gamma}+\nu,0\rp } \nonumber \\
     {\cal Z}_1^{(2)} & = & e^{\c_2 \lp \lp \sqrt{1-\q_2}\h_{i}^{(2)} + \sqrt{\q_2}\h_{i}^{(3)}\rp \bar{d} -  \min \lp- \frac{\lp \sqrt{1-\q_2}\h_{i}^{(2)} + \sqrt{\q_2}\h_{i}^{(3)}
  +\nu_1\bar{d}  \rp^2}{4\gamma}+\nu,0\rp \rp }.
\end{eqnarray}
We then set
\begin{eqnarray}\label{eq:negprac24a0a0}
 D & = & \frac{(-\h_1^{(3)}\sqrt{\q_2}-2\sqrt{\gamma\nu})}{\sqrt{1-\q_2}} \nonumber \\
 E & = & \frac{(-\h_1^{(3)}\sqrt{\q_2}+2\sqrt{\gamma\nu})}{\sqrt{1-\q_2}} \nonumber \\
A & = & \sqrt{\c_2(1-\q_2)} \nonumber \\
B & = & \h_1^{(3)}\sqrt{\c_2\q_2}\nonumber \\
C & = & 4\gamma \nonumber \\
Q_x(A,B,C,D,E) &= & \erf\lp\frac{(-2A^2 E - 2 AB + CE)}{\sqrt{2}\sqrt{C} \sqrt{C - 2A^2}}\rp -
\erf\lp\frac{(-2A^2 D - 2 AB + CD)}{\sqrt{2}\sqrt{C} \sqrt{C - 2A^2}}\rp
\nonumber \\
I_{x1}(A,B,C,D,E) & = &  \frac{\sqrt{C} e^{-\c_2\nu + B^2/(C - 2 A^2)} }{2 \sqrt{C - 2 A^2}}(2-Q_x) \nonumber \\
I_{x2}(D,E) & = & \frac{1}{2}\erfc\lp \frac{D}{\sqrt{2}}\rp - \frac{1}{2}\erfc\lp \frac{E}{\sqrt{2}}\rp,
 \end{eqnarray}
and after solving the remaining integrals obtain
\begin{eqnarray} \label{eq:negprac24a3a0}
  \mE_{{\mathcal U}_2}
{\cal Z}_0^{(2)} =     I_{x1}(A,B,C,D,E)+I_{x2}(D,E),
    \end{eqnarray}
and
\begin{eqnarray} \label{eq:negprac24a3}
\mE_{{\mathcal U}_3}\log\lp \mE_{{\mathcal U}_2}
{\cal Z}_0^{(2)}\rp=   \mE_{{\mathcal U}_3} \log\lp I_{x1}(A,B,C,D,E)+I_{x2}(D,E) \rp.
    \end{eqnarray}
We then analogously  set
\begin{eqnarray}\label{eq:negprac24a0a0a0}
 \bar{D} & = & \frac{(-(\h_1^{(3)}\sqrt{\q_2}+\bar{d}\nu_1) -2\sqrt{\gamma\nu})}{\sqrt{1-\q_2}} \nonumber \\
 \bar{E} & = & \frac{(-(\h_1^{(3)}\sqrt{\q_2}+\bar{d}\nu_1) +2\sqrt{\gamma\nu})}{\sqrt{1-\q_2}} \nonumber \\
\bar{A} & = & \sqrt{\c_2(1-\q_2)} \nonumber \\
\bar{B}_1 & = & \h_1^{(3)}\sqrt{\c_2\q_2} + \sqrt{\c_2}\bar{d}\nu_1\nonumber \\
\bar{C} & = & 4\gamma \nonumber \\
\bar{F} & = & \c_2\bar{d}\sqrt{1-\q_2} \nonumber \\
\bar{B} & = & \frac{\bar{F}\bar{C}}{2\bar{A}}+\bar{B}_1 \nonumber \\
\bar{G} & = &\frac{\bar{B}_1^2-\bar{B}^2}{\bar{C}}\nonumber \\
\bar{Q}_x(\bar{A},\bar{B},\bar{B}_1,\bar{C},\bar{D},\bar{E},\bar{F},\bar{G}) &= & \erf\lp\frac{(-2\bar{A}^2 \bar{E} - 2 \bar{A}\bar{B} + \bar{C}\bar{E})}{\sqrt{2}\sqrt{\bar{C}} \sqrt{\bar{C} - 2\bar{A}^2}}\rp -
\erf\lp\frac{(-2\bar{A}^2 \bar{D} - 2 \bar{A}\bar{B} + \bar{C}\bar{D})}{\sqrt{2}\sqrt{\bar{C}} \sqrt{\bar{C} - 2\bar{A}^2}}\rp
\nonumber \\
\bar{I}_{x1}(\bar{A},\bar{B},\bar{B}_1,\bar{C},\bar{D},\bar{E},\bar{F},\bar{G}) & = &  \frac{\sqrt{\bar{C}} e^{-\c_2\nu + \bar{G} + \bar{B}^2/(\bar{C} - 2 \bar{A}^2)} }{2 \sqrt{\bar{C} - 2 \bar{A}^2}}(2-\bar{Q}_x) \nonumber \\
\bar{I}_{x2}(\bar{D},\bar{E},\bar{F}) & = & e^{\frac{\bar{F}^2}{2}} \lp \frac{1}{2}\erfc\lp \frac{\bar{D}-\bar{F}}{\sqrt{2}}\rp - \frac{1}{2}\erfc\lp \frac{\bar{E}-\bar{F}}{\sqrt{2}}\rp\rp,
 \end{eqnarray}
 and after solving the remaining integrals obtain
\begin{eqnarray} \label{eq:negprac24a3a0a0}
  \mE_{{\mathcal U}_2}
{\cal Z}_1^{(2)}=    \bar{I}_{x1}(\bar{A},\bar{B},\bar{B}_1,\bar{C},\bar{D},\bar{E},\bar{F},\bar{G})+\bar{I}_{x2}(\bar{D},\bar{E},\bar{F}),
    \end{eqnarray}
and
\begin{eqnarray} \label{eq:negprac24a3a0}
\mE_{{\mathcal U}_3}\log\lp \mE_{{\mathcal U}_2}
{\cal Z}_1^{(2)}\rp=   \mE_{{\mathcal U}_3} \log\lp \bar{I}_{x1}(\bar{A},\bar{B},\bar{B}_1,\bar{C},\bar{D},\bar{E},\bar{F},\bar{G})+\bar{I}_{x2}(\bar{D},\bar{E},\bar{F})  \rp.
    \end{eqnarray}
Combining (\ref{eq:negprac24}), (\ref{eq:negprac24a3}), and (\ref{eq:negprac24a3a0}), one then finds
\begin{eqnarray}\label{eq:negprac25}
    \bar{\psi}_{rd}^{(2)}
 & = &    \frac{r_2^2}{2}
(1-\p_2\q_2)\c_2 -  \frac{\sigma^2}{2}
(1-\q_2)\c_2
- \nu_1 c_1
  -\gamma  r_1^2
     -\nu \beta
  \nonumber \\
  & & - \frac{1-\beta}{\c_2} \mE_{{\mathcal U}_3} \log\lp \bar{I}_{x1}(\bar{A},\bar{B},\bar{B}_1,\bar{C},\bar{D},\bar{E},\bar{F},\bar{G})+\bar{I}_{x2}(\bar{D},\bar{E},\bar{F})  \rp \nonumber \\
  & &
  - \frac{\beta}{\c_2} \mE_{{\mathcal U}_3} \log\lp \bar{I}_{x1}(\bar{A},\bar{B},\bar{B}_1,\bar{C},\bar{D},\bar{E},\bar{F},\bar{G})+\bar{I}_{x2}(\bar{D},\bar{E},\bar{F})  \rp
\nonumber \\
 & &  +  \gamma_{sq}r_2
+ \alpha r_2 \Bigg(\Bigg. \frac{1}{2\c_2r_2} \log \lp \frac{2\gamma_{sq}+\c_2r_2(1-\p_2)}{2\gamma_{sq}} \rp  +  \frac{\p_2}{2(2\gamma_{sq}+\c_2r_2(1-\p_2))}   \Bigg.\Bigg).
    \end{eqnarray}
The above is then sufficient to create the following system
\begin{eqnarray}\label{eq:dersystem1}
 \frac{d\bar{\psi}_{rd}^{(2)} }{d\p_2}
 =
 \frac{d\bar{\psi}_{rd}^{(2)} }{d\q_2}
 =
 \frac{d\bar{\psi}_{rd}^{(2)} }{d\c_2}
 =
 \frac{d\bar{\psi}_{rd}^{(2)} }{d\gamma_{sq}}
 =
 \frac{d\bar{\psi}_{rd}^{(2)} }{d\gamma}
 =
  \frac{d\bar{\psi}_{rd}^{(2)} }{d\nu_1}
  =
   \frac{d\bar{\psi}_{rd}^{(2)} }{d\nu}=0,
\end{eqnarray}
and obtain  $\hat{\p}_2$, $\hat{\q}_2$, $\hat{\c}_2$, $\hat{\gamma}_{sq}$, $\hat{\gamma}$, $\hat{\nu}_1$, and $\hat{\nu}$ as the solutions of the system.  Remarkable closed form relations between the optimal values of the parameters turn out to hold and provide a gigantic help in handling numerical evaluations.

\begin{corollary}
  \label{cor:closedformrel1}
  Assume the setup of Theorem \ref{thm:negthmprac1} (and implicitly Theorem\ref{thm:thmsflrdt1} and Corollary \ref{cor:cor1}). Set $r=2$. Then
\begin{eqnarray}\label{eq:clform1}
\hat{\gamma}_{sq} & = & \frac{1}{2}\frac{1-\hat{\p}_2}{1-\hat{\q}_2}\sqrt{\frac{\hat{\q}_2}{\hat{\p}_2}}\sqrt{\alpha} \nonumber \\
 \hat{\c}_2 & = & \frac{1}{r_2} \lp \frac{1}{1-\hat{\p}_2} \sqrt{\frac{\hat{\p}_2}{\hat{\q}_2}} - \frac{1}{1-\hat{\q}_2} \sqrt{\frac{\hat{\q}_2}{\hat{\p}_2}}\rp \sqrt{\alpha}.
\end{eqnarray}
\end{corollary}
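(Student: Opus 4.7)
The plan is to read off the two claimed identities directly from a pair of stationarity conditions in \eqref{eq:dersystem1}, using the observation that $\hat{\gamma}_{sq}$ and $\hat{\p}_2$ enter $\bar{\psi}_{rd}^{(2)}$ in a very restricted way. Inspecting \eqref{eq:negprac25}, the only terms depending on $\p_2$ or $\gamma_{sq}$ are the bilinear coupling term $\tfrac{r_2^2}{2}(1-\p_2\q_2)\c_2$ and the spherical contribution
\[
\gamma_{sq}r_2 + \alpha r_2\Bigg(\frac{1}{2\c_2 r_2}\log\!\frac{S}{2\gamma_{sq}} + \frac{\p_2}{2S}\Bigg),\qquad S\triangleq 2\gamma_{sq}+\c_2 r_2(1-\p_2),
\]
because the peripheral $\mE_{\mathcal U_3}\log$ pieces depend only on $\q_2,\c_2,\gamma,\nu_1,\nu$ (and the constants $\sigma,\bar d$). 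So the two equations I need are $\partial\bar{\psi}_{rd}^{(2)}/\partial\p_2=0$ and $\partial\bar{\psi}_{rd}^{(2)}/\partial\gamma_{sq}=0$, and these couple only to $\q_2$ through the coupling term -- the other unknowns drop out.

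First I would differentiate in $\p_2$. The coupling term contributes $-\tfrac{r_2^2\q_2\c_2}{2}$, while a short computation using $\partial S/\partial\p_2=-\c_2 r_2$ gives $\partial f/\partial\p_2 = \p_2\c_2 r_2/(2S^2)$ for the bracketed spherical factor $f$. Setting the total to zero yields the clean identity $S^2 = \alpha\,\p_2/\q_2$, i.e.
\[
2\hat{\gamma}_{sq} + \hat{\c}_2 r_2(1-\hat{\p}_2) = \sqrt{\alpha}\,\sqrt{\hat{\p}_2/\hat{\q}_2}.
\]
Next I would differentiate in $\gamma_{sq}$. Using $\partial S/\partial\gamma_{sq}=2$, one gets $\partial f/\partial\gamma_{sq} = -(1-\p_2)/(2\gamma_{sq}S) - \p_2/S^2$, so the stationary equation is $1 - \alpha(1-\p_2)/(2\gamma_{sq}S) - \alpha\p_2/S^2=0$. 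Substituting $\alpha\p_2/S^2=\q_2$ from the previous step reduces this to $2\gamma_{sq}S = \alpha(1-\p_2)/(1-\q_2)$, which combined with $S=\sqrt{\alpha\p_2/\q_2}$ immediately gives
\[
\hat{\gamma}_{sq} \;=\; \tfrac{1}{2}\,\frac{1-\hat{\p}_2}{1-\hat{\q}_2}\sqrt{\frac{\hat{\q}_2}{\hat{\p}_2}}\,\sqrt{\alpha},
\]
which is the first claimed relation.

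Finally, I would solve for $\hat{\c}_2$ from the definition of $S$: $\hat{\c}_2 = (S - 2\hat{\gamma}_{sq})/(r_2(1-\hat{\p}_2))$. Plugging in $S=\sqrt{\alpha}\sqrt{\hat{\p}_2/\hat{\q}_2}$ and the just-derived formula for $\hat{\gamma}_{sq}$ yields
\[
\hat{\c}_2 = \frac{\sqrt{\alpha}}{r_2(1-\hat{\p}_2)}\Bigg(\sqrt{\frac{\hat{\p}_2}{\hat{\q}_2}} - \frac{1-\hat{\p}_2}{1-\hat{\q}_2}\sqrt{\frac{\hat{\q}_2}{\hat{\p}_2}}\Bigg) = \frac{1}{r_2}\Bigg(\frac{1}{1-\hat{\p}_2}\sqrt{\frac{\hat{\p}_2}{\hat{\q}_2}} - \frac{1}{1-\hat{\q}_2}\sqrt{\frac{\hat{\q}_2}{\hat{\p}_2}}\Bigg)\sqrt{\alpha},
\]
which is the second claimed relation. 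There is no real obstacle here -- the whole argument is a two-equation algebraic elimination, and the ``hard part'' is just noticing that the formidable peripheral integrals in \eqref{eq:negprac25} are irrelevant to these two stationarity equations, so the system decouples and the closed forms fall out immediately.
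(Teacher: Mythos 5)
Your derivation is correct, and I verified it both symbolically and against the numerics of Table \ref{tab:tab2}: with $\alpha=0.057$, $\hat{\p}_2=0.5634$, $\hat{\q}_2=0.2123$, $r_2\approx 1.0001$, your two formulas reproduce $\hat{\gamma}_{sq}=0.0406$ and $\hat{\c}_2=0.7047$ exactly. The key observation -- that $\p_2$ and $\gamma_{sq}$ enter $\bar{\psi}_{rd}^{(2)}$ only through the coupling term $\tfrac{r_2^2}{2}(1-\p_2\q_2)\c_2$ and the explicitly integrated spherical block, so the two stationarity equations decouple from the peripheral $\mE_{{\mathcal U}_3}\log$ integrals -- is exactly what makes the closed forms possible, and your algebra ($S^2=\alpha\p_2/\q_2$ from the $\p_2$-equation, then $2\gamma_{sq}S=\alpha(1-\p_2)/(1-\q_2)$ from the $\gamma_{sq}$-equation, then back-substitution into the definition of $S$ for $\c_2$) is clean and error-free. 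The difference from the paper is one of presentation rather than substance: the paper's proof is a one-line pointer to equations (87) and (90) of an external reference, asserting that the derivation carries over ``with minimal cosmetic sign and $r_2$ adjustments,'' whereas you have written out the derivation self-contained within the objects of this paper. Since the cited derivation almost certainly proceeds by the same stationarity-and-elimination argument, your route is essentially the paper's route made explicit; what it buys is a verifiable, standalone proof that does not require the reader to consult and adapt the companion paper.
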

\begin{proof}
Both equalities follow as analogues to (87) and (90) in \cite{Stojnicnegsphflrdt23} after line by line repetitions of their derivations with  minimal cosmetic sign and $r_2$ adjustments.
\end{proof}

 We present the concrete numerical values in Table \ref{tab:tab2}. The considered scenario is the same as in Table \ref{tab:tab1}, i.e., we again have $\alpha=0.057$, $\beta=0.01$, $c_1=0.5$, $r_1=1$, and $\sigma=\sqrt{0.00025}$. To enable a systematic view as to how the lifting machinery is progressing, we in parallel (show the results for the first level (1-sfl) from Table \ref{tab:tab1} and also for the partial second level (2-spf, obtained on the second level with $\hat{\p}_2=\hat{\q}_2=0$). As discussed in \cite{Stojnicflrdt23,Stojnicnegsphflrdt23,Stojnichopflrdt23}, the first level results correspond to what would be obtained utilizing the \emph{plain} RDT from, e.g., \cite{StojnicCSetam09,StojnicICASSP10var,StojnicUpper10,StojnicRegRndDlt10}, whereas the partial second level ones correspond to what would be obtained through the utilization of the \emph{partial} RDT from, e.g., \cite{StojnicLiftStrSec13}.
\begin{table}[h]
\caption{$2$-sfl RDT parameters; $\ell_0$ ML decoding ;  $\hat{\c}_1\rightarrow 1$; $n,\beta\rightarrow\infty$; $c_1=0.5$, $r_1=1$}\vspace{.1in}
\centering
\def\arraystretch{1.2}
{\small
\begin{tabular}{||l||c|c|c|c||c|c||c|c||c||c||}\hline\hline
 \hspace{-0in}$r$-sfl RDT                                             & $\hat{\gamma}_{sq}$   & $\hat{\gamma}$   & $\hat{\nu_1}$   & $\hat{\nu}$ &  $\hat{\p}_2$ & $\hat{\p}_1$     & $\hat{\q}_2$  & $\hat{\q}_1$ &  $\hat{\c}_2$    & $\xi_1^{(r)}(c_1,r_1)$  \\ \hline\hline
$1$-sfl RDT                                      & $0.1194$  & $0.1673$  & $-0.2711$ & $11.5934$ & $0$  & $\rightarrow 1$   & $0$ & $\rightarrow 1$
 &  $\rightarrow 0$  & \bl{$\mathbf{2.5149}$}
  \\ \hline\hline
$2$-spl RDT                                      & $0.0366$  & $0.3358$  & $-0.6871$ & $8.1128$ & $0$  & $\rightarrow 1$   & $0$ & $\rightarrow 1$
 &  $0.5913$  & \bl{$\mathbf{3.2935}$}
  \\ \hline\hline
$2$-sfl RDT                                      & $0.0406$  & $0.3580$  & $-0.7005$ & $8.4985$ & $0.5634$  & $\rightarrow 1$   & $0.2123$ & $\rightarrow 1$
 &  $0.7047$  & \bl{$\mathbf{3.6926}$}
  \\ \hline\hline
  \end{tabular}
  }
\label{tab:tab2}
\end{table}
Looking at the results from the table, a substantial improvement observed already on the second partial level is then further strengthened on the full second level.  Moreover, analogously to what we have done in the previous section when considering 1-sfl, we in Figure \ref{fig:fig3} supplement the 2-sfl $\xi_1(c_1,r_1)$ results from Table \ref{tab:tab2} obtained for a particular $c_1$ with the corresponding ones obtained for a much wider range of $c_1$.
  \begin{figure}[h]
\centering
\centerline{\includegraphics[width=1\linewidth]{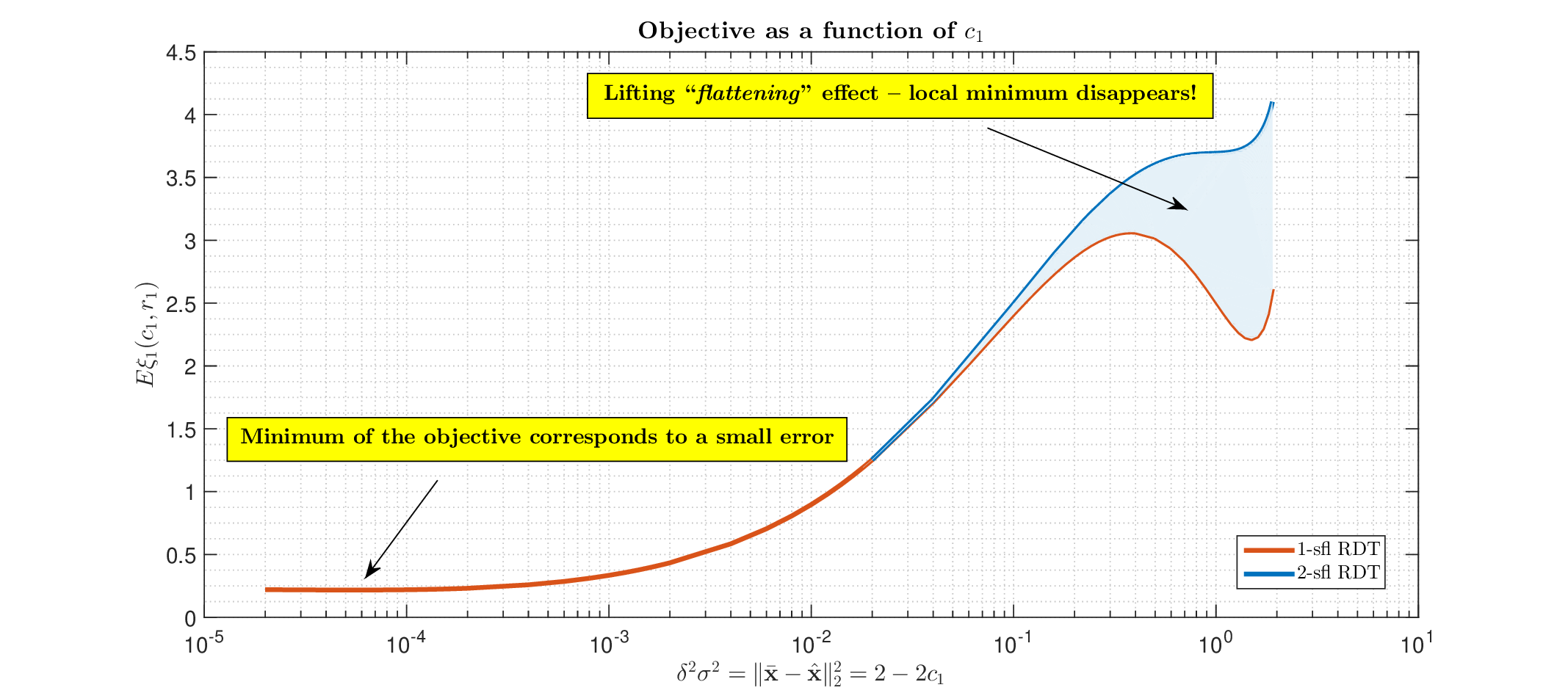}}
\caption{$\xi_1^{(2)}(c_1,r_1)$ as a function of $c_1$; $\alpha=0.057$, $\beta=0.01$, $\sigma=\sqrt{0.00025}$, and $r_1=1$.}
\label{fig:fig3}
\end{figure}
 From Figure \ref{fig:fig3} one observes that the objective curve is substantially lifted with the lift being particularly pronounced in the zone where the unwanted local minimum was present on the first level. Moreover, the lifting in that zone is further accompanied by a ``\emph{flattening}'' effect where the curve is flattened just enough that the local minimum completely disappears and instead becomes a \emph{point of infliction}. Not only does this make the overall $\ell_0$ ML decoding picture more logical (by effectively untangling the somewhat paradoxical conundrum from the first level), but it also has a very strong algorithmic consequences as well. Namely, assuming sufficiently large system's dimensions that ensure concentrations, any  $\ell_0$ norm preserving descending algorithm should actually converge to a global optimum and ultimately solve (\ref{eq:ex7}). This in turn, as discussed in \cite{Stojnicclupspreg20}, enables one to provide a recovery process with the best possible error within the ML context (see also e.g., \cite{WuV12a,WuV12b,WuV11} for analogous considerations under assumed prior's knowledge and corresponding Bayesian inference).
  \begin{figure}[h]
\centering
\centerline{\includegraphics[width=1\linewidth]{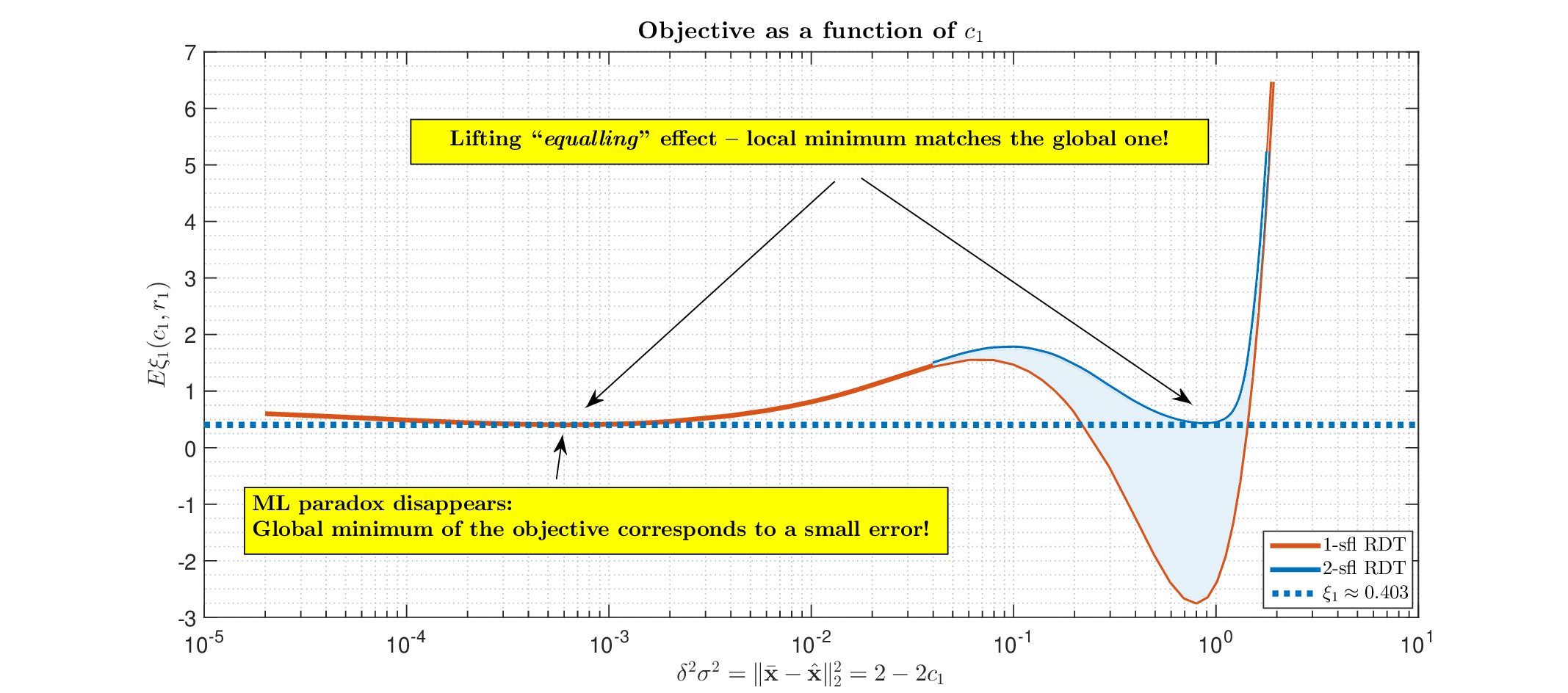}}
\caption{$\xi_1^{(2)}(c_1,r_1)$ as a function of $c_1$; $\alpha=0.563$, $\beta=0.4$, $\sigma=\sqrt{0.00025}$, and $r_1=1$.}
\label{fig:fig4}
\end{figure}

We complement the results from Figure \ref{fig:fig3} with the one shown in Figure \ref{fig:fig4} which highlight a different type of lifting phenomenon. In particular, Figure \ref{fig:fig4} shows the 2-sfl analogue results for the corresponding 1-sfl from Figure \ref{fig:fig2}. As in Figure \ref{fig:fig3}, the objective curve is again substantially lifted. This time though, one observes the lifting ``\emph{equalling}'' effect with the curve being lifted just enough that the unwanted global minimum (corresponding to a very large error) becomes a local one. While the notion that small objective should produce small error might be reaffirmed, the existence of the local minimum corresponding to a large recovery error is still unwanted. Two scenarios are possible: \textbf{\emph{(i)}} higher levels of lifting will eventually flatten the curve and local minimum will disappear; or \textbf{\emph{(ii)}} the large error paradox observed on the first level is in fact a natural structural property of the underlying problem that may or may not be present and its presence is directly related to the system parameters. As the results of the next section highlight, it is the second scenario that is of prevalent interest. In particular, the choice of $\alpha$ and $\beta$  (and $\sigma$) actually governs: \textbf{\emph{(1)}} whether or not multiple objective minima exist; and \textbf{\emph{(2)}} whether or not the unwanted ones (corresponding to a large recovery error) are actually global ones as well.
For example, this choice in Figure \ref{fig:fig4} is such that a favorable, small error, $\ell_0$ ML solution exists, but it might be unreachable by generic descending $\ell_0$ norm preserving  algorithms  ($d\ell_0$). Consequently, in such scenarios distinguishing the following three problem ``phases'' (determined by the system parameters $\alpha,\beta,$ and $\sigma$) positions itself as of critical algorithmic importance: \textbf{\emph{(i)}} $\ell_0$ ML recovery/decoding produces large error and is of no use; \textbf{\emph{(ii)}} the recovery error achievable by the $\ell_0$ ML is small but generically unachievable via $d\ell_0$; and \textbf{\emph{(iii)}} residual ML error is small and any descending $d\ell_0$ can achieve it. A similar phase-transitioning behavior was also observed in closely related recovery problems viewed through the prism of the Bayesian inference, see, e.g., \cite{KMSSZ12,KMSSZ12a,BarbierKMMZ18,ReevPfis2016,ReevesG12,ReevesG13}.

In the following sections we will discuss the algorithmic aspects in more details. Before getting there, we in the very next section discuss the third level of lifting. As mentioned above, what we have already presented contains all the main ingredients and is sufficient to proceed with the algorithmic discussion. It is, however, useful to first double check if any further substantial improvements are possible on the third level of lifting.

\subsubsection{$r=3$ -- third level of lifting}
\label{sec:thirdlev}

For $r=3$, we have that $\hat{\p}_1\rightarrow 1$ and $\hat{\q}_1\rightarrow 1$  as well as  $\hat{\p}_{r+1}=\hat{\p}_{4}=\hat{\q}_{r+1}=\hat{\q}_{4}=0$. After setting
\begin{eqnarray}\label{eq:3negpraccalz}
{\cal Z}_0^{(3)} & = & e^{-\c_2   \min \lp- \frac{\lp \sqrt{1-\q_2}\h_{i}^{(2)} + \sqrt{\q_2-\q_3}\h_{i}^{(3)}+ \sqrt{\q_3}\h_{i}^{(4)}
     \rp^2}{4\gamma}+\nu,0\rp } \nonumber \\
{\cal Z}_1^{(3)}& = &  e^{\c_2 \lp \lp \sqrt{1-\q_2}\h_{i}^{(2)} + \sqrt{\q_2-\q_3}\h_{i}^{(3)}+ \sqrt{\q_3}\h_{i}^{(4)}\rp\bar{\x}_i\sqrt{n} -  \min \lp- \frac{\lp \sqrt{1-\q_2}\h_{i}^{(2)} + \sqrt{\q_2-\q_3}\h_{i}^{(3)}
 + \sqrt{\q_3}\h_{i}^{(4)}
  +\nu_1\bar{\x}\sqrt{n}   \rp^2}{4\gamma}+\nu,0\rp \rp },\nonumber \\
\end{eqnarray}
we further write analogously to (\ref{eq:negprac24}),
\begin{align}\label{eq:3negprac24}
    \bar{\psi}_{rd}^{(3)}    & =   \frac{r_2^2}{2}
(1-\p_2\q_2)\c_2+ \frac{r_2^2}{2}
(\p_2\q_2-\p_3\q_3)\c_3
- \frac{\sigma^2}{2}
(1-\q_2)\c_2+ \frac{\sigma^2}{2}
(\q_2-\q_3)\c_3
- \nu_1 c_1
  -\gamma  r_1^2
     -\nu \beta
  \nonumber \\
& \quad - \frac{1-\beta}{\c_3}\mE_{{\mathcal U}_4}\log  \lp \mE_{{\mathcal U}_3} \lp \mE_{{\mathcal U}_2}
{\cal Z}_0^{(3)}\rp^{\frac{\c_3}{\c_2}}  \rp
 - \frac{1}{n\c_3} \sum_{i=1}^{k}\mE_{{\mathcal U}_4}\log   \lp \mE_{{\mathcal U}_3} \lp \mE_{{\mathcal U}_2}
{\cal Z}_1^{(3)}
\rp^{\frac{\c_3}{\c_2}} \rp
\nonumber \\
& \quad  +  \gamma_{sq}r_2 - \frac{\alpha}{\c_3}\mE_{{\mathcal U}_4}\log\lp \mE_{{\mathcal U}_3} \lp \mE_{{\mathcal U}_2} e^{-\c_2r_2\frac{\lp\sqrt{1-\p_2}\u_1^{(2,2)} +\sqrt{\p_2-\p_3}\u_1^{(2,3)}+\sqrt{\p_3}\u_1^{(2,4)} \rp^2}{4 \gamma_{sq}}}\rp^{\frac{\c_3}{\c_2}}\rp \nonumber \\
& =   \frac{r_2^2}{2}
(1-\p_2\q_2)\c_2+ \frac{r_2^2}{2}
(\p_2\q_2-\p_3\q_3)\c_3
- \frac{\sigma^2}{2}
(1-\q_2)\c_2+ \frac{\sigma^2}{2}
(\q_2-\q_3)\c_3
- \nu_1 c_1
  -\gamma  r_1^2
     -\nu \beta
  \nonumber \\
& \quad - \frac{1-\beta}{\c_3}\mE_{{\mathcal U}_4}\log  \lp \mE_{{\mathcal U}_3} \lp \mE_{{\mathcal U}_2}
{\cal Z}_0^{(3)}
     \rp^{\frac{\c_3}{\c_2}}  \rp
  - \frac{1}{n\c_3} \sum_{i=1}^{k}\mE_{{\mathcal U}_4}\log   \lp \mE_{{\mathcal U}_3} \lp \mE_{{\mathcal U}_2}
{\cal Z}_1^{(3)}
\rp^{\frac{\c_3}{\c_2}} \rp
 \nonumber \\
  & \quad
 +\gamma_{sq}r_2  +\alpha r_2\Bigg(\Bigg. \frac{1}{2\c_2 r_2} \log \lp \frac{2\gamma_{sq}+\c_2 r_2(1-\p_2)}{2\gamma_{sq}} \rp
   \nonumber \\
& \quad
  +\frac{1}{2\c_3 r_2} \log \lp \frac{2\gamma_{sq}+\c_2 r_2(1-\p_2)+\c_3 r_2(\p_2-\p_3)}{2\gamma_{sq}+\c_2 r_2(1-\p_2)} \rp  +  \frac{\p_3}{2(2\gamma_{sq}+\c_2 r_2(1-\p_2)+\c_3 r_2(\p_2-\p_3))}   \Bigg.\Bigg),
    \end{align}
where the last term on the right hand side of the first equality is handled as the corresponding quantity in \cite{Stojnicnegsphflrdt23}. Recalling again on $\bar{\x}_i=\frac{1}{\sqrt{k}}$,  we first set
\begin{eqnarray}\label{eq:3negprac24a0a0}
 D^{(3)} & = & \frac{(-\h_1^{(3)}\sqrt{\q_2-\q_3} - \h_1^{(4)}\sqrt{\q_3}-2\sqrt{\gamma\nu})}{\sqrt{1-\q_2}} \nonumber \\
 E^{(3)} & = & \frac{(-\h_1^{(3)}\sqrt{\q_2-\q_3} - \h_1^{(4)}\sqrt{\q_3} +2\sqrt{\gamma\nu})}{\sqrt{1-\q_2}} \nonumber \\
A^{(3)} & = & \sqrt{\c_2(1-\q_2)} \nonumber \\
B^{(3)} & = & \h_1^{(3)}\sqrt{\c_2(\q_2-\q_3)} + \h_1^{(4)}\sqrt{\c_2\q_3}\nonumber \\
C^{(3)} & = & 4\gamma,
 \end{eqnarray}
and after solving the remaining integrals obtain
\begin{eqnarray} \label{eq:3negprac24a3a0}
  \mE_{{\mathcal U}_2}
{\cal Z}_0^{(2)} =     I_{x1}(A^{(3)},B^{(3)},C^{(3)},D^{(3)},E^{(3)})+I_{x2}(D^{(3)},E^{(3)}),
    \end{eqnarray}
and
\begin{eqnarray} \label{eq:3negprac24a3}
\mE_{{\mathcal U}_4}\log \lp  \mE_{{\mathcal U}_3} \lp \mE_{{\mathcal U}_2}
{\cal Z}_0^{(2)}\rp^{\frac{\c_3}{\c_2}} \rp
=
 \mE_{{\mathcal U}_4} \log \lp \mE_{{\mathcal U}_3} \lp   I_{x1}(A^{(3)},B^{(3)},C^{(3)},D^{(3)},E^{(3)})+I_{x2}(D^{(3)},E^{(3)}) \rp^{\frac{\c_3}{\c_2}} \rp,
    \end{eqnarray}
with $I_{x1}(\cdot)$ and $I_{x2}(\cdot)$  as in (\ref{eq:negprac24a0a0}). We continue as in the previous section and  analogously  set
\begin{eqnarray}\label{eq:3negprac24a0a0a0}
 \bar{D}^{(3)} & = & \frac{(-(\h_1^{(3)}\sqrt{\q_2-\q_3}+\h_1^{(4)}\sqrt{\q_3}+\bar{d}\nu_1) -2\sqrt{\gamma\nu})}{\sqrt{1-\q_2}} \nonumber \\
 \bar{E}^{(3)} & = & \frac{(-(\h_1^{(3)}\sqrt{\q_2-\q_3}+\h_1^{(4)}\sqrt{\q_3}+\bar{d}\nu_1) +2\sqrt{\gamma\nu})}{\sqrt{1-\q_2}} \nonumber \\
\bar{A}^{(3)} & = & \sqrt{\c_2(1-\q_2)} \nonumber \\
\bar{B}_1^{(3)} & = & \h_1^{(3)}\sqrt{\c_2(\q_2-\q_3)} +  \h_1^{(4)}\sqrt{\c_2\q_3} + \sqrt{\c_2}\bar{d}\nu_1\nonumber \\
\bar{C}^{(3)} & = & 4\gamma \nonumber \\
\bar{F}^{(3)} & = & \c_2\bar{d}\sqrt{1-\q_2} \nonumber \\
\bar{B}^{(3)} & = & \frac{\bar{F}^{(3)}\bar{C}^{(3)}}{2\bar{A}^{(3)}}+\bar{B}_1^{(3)} \nonumber \\
\bar{G}^{(3)} & = &\frac{\lp \bar{B}_1^{(3)} \rp^2-\lp\bar{B}^{(3)} \rp^2}{\bar{C}^{(3)}},
 \end{eqnarray}
 and after solving the remaining integrals obtain
\begin{eqnarray} \label{eq:3negprac24a3a0a0}
  \mE_{{\mathcal U}_2}
{\cal Z}_1^{(2)}=    \bar{I}_{x1}(\bar{A}^{(3)},\bar{B}^{(3)},\bar{B}_1^{(3)},\bar{C}^{(3)},\bar{D}^{(3)},\bar{E}^{(3)},\bar{F}^{(3)},\bar{G})^{(3)}+\bar{I}_{x2}(\bar{D}^{(3)},\bar{E}^{(3)},\bar{F}^{(3)}),
    \end{eqnarray}
and
\begin{multline} \label{eq:3negprac24a3a0}
\mE_{{\mathcal U}_4}\log  \lp  \mE_{{\mathcal U}_3} \lp \mE_{{\mathcal U}_2}
{\cal Z}_1^{(2)}\rp^{\frac{\c_3}{\c_2}} \rp
=  \\
  \mE_{{\mathcal U}_4}\log  \lp  \mE_{{\mathcal U}_3} \lp \bar{I}_{x1}(\bar{A}^{(3)},\bar{B}^{(3)},\bar{B}_1^{(3)},\bar{C}^{(3)},\bar{D}^{(3)},\bar{E}^{(3)},\bar{F}^{(3)},\bar{G})^{(3)}+\bar{I}_{x2}(\bar{D}^{(3)},\bar{E}^{(3)},\bar{F}^{(3)})\rp^{\frac{\c_3}{\c_2}} \rp,
    \end{multline}
with $\bar{I}_{x1}(\cdot)$ and $\bar{I}_{x2}(\cdot)$  as in (\ref{eq:negprac24a0a0a0}). Combining (\ref{eq:3negprac24}), (\ref{eq:3negprac24a3}), and (\ref{eq:3negprac24a3a0}), one then finds
\begin{align}\label{eq:3negprac25}
    \bar{\psi}_{rd}^{(3)}
& =   \frac{r_2^2}{2}
(1-\p_2\q_2)\c_2+ \frac{r_2^2}{2}
(\p_2\q_2-\p_3\q_3)\c_3
- \frac{\sigma^2}{2}
(1-\q_2)\c_2+ \frac{\sigma^2}{2}
(\q_2-\q_3)\c_3
- \nu_1 c_1
  -\gamma  r_1^2
     -\nu \beta
  \nonumber \\
& \quad - \frac{1-\beta}{\c_3}
 \mE_{{\mathcal U}_4} \log \lp \mE_{{\mathcal U}_3} \lp   I_{x1}(A^{(3)},B^{(3)},C^{(3)},D^{(3)},E^{(3)})+I_{x2}(D^{(3)},E^{(3)}) \rp^{\frac{\c_3}{\c_2}} \rp     \nonumber \\
     & \quad
  - \frac{\beta}{\c_3}
 \mE_{{\mathcal U}_4}\log  \lp  \mE_{{\mathcal U}_3} \lp \bar{I}_{x1}(\bar{A}^{(3)},\bar{B}^{(3)},\bar{B}_1^{(3)},\bar{C}^{(3)},\bar{D}^{(3)},\bar{E}^{(3)},\bar{F}^{(3)},\bar{G})^{(3)}+\bar{I}_{x2}(\bar{D}^{(3)},\bar{E}^{(3)},\bar{F}^{(3)})\rp^{\frac{\c_3}{\c_2}} \rp \nonumber \\
  & \quad
 +\gamma_{sq}r_2  +\alpha r_2\Bigg(\Bigg. \frac{1}{2\c_2 r_2} \log \lp \frac{2\gamma_{sq}+\c_2 r_2(1-\p_2)}{2\gamma_{sq}} \rp
   \nonumber \\
& \quad
  +\frac{1}{2\c_3 r_2} \log \lp \frac{2\gamma_{sq}+\c_2 r_2(1-\p_2)+\c_3 r_2(\p_2-\p_3)}{2\gamma_{sq}+\c_2 r_2(1-\p_2)} \rp +  \frac{\p_3}{2(2\gamma_{sq}+\c_2 r_2(1-\p_2)+\c_3 r_2(\p_2-\p_3))}   \Bigg.\Bigg).
    \end{align}
Following further the methodology of the previous sections, one observes that the above is sufficient to create the following system
\begin{eqnarray}\label{eq:dersystem1}
 \frac{d\bar{\psi}_{rd}^{(3)} }{d\p_2}
 =
 \frac{d\bar{\psi}_{rd}^{(3)} }{d\q_2}
 =
 \frac{d\bar{\psi}_{rd}^{(3)} }{d\c_2}
 =
 \frac{d\bar{\psi}_{rd}^{(3)} }{d\p_3}
 =
 \frac{d\bar{\psi}_{rd}^{(3)} }{d\q_3}
 =
 \frac{d\bar{\psi}_{rd}^{(3)} }{d\c_3}
 =
 \frac{d\bar{\psi}_{rd}^{(3)} }{d\gamma_{sq}}
 =
 \frac{d\bar{\psi}_{rd}^{(3)} }{d\gamma}
 =
  \frac{d\bar{\psi}_{rd}^{(3)} }{d\nu_1}
  =
   \frac{d\bar{\psi}_{rd}^{(3)} }{d\nu}=0,
\end{eqnarray}
and obtain  $\hat{\p}_2$, $\hat{\q}_2$, $\hat{\c}_2$, $\hat{\p}_3$, $\hat{\q}_3$, $\hat{\c}_3$, $\hat{\gamma}_{sq}$, $\hat{\gamma}$, $\hat{\nu}_1$, and $\hat{\nu}$ as the solutions of the system.  Moreover, it turns out that the optimal values of the parameters are connected via closed form relations similar to the ones obtained on the second level. Utilizing again the derivations of \cite{Stojnicnegsphflrdt23} and minimally adjusting for the sign and $r_2$, one has the following corollary.
\begin{corollary}
  \label{cor:3closedformrel1}
  Assume the setup of Theorem \ref{thm:negthmprac1} (and implicitly Theorem\ref{thm:thmsflrdt1} and Corollary \ref{cor:cor1}). Set $r=2$. Then
\begin{eqnarray}\label{eq:3clform1}
\hat{\gamma}_{sq} & = & \frac{1}{2}\frac{1-\hat{\p}_2}{1-\hat{\q}_2}\frac{\hat{\q}_2-\hat{\q}_3}{\hat{\p}_2-\hat{\p}_3}\sqrt{\frac{\hat{\p}_3}{\hat{\q}_3}}\sqrt{\alpha} \nonumber \\
 \hat{\c}_2 & = & \frac{1}{r_2} \lp
 \frac{1}{1-\hat{\p}_2} \frac{\hat{\p}_2-\hat{\p}_3}{\hat{\q}_2-\hat{\q}_3} \sqrt{\frac{\hat{\q}_3}{\hat{\p}_3}}
 -
 \frac{1}{1-\hat{\q}_2} \frac{\hat{\q}_2-\hat{\q}_3}{\hat{\p}_2-\hat{\p}_3} \sqrt{\frac{\hat{\p}_3}{\hat{\q}_3}}
    \rp \sqrt{\alpha} \nonumber \\
 \hat{\c}_3 & = & \frac{1}{r_2} \lp
\frac{1}{\hat{\p}_2-\hat{\p}_3} \sqrt{\frac{\hat{\p}_3}{\hat{\q}_3}}
-
\frac{1}{\hat{\q}_2-\hat{\q}_3} \sqrt{\frac{\hat{\q}_3}{\hat{\p}_3}}
    \rp \sqrt{\alpha}.
\end{eqnarray}
\end{corollary}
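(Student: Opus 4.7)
The plan is to adapt the argument used to prove Corollary~\ref{cor:closedformrel1} at the second level of lifting, which itself was carried over from equations (87)--(90) of \cite{Stojnicnegsphflrdt23}. The key structural observation is that, in the expression (\ref{eq:3negprac25}) for $\bar{\psi}_{rd}^{(3)}$, the parameters $\gamma_{sq}, \p_2, \p_3$ enter only in three places: the linear term $\gamma_{sq} r_2$; the bilinear penalty $\frac{r_2^2}{2}(1-\p_2\q_2)\c_2 + \frac{r_2^2}{2}(\p_2\q_2-\p_3\q_3)\c_3$; and the closed-form spherical contribution on the last two lines of (\ref{eq:3negprac25}). The observation-dependent nested $\mE_{\mathcal U_k}$ integrals involving $I_{x1}, I_{x2}, \bar{I}_{x1}, \bar{I}_{x2}$ depend on $\q, \c, \gamma, \nu_1, \nu$ but not on $\p$ or $\gamma_{sq}$, so they drop out when any of these three derivatives is taken. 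Hence the three stationarity conditions
\[
\frac{\partial \bar{\psi}_{rd}^{(3)}}{\partial \gamma_{sq}} = \frac{\partial \bar{\psi}_{rd}^{(3)}}{\partial \p_2} = \frac{\partial \bar{\psi}_{rd}^{(3)}}{\partial \p_3} = 0
\]
decouple into a closed subsystem whose coefficients involve only $\q_2, \q_3, \c_2, \c_3, r_2, \alpha$.

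The next step is to solve this subsystem. Introducing the shorthand $A_1 = 2\gamma_{sq} + \c_2 r_2 (1-\p_2)$ and $A_2 = A_1 + \c_3 r_2 (\p_2 - \p_3)$, the spherical part becomes
$S = \frac{\alpha}{2\c_2}\log(A_1/(2\gamma_{sq})) + \frac{\alpha}{2\c_3}\log(A_2/A_1) + \alpha r_2 \p_3/(2A_2)$.
Differentiating in turn with respect to $\gamma_{sq}, \p_2, \p_3$ and pairing each derivative against the corresponding derivatives of the penalty and of $\gamma_{sq} r_2$ produces three rational equations in the triple $(2\gamma_{sq}, A_1, A_2)$. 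The same telescoping algebra used in \cite{Stojnicnegsphflrdt23} --- which exploits the fact that consecutive ratios $A_k/A_{k-1}$ can be solved as $\sqrt{\p_k/\q_k}$-style expressions --- then yields the three stated formulas for $\hat{\gamma}_{sq}, \hat{\c}_2, \hat{\c}_3$ in (\ref{eq:3clform1}) once they are re-expressed in the original variables. The remaining unknowns $\hat{\p}_2, \hat{\q}_2, \hat{\p}_3, \hat{\q}_3, \hat{\gamma}, \hat{\nu}_1, \hat{\nu}$ are then determined by the ``dual'' stationarity conditions in $\q, \c$ (and the other variables), which are coupled to the observation-dependent integrals and are handled numerically as sketched for the $r=2$ case.

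The main obstacle is the algebraic manipulation of the three-equation subsystem: although each derivative itself is straightforward, extracting the precise square-root structure requires recognizing the telescoping pattern already exploited in \cite{Stojnicnegsphflrdt23}. Beyond that, only two cosmetic adjustments distinguish the present derivation from the reference: tracking the overall sign inherited from $s=-1$ in Corollary~\ref{cor:cor1} (which propagates via $D^{(per)}(-1)$ and $D^{(sph)}(-1)$ in (\ref{eq:prac4}) and (\ref{eq:prac7})), and inserting the factor $r_2$ wherever the reference works on a unit sphere. Both propagate as trivial multiplicative rescalings throughout the derivation, which is exactly why the final closed form is structurally identical to Corollary~\ref{cor:closedformrel1} but with one additional layer of $\p, \q$ indices.
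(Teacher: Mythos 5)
Your proposal is correct and follows essentially the same route as the paper, whose proof simply defers to the line-by-line repetition of the corresponding derivations (equations (142), (145), and (147)) in \cite{Stojnicnegsphflrdt23} with the sign and $r_2$ adjustments you identify. Your added observation that the stationarity conditions in $\gamma_{sq}$, $\p_2$, $\p_3$ decouple from the observation-dependent nested integrals (since $I_{x1}$, $I_{x2}$, $\bar{I}_{x1}$, $\bar{I}_{x2}$ depend only on $\q$, $\c$, $\gamma$, $\nu_1$, $\nu$) is exactly the structural reason the closed forms exist, and correctly captures what the cited repetition amounts to.
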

\begin{proof}
 All three equalities follow as analogues to (142), (145), and (147)  in \cite{Stojnicnegsphflrdt23} after line by line repetitions of their derivations.
\end{proof}

The concrete numerical values obtained based on the above are shown in Table \ref{tab:tab3}. We maintain parallelism with Tables \ref{tab:tab1} and  \ref{tab:tab2} and again have $\alpha=0.057$, $\beta=0.01$, $c_1=0.5$, $r_1=1$, and $\sigma=\sqrt{0.00025}$.
\begin{table}[h]
\caption{$3$-sfl RDT parameters; $\ell_0$ ML decoding ;  $\hat{\c}_1\rightarrow 1$; $n,\beta\rightarrow\infty$; $c_1=0.5$, $r_1=1$}\vspace{.1in}
\centering
\def\arraystretch{1.2}
{\footnotesize
\begin{tabular}{||l||c|c|c|c||c|c|c||c|c|c||c|c||c||}\hline\hline
 \hspace{-0in}$r$-sfl                                             & $\hat{\gamma}_{sq}$   & $\hat{\gamma}$   & $\hat{\nu_1}$   & $\hat{\nu}$ &  $\hat{\p}_3$ &  $\hat{\p}_2$ & $\hat{\p}_1$     & $\hat{\q}_3$  & $\hat{\q}_2$  & $\hat{\q}_1$ &  $\hat{\c}_3$  &  $\hat{\c}_2$    & $\xi_1^{(r)}$  \\ \hline\hline
$1$-sfl                                      & $0.119$  & $0.167$  & $-0.271$ & $11.593$ & $0$  & $0$  & $\rightarrow 1$  & $0$   & $0$ & $\rightarrow 1$
 & $0$  &  $\rightarrow 0$  & \bl{$\mathbf{2.5149}$}
  \\ \hline\hline
$2$-spl                                        & $0.036$  & $0.335$  & $-0.687$ & $8.112$  & $0$  & $0$  & $\rightarrow 1$  & $0$   & $0$ & $\rightarrow 1$
 & $0$  &  $0.591$  & \bl{$\mathbf{3.2935}$}
  \\ \hline\hline
$2$-sfl                                        & $0.040$  & $0.358$  & $-0.700$ & $8.498$    & $0$  & $0.563$  & $\rightarrow 1$  & $0$   & $0.212$ & $\rightarrow 1$
 & $0$  &  $0.704$  & \bl{$\mathbf{3.6926}$}
  \\ \hline\hline
$3$-sfl                                        & $0.037$  & $0.389$  & $-0.763$ & $8.506$   & $0.557$   & $0.988$  & $\rightarrow 1$    & $0.210$ & $0.904$  & $\rightarrow 1$
 &  $0.689$  & $1.452$  & \bl{$\mathbf{3.6956}$}
  \\ \hline\hline
  \end{tabular}
  }
\label{tab:tab3}
\end{table}
This time though not much of an improvement is observed  compared to the first and second level.  Analogously to what we have done in the previous two sections, we in Figure \ref{fig:fig5} supplement the 3-sfl $\xi_1(c_1,r_1)$ results from Table \ref{tab:tab3} obtained for a particular $c_1$ with the corresponding ones obtained for a much wider range of $c_1$.
  \begin{figure}[h]
\centering
\centerline{\includegraphics[width=1\linewidth]{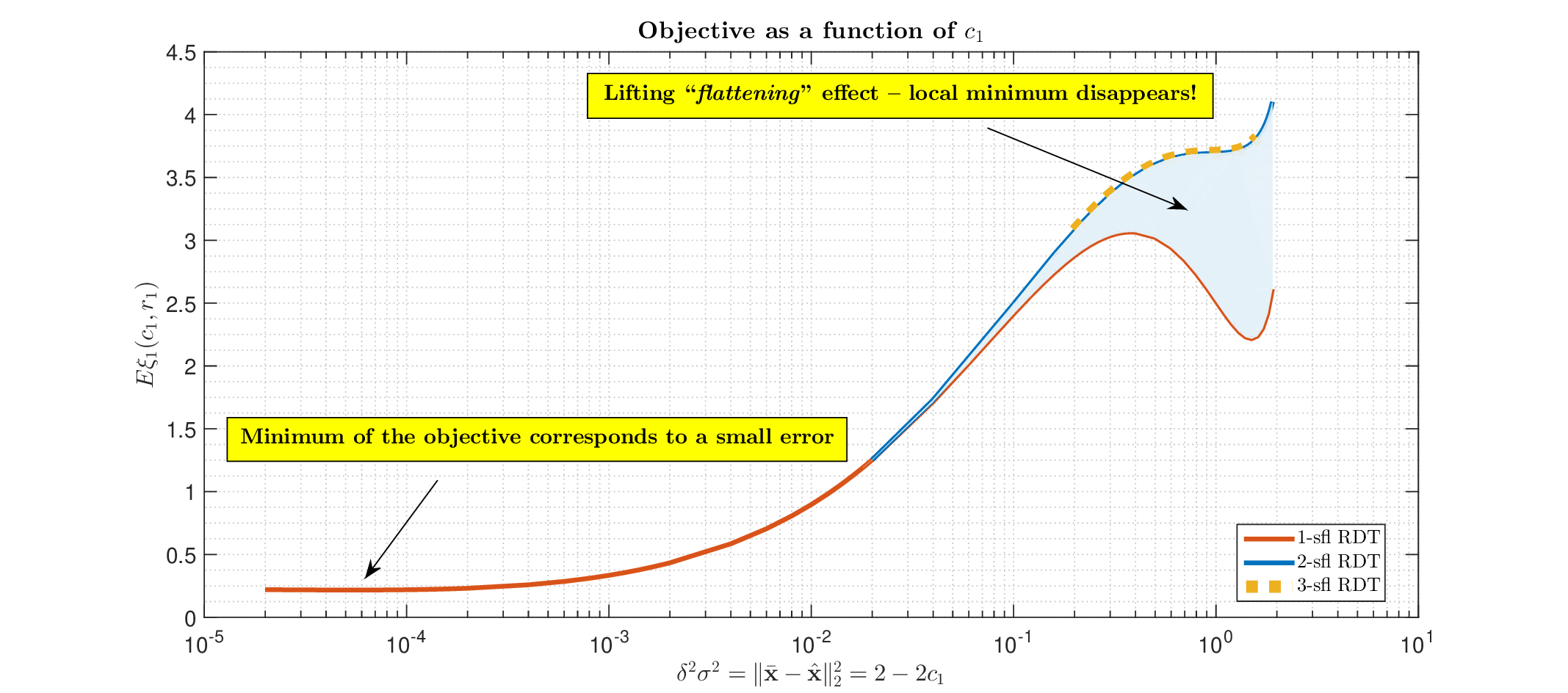}}
\caption{$\xi_1^{(2)}(c_1,r_1)$ as a function of $c_1$; $\alpha=0.057$, $\beta=0.01$, $\sigma=\sqrt{0.00025}$, and $r_1=1$.}
\label{fig:fig5}
\end{figure}
 Figure \ref{fig:fig5} shows that the trend from Table \ref{tab:tab3} continues throughout the entire $c_1$ range with  not much of an improvement being observed  (it is usually $\sim 0.1\%$ or smaller). While this heavily relies on substantial numerical evaluations (which necessarily bring imprecisions), it still suggests that, for all practical purposes, considerations on the first two levels are likely to be sufficient.
\begin{figure}[h]
\centering
\centerline{\includegraphics[width=1\linewidth]{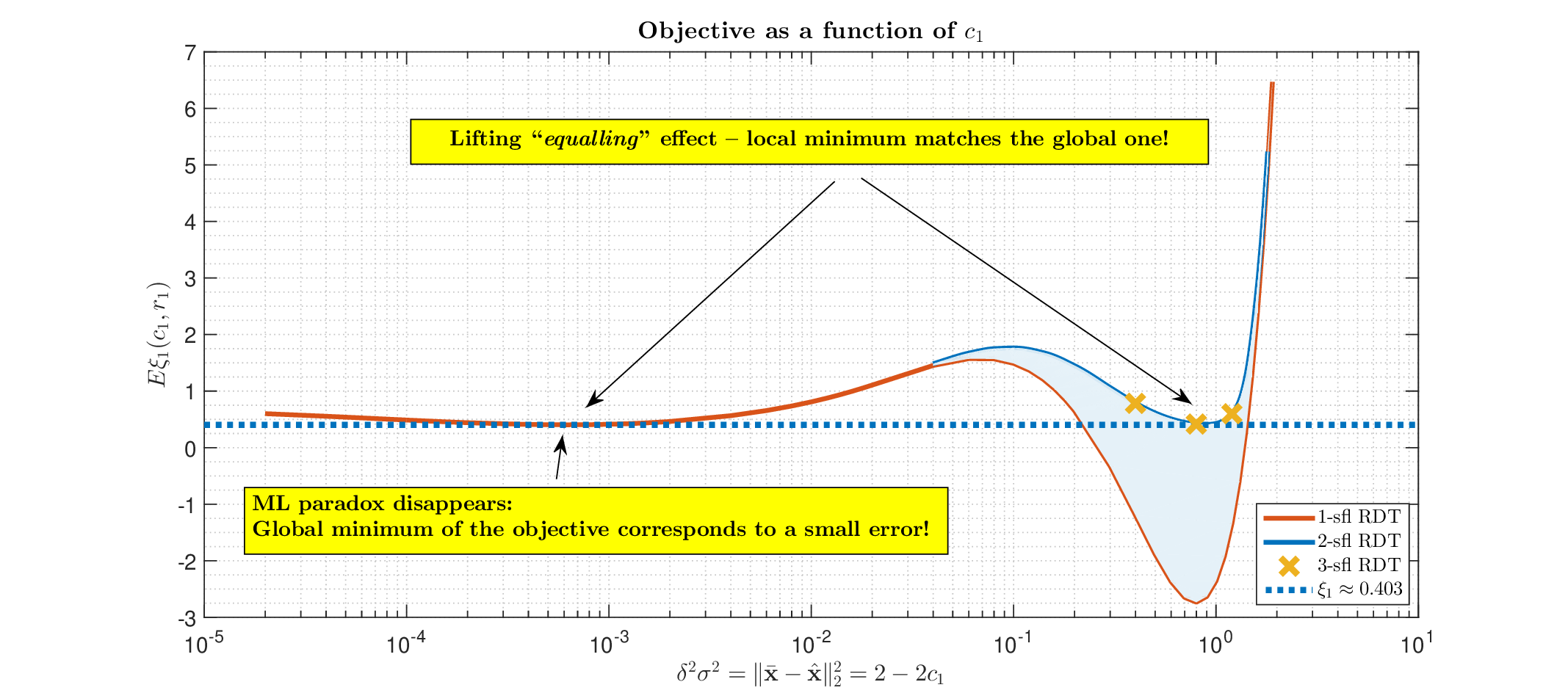}}
\caption{$\xi_1^{(2)}(c_1,r_1)$ as a function of $c_1$; $\alpha=0.563$, $\beta=0.4$, $\sigma=\sqrt{0.00025}$, and $r_1=1$.}
\label{fig:fig6}
\end{figure}
Figure \ref{fig:fig6} shows a very similar behavior for $\alpha=0.563$, $\beta=0.4$ regime as well.

\subsection{Algorithmic implications}
\label{sec:algimp}

One can utilize the presented results to precisely characterize the range of system parameters $\alpha$, $\beta$, and $\sigma$ for which: \textbf{\emph{(i)}} the $\textbf{RMSE}$ of the $\ell_0$ ML decoding is small; and \textbf{\emph{(ii)}} such small $\textbf{RMSE}$ is achievable/unachievable by any $d\ell_0$. In, particular, one considers the objective $\xi_1$ and views it as a function of $c_1$ (or implicitly $\delta=\mE(\textbf{RMSE})=\frac{\mE \|\bar{\x}-\hat{\x}\|_2}{\sigma}\rightarrow\frac{\sqrt{2-2c_1}}{\sigma}$). If $\alpha,$ $\beta$, and $\sigma$ are such that  the objective  has single local minimum, then the $\ell_0$ ML decoding is solvable by \emph{any} $d\ell_0$. If it has multiple local minima then $d\ell_0$'s generically fail to optimally solve $\ell_0$ ML decoding. In Figure \ref{fig:fig7}, we show $(\alpha,\beta)$ regions obtained utilizing the above presented machinery for $\sigma=\sqrt{0.00025}$ (as the above discussions suggested that the improvements on the higher lifting levels seem unsubstantial, we focused on implementing 2-sfl to make numerical evaluations a bit simpler). As can be seen from the figure, there are two phase transition (PT) curves: \textbf{\emph{(i)}} the lower one, denoted by $d\ell_0$ PT (\emph{descending} $\ell_0$ PT), and \textbf{\emph{(ii)}} the higher one, denoted by $a\ell_0$ PT (\emph{any} $\ell_0$ PT).  These curves delimit three regions in $(\alpha,\beta)$ space with the following features: \textbf{\emph{(1)}} in the region to the right of the $d\ell_0$ PT, the residual $\textbf{RMSE}$ of the $\ell_0$ ML decoding is small (i.e., comparable to $\sigma$) and  can be achieved by any $d\ell_0$ (we say that it is $d\ell_0$ achievable); \textbf{\emph{(2)}} in the region to the left of the $d\ell_0$ PT but to the right of the $a\ell_0$ PT, the residual $\textbf{RMSE}$ is small but it can not be achieved by a generic $d\ell_0$ (we say it is $d\ell_0$ unachievable); and \textbf{\emph{(3)}} in the region to the left of the $a\ell_0$ PT the residual $\textbf{RMSE}$ is large and $\ell_0$ ML decoding is of not much practical use. For the completeness, we mention that one can further split the third region into two subregions -- one where the objective has multiple local minima and the other where it has a single global minimum. Since in either of these cases the error is large (and the whole $\ell_0$ ML decoding concept basically useless), we skip introducing such a differentiation. We just note that in either case $d\ell_0$ actually typically finds the global minimum.

In addition to the $d\ell_0$, and $a\ell_0$ PTs, we also plot the standard $\ell_1$ PT as a general comparing benchmark.  As can be seen from the figure, $\ell_0$ ML decoding is way more powerful than the $\ell_1$. Interestingly, one observes that even $d\ell_0$ can outperform $\ell_1$ in highly under-sampled systems (small $\alpha$). This in fact is very surprising as we are unaware of any algorithmic structure that can beat $\ell_1$ in such a plain setup (unchangeable $A$ and a priori unknown deterministic $\bar{\x}$).  Even more surprising, in an almost full system ($\alpha$ close to $1$), $d\ell_0$  can even achieve  sparsity/under-sampling ratio of $1$.
\begin{figure}[h]
\centering
\centerline{\includegraphics[width=1\linewidth]{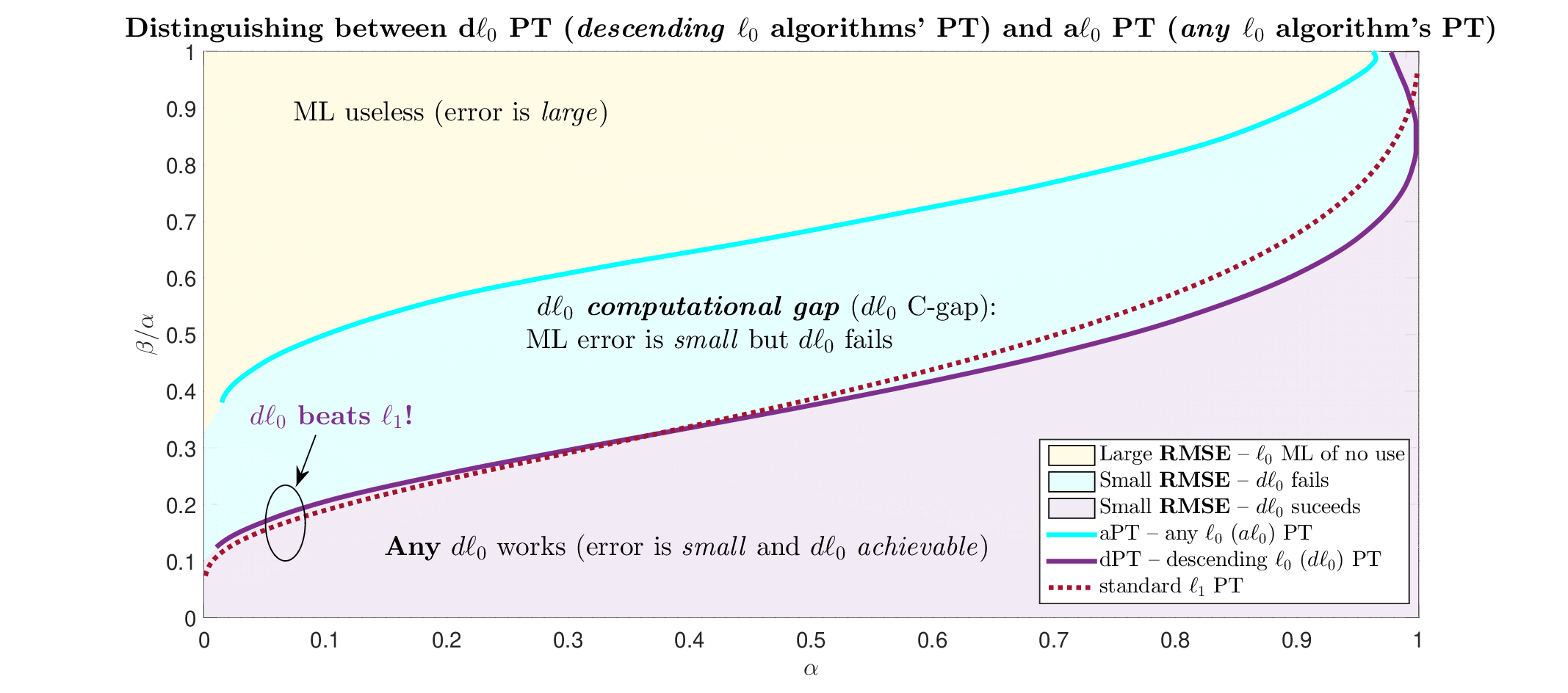}}
\caption{aPT and dPT; binary $\bar{\x}$, $\sigma=\sqrt{0.00025}$, and $r_1=1$.}
\label{fig:fig7}
\end{figure}

\subsection{Different types of signal}
\label{sec:diffsig}

To be able to obtain concrete numerical values, we in the above considerations selected $\bar{\x}_i=\frac{1}{\sqrt{k}},1\leq i\leq k$. Such a choice corresponds to a so-called binary scenario which is typically viewed  as the hardest one for generic algorithms (those that have no prior knowledge about the structure of the signal). We now supplement those results by considering an example of a statistical signal closely related to the so-called Gauss-Bernoulli signals  typically considered throughout the literature. Namely, we consider the so-called Gaussian scenario, where $\bar{\x}_i=\frac{{\cal N}(0,1)}{\sqrt{k}},1\leq i\leq k$ and all $\bar{\x}_i$ are independent of each other. It is trivial to observe that in a large dimensional contexts this makes $\|\bar{\x}\|_2\rightarrow 1$ with probability 1. Moreover, the entire machinery presented in earlier sections remains in place even when the signals are statistical. The only additional thing is that now $\bar{d}$ is a standard normal and one needs to eventually integrate over it as well. This does make the numerical evaluations a bit more challenging. Nonetheless, we in figures below show the obtained results for some interesting scenarios.

We start with Figure \ref{fig:fig8} where we take $\alpha=0.59$ and $\beta=0.4$ and keep $\sigma=\sqrt{0.00025}$. As the flattening effect is already in full power for $\beta=0.4$, one sees a dramatic improvement over the binary scenario. In particular, for the same under-sampling $\alpha$ and binary $\bar{\x}$, Figure \ref{fig:fig7} gives  $\beta\approx 0.244$. On the other hand, it is interesting to note that the results are very much comparable to the ones from \cite{KMSSZ12,KMSSZ12a} where for a similar Gaussian scenario and $\beta=0.4$ it was obtained that $\alpha\approx 0.59$. This happens despite a few key conceptual differences discussed earlier -- statistical identicalness of all $n$ $\bar{\x}$'s coordinates is now not in place and \cite{KMSSZ12,KMSSZ12a} consider Bayesian inference context (which is optimal and feasible if the true and postulated priors are identical and (for the algorithmic relevance) known beforehand). Moreover, the lack of a more profound impact of a priori available signal knowledge is also in an agreement with was hinted on in \cite{ReevesG12} while relying on the ML distortion bounds. It should also be noted that in \cite{ONOOK18} a MAP decoding similar to the one from (\ref{eq:ex7}) is considered. An entropy function, viewed as a histogram over all sparsity levels, is analyzed via replica methods and the obtained results in the Gaussian scenario are very similar to the ones from \cite{KMSSZ12,KMSSZ12a}. Furthermore, as Figure 5 in \cite{ONOOK18} and Figure 3 in \cite{BarbierLSKZ24}) show, the results from \cite{ONOOK18} fairly closely match the ones obtained for the very same noiseless Gauss-Bernoulli scenario in \cite{BarbierLSKZ23,BarbierLSKZ24}  relying on replica analysis and its connections to AMP and particularly ASP (approximate survey propagation) methods considered earlier in slightly different contexts in \cite{AKUZ19,LucibelloSL19}. As these figures also show, the impact of fully available signals prior needed in the optimal Bayes context is rather limited which is in a perfect agreement with what was hinted on in \cite{ReevesG12} and  discussed above.

Given a strong improvement over the binary case, it is of interest to shed a bit more light on this effect.
 \begin{figure}[h]
\centering
\centerline{\includegraphics[width=1\linewidth]{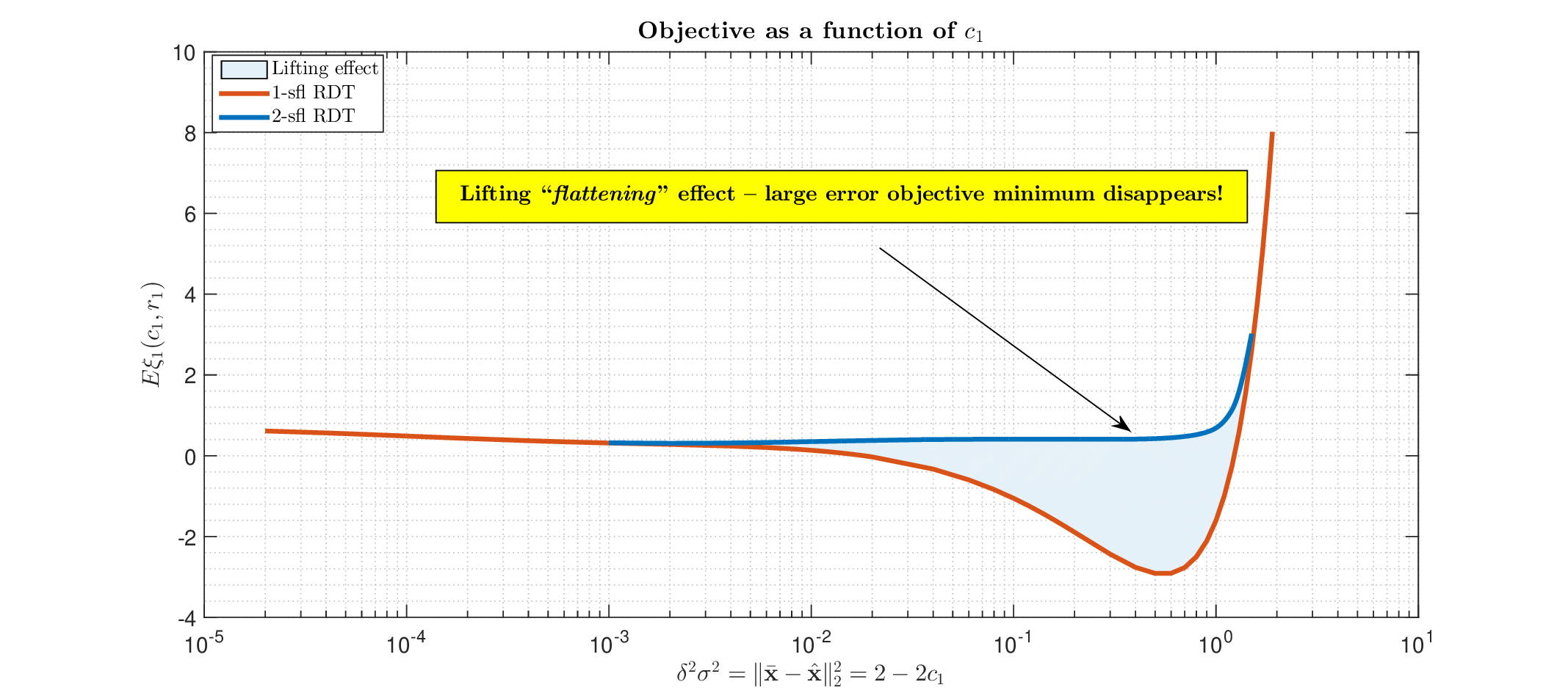}}
\caption{$\xi_1^{(2)}(c_1,r_1)$ as a function of $c_1$; $\alpha=0.59$, $\beta=0.4$, $\sigma=\sqrt{0.00025}$, and $r_1=1$; $\bar{\x}_i=\frac{{\cal N}(0,1)}{\sqrt{k}},1\leq i\leq k$.}
\label{fig:fig8}
\end{figure}
To get a bit of a feeling as to what is the source of such a dramatic change, we in Figure \ref{fig:fig9} show the 1-sfl binary and Gaussian results in parallel. As can be seen, in the binary case the 1-sfl unwanted large error dip happens when the curve is well on the rising resulting in the appearance of a strong local maximum. It is then difficult for 2-sfl to sufficiently lift the curve so that it can be flattened on the level of the local maximum. On the other hand, the Gaussian case does not face such an obstacle. While the dip still exists, it happens way before the curve starts rising and the difficult to overcome local maximum is nonexistent.
 \begin{figure}[h]
\centering
\centerline{\includegraphics[width=1\linewidth]{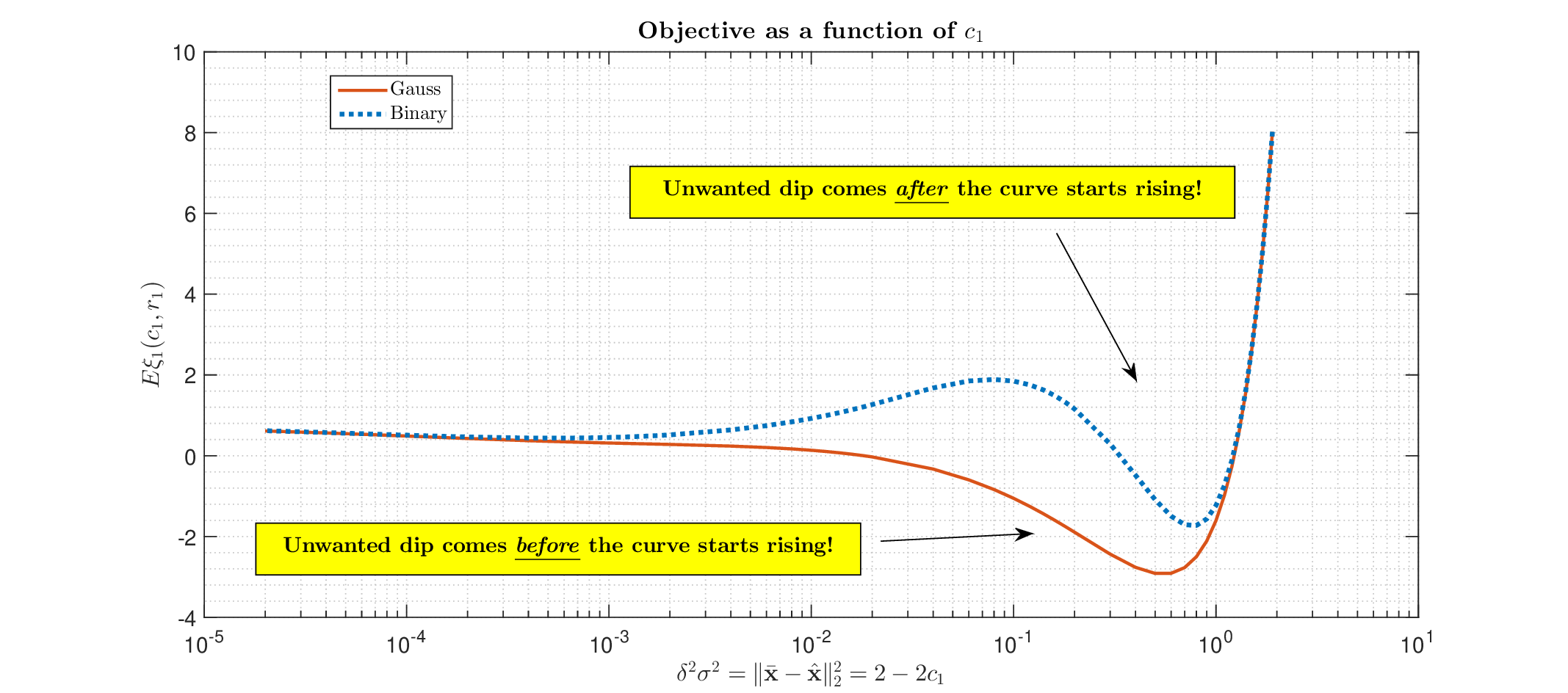}}
\caption{$\xi_1^{(2)}(c_1,r_1)$ as a function of $c_1$; $\alpha=0.59$, $\beta=0.4$, $\sigma=\sqrt{0.00025}$, and $r_1=1$; $\bar{\x}_i=\frac{{\cal N}(0,1)}{\sqrt{k}},1\leq i\leq k$.}
\label{fig:fig9}
\end{figure}

\section{Practical algorithmic aspects}
\label{sec:pracalg}

As discussed in earlier sections, the presented results have very strong consequences regarding the design of algorithmic techniques for solving $\ell_0$ ML decoding. Figure \ref{fig:fig7}, in particular, shows that depending on the values of the system parameters $\alpha$, $\beta$, and $\sigma$, the residual $\delta=\mE(\textbf{RMSE})$ can be large or small and achievable or unachievable by the descending $\ell_0$ norm preserving ($d\ell_0$) algorithms. Even though our prevalent interest in this paper is characterization of $d\ell_0$ limits of performance, we also conducted limited algorithmic tests utilizing a variant of the simplest possible $d\ell_0$. The utilized core variant is presented in Algorithm \ref{alg:algx1} (the core variant can then be repeated, i.e., restarted with different $\cA$ as many times as desired). It is basically the simplest possible bit flipping with a uniformly random bit selection (in a more adjusted optimization terminology, it is a steepest decent variant of the standard simplex methodology). Much more sophisticated \emph{simulated annealing} type of implementations can be employed though. As our prevalent interest here is the performance analysis, we provide a basic implementation just to illustrate that the presented concepts have rather significant algorithmic capabilities as well.

\begin{algorithm}[h]
\caption{Descending $\ell_0$ ($d\ell_0$)}

{\bf Input:} $\y \in \mR^m$, system matrix $A\in \mR^{m\times n}$,   $\cA$ - $k$ cardinality random subset of $\{1,2,\dots,n\}$,  maximum number of allowed iterations $i_{max}$, desired converging precision $\delta_{x,min}$.[\bl{$d\ell_0(\y,A,\cA,i_{max},\delta_{x,min})$}] \\
{\bf Output:} Estimated vector $\x_{(d\ell_0)} \in \mR^n$ [\bl{$\x_{(d\ell_0)}$}]

\label{alg:algx1}

\begin{algorithmic}[1]

\STATE Initialize the convergence gap and the iteration counter, $\delta_x\leftarrow 10^{10}$ and $i\leftarrow 1$

\STATE Set $\cB \leftarrow \{1,2,\dots,n\}\setminus\cA$

\STATE Set $\xi_1^{(0)} \leftarrow   \min_{\x\in\mS^k} \|\y-A_\cA\x\|_2$

\WHILE{$i\leq i_{max}$ and/or $\delta_x\geq\delta_{x,min}$}

\STATE Set $j$ as a random number from $\{1,2,\dots,k\}$

\FOR {$l=1:n-k$}

\STATE Set $\cC^{(l)} \leftarrow \lp \cA\setminus\cA_j\rp\cup \cB_l $

\STATE Set $a(l)\leftarrow   \min_{\x\in\mS^k} \|\y-A_{\cC^{(l)}}\x\|_2$

\ENDFOR

\STATE Set $l_{opt}\leftarrow \arg \min a $ and $\xi_{1}^{(i)}  \leftarrow \min a$

\IF {$\xi_{1}^{(i)}\leq  \xi_{1}^{(i-1)}$}

\STATE Set $\cA\leftarrow \cC^{(l_{opt})}  $

\ELSE

\STATE Set $\xi_{1}^{(i)} \leftarrow \xi_{1}^{(i-1)}$

\ENDIF

\IF {$i>2k$}

\STATE Set $\delta_x\leftarrow  |\xi_{1}^{(i)}- \xi_{1}^{(i-2k)} |$

\ENDIF

\STATE Update the iteration counter $i\leftarrow i+1$

\ENDWHILE

\STATE $\x_{(d\ell_0)}\leftarrow \x^{(i-1)}$.

\end{algorithmic}

\end{algorithm}

In Figure \ref{fig:figprac1}, we show the results obtained through Algorithm \ref{alg:algx1}. We selected $\alpha=0.2$ from a lower under-sampling regime and fairly small $n=200$. As can be seen, despite small dimensions, the simulated results are in an excellent agrement with the theoretical predictions. In fact, the agreement is beyond the best of the hopes as we are unaware of any algorithmic concept that achieves a prefect practical alignment with the theory for such small dimensions. It is also interesting to compare and see how far off are the results from the ones obtained in the related existing literature. As particularly relevant we consider the theoretical predictions obtained in the so-called \emph{ideal} ML context (where the locations of sparse components are a priori known). In such a scenario one has (see, e.g.,  \cite{Stojnicclupspreg20})
\begin{equation}\label{eq:algprac1}
  \delta_{ideal}= \mE(\textbf{RMSE}_{ideal})=\lim_{n\rightarrow \infty}\frac{\mE \|\bar{\x}-\hat{\x}_{ideal}\|_2}{\sigma}=\sqrt{\frac{\beta}{\alpha-\beta}}.
\end{equation}
Moreover, as discussed  in \cite{Stojnicclupspreg20} for CLuP algorithms in the ML context and  in, e.g., \cite{WuV12a,WuV12b,WuV11} for a  generic Bayesian context
\begin{equation}\label{eq:algprac1}
\lim_{\sigma\rightarrow 0} \delta= \lim_{\sigma\rightarrow 0}\lim_{n\rightarrow \infty}\mE (\textbf{RMSE})=\lim_{\sigma\rightarrow 0}\lim_{n\rightarrow \infty}\frac{\mE\|\bar{\x}-\hat{\x}\|_2}{\sigma}=\sqrt{\frac{\beta}{\alpha-\beta}}.
\end{equation}
As Figure shows, both the theoretical predictions and the simulated results are in an excellent agreement with these predictions as well. This basically means that $\sigma=\sqrt{0.00025}$ is sufficiently small that it almost identically emulates $\sigma\rightarrow 0$ in the binary case and is fairly close to it in the Gaussian case.
\begin{figure}[h]
\centering
\centerline{\includegraphics[width=1\linewidth]{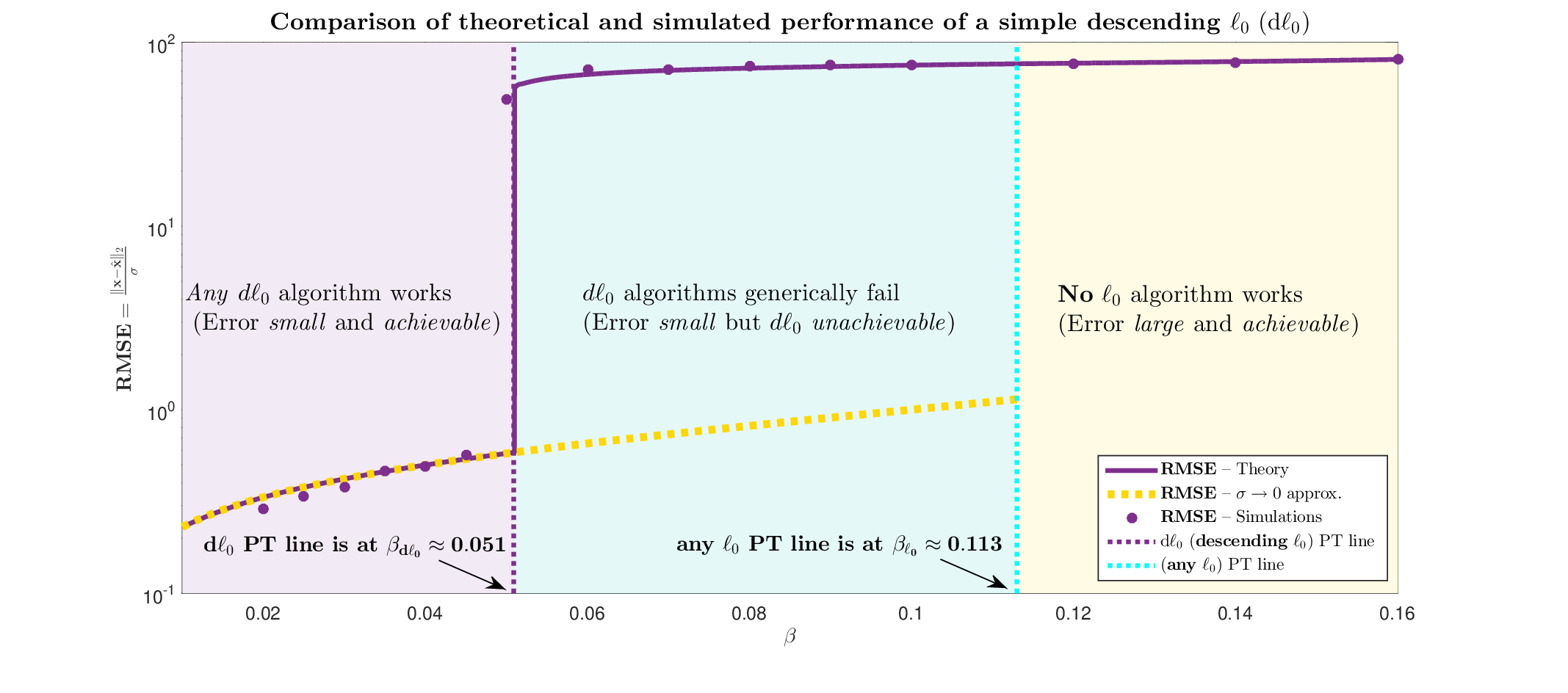}}
\caption{$\textbf{RMSE}$ as a function of sparsity ratio $\beta$; $\alpha=0.2$; $\sigma=\sqrt{0.00025}$, and $r_1=1$; $n=200$; Binary signal}
\label{fig:figprac1}
\end{figure}

In Figure \ref{fig:figprac2}, we show a comparison between the binary and the Gaussian case. We take $\alpha=0.59$ and consider even lower dimensions with  $n=100$. As can be seen, the phase transitions that simulated results give again fairly closely match the theoretically predicted ones. This again completely surpasses the best of the hopes as  we are completely unaware of any algorithm (beyond the exhaustive search) that can achieve such a performance.

We should also note that we plotted the median values of the errors as in the transition zones the error dramatically changes and the mean is not an adequate way to portray the algorithm's behavior (outside the transition zones the mean and median are pretty much identical).
The number of restarts did not exceed 100 in the transition zones of both Figure \ref{fig:figprac1} and Figure \ref{fig:figprac2} (outside transition zones hardly any restart was needed). As dimensions grow a better way to help the algorithm would be to use more sophisticated simulated annealing implementations where careful designs of the so-called temperature schedules can play a role of restarts and/or help with the finite size local minima effects more efficiently. As these considerations delve deeper into practical algorithmic aspects and our main focus here is theoretical performance analysis, we leave further discussions along these lines for separate papers.
\begin{figure}[h]
\centering
\centerline{\includegraphics[width=1\linewidth]{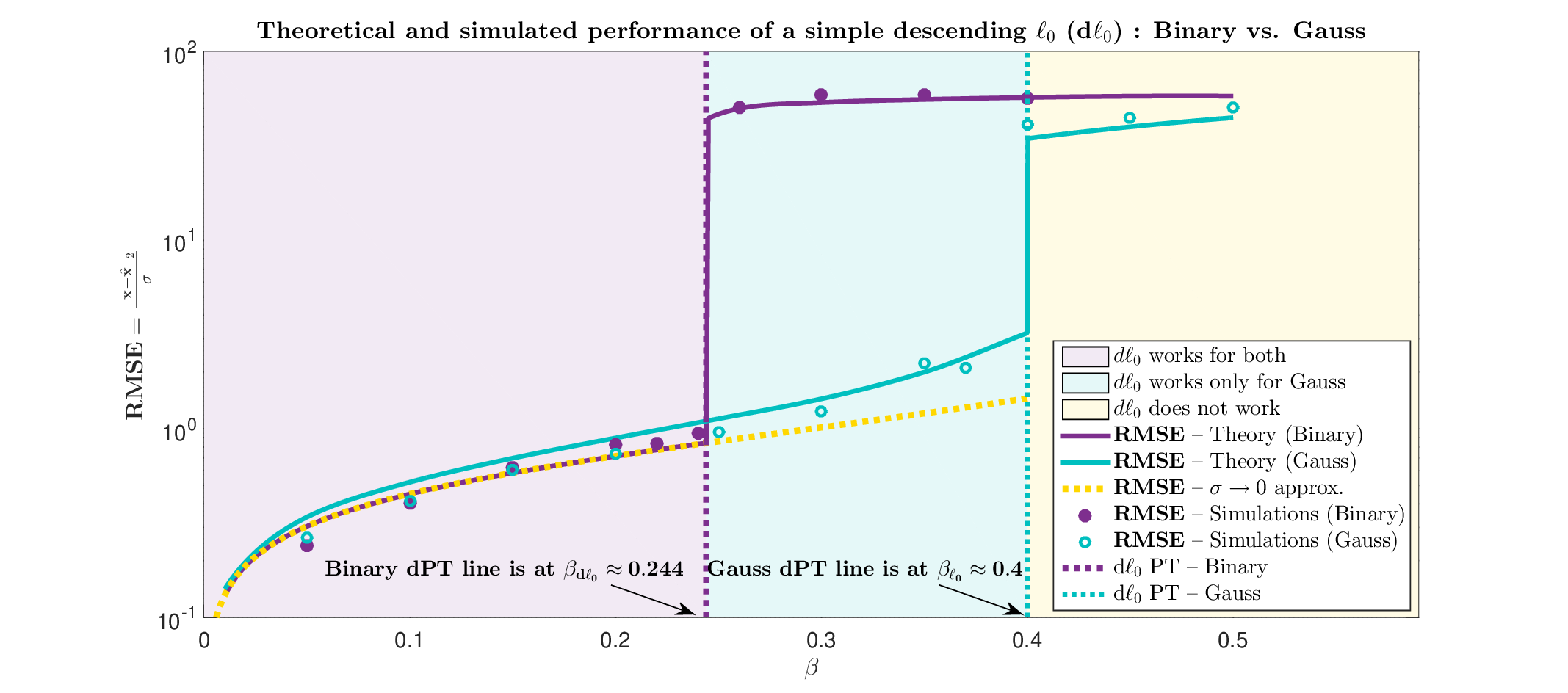}}
\caption{$\textbf{RMSE}$ as a function of sparsity ratio $\beta$; $\alpha=0.59$; $\sigma=\sqrt{0.00025}$, and $r_1=1$; $n=100$; Binary vs Gauss signals}
\label{fig:figprac2}
\end{figure}

\section{Conclusion}
\label{sec:conc}

We studied classical \emph{maximum-likelihood} (ML) recovery in noisy compressed sensing (CS) and sparse regression (SR). After incorporating sparsity (as a key CS and SR feature) in the recovery process, we particularly focused on $\ell_0$ variant of the ML decoding.  Utilizing \emph{Fully lifted random duality theory} (Fl RDT) from  \cite{Stojnicflrdt23} we developed a powerful generic program for studying $\ell_0$ ML. Assuming the so-called \emph{linear/proportional} regime in a statistical large dimensional context with a \emph{fixed deterministic a priori unknown} signal, two key analytical parameters were considered and precisely characterized: \textbf{\emph{(i)}} the \emph{objective value} of the resulting $\ell_0$ ML optimization; and \textbf{\emph{(ii)}} the residual \emph{root mean square error} $\textbf{RMSE}$.  When viewed as a function of the third key parameter, \emph{true-estimated signal overlap}, the objective is observed to exhibit critically important \emph{non-monotonic} behavior. A further coupling of such a behavior with the appearance of local optima is uncovered as well and directly connected to the existence of multiple $\textbf{RMSE}$ phase-transition (PT) phenomena. These phenomena are then precisely characterized with two PT curves: \textbf{\emph{(1)}} aPT curve, which separates the regions of systems dimensions where \emph{an} $\ell_0$ based algorithm succeeds or fails in achieving small (comparable to the noise) ML optimal $\textbf{RMSE}$; and \textbf{\emph{(2)}} dPT curve which does the same separation but for practically feasible \emph{descending} $\ell_0$ ($d\ell_0$) algorithms. In a more informal language, the first curve determines the ultimate theoretical limits of the $\ell_0$ ML, the second one determines the $d\ell_0$ achievable ones, and the region between them corresponds to the so-called \emph{computational gap} (C-gap) (where the $\ell_0$ ML decoding concept s useful but generic descending algorithms fail to solve it). Moreover, this effectively establishes ML complements to the Bayesian inference counterparts from, e.g., \cite{ReevesG12,ReevesG13,ReevPfis2016,BarbierKMMZ18,KMSSZ12,KMSSZ12a,WuV10a,WuV11,WuV12a,WuV12b}.

For Fl RDT to be practically relevant, performing a sizeable set of numerical evaluations is necessary. Uncovering a remarkable set of closed form  analytical relations among the key lifting parameters was of enormous help in handling numerical work. After completing all evaluations, we observed a remarkably fast convergence of the lifting mechanism with relative  corrections in the estimated quantities not exceeding $\sim 0.1\%$ already on the third level of lifting.

 Analytical results were supplemented with a sizeable set of the corresponding ones obtained through numerical experiments. We implemented a simple variant of $d\ell_0$ and demonstrated that its practical performance very accurately matches the theoretical predictions.   The agreement between the simulations and the theory is observed for fairly small dimensions of the order of 100.

A very generic character of the developed methodology ensures that many extensions and generalizations are possible. Among others, they include studying various forms of additional signal structuring as well as applications to other ML contexts. As the associated technical details are problem specific, we leave their further discussions for separate papers.

\begin{singlespace}
\bibliographystyle{plain}
\bibliography{nflgscompyxRefs}
\end{singlespace}

\end{document}